\def\eqref#1{equation~\ref{#1}}
\def\1{\bm{1}}
\DeclareMathAlphabet{\mathsfit}{\encodingdefault}{\sfdefault}{m}{sl}
\SetMathAlphabet{\mathsfit}{bold}{\encodingdefault}{\sfdefault}{bx}{n}
\setlist[itemize]{leftmargin=*}
\newtheorem{lemma}{Lemma}[section]
\newtheorem{proposition}{Proposition}[section]
\newtheorem{definition}{Definition}[section]
\newtheorem{assumption}{Assumption}[section]
\title{Dual-Robust Cross-Domain Offline Reinforcement Learning Against Dynamics Shifts}
\author{
    Zhongjian Qiao$^{1}$,
    Rui Yang$^{2}$,
    Jiafei Lyu$^{6}$,
    Xiu Li$^{3}$,
    Zhongxiang Dai$^{5}$,
    Zhuoran Yang$^{4}$, \\
    ~\textbf{Siyang Gao$^{1}$,}
    \textbf{Shuang Qiu$^{1}$}\thanks{Corresponding Author} \\
    $^{1}$CityUHK, $^{2}$UIUC, $^{3}$Tsinghua University, $^{4}$Yale University, $^{5}$CUHK(SZ), $^{6}$Tencent \\
    ~\texttt{zhongqiao2-c@my.cityu.edu.hk}, \texttt{shuanqiu@cityu.edu.hk}
}
\begin{document}

\maketitle

\begin{abstract}
Single-domain offline reinforcement learning (RL) often suffers from limited data coverage, while cross-domain offline RL handles this issue by leveraging additional data from other domains with dynamics shifts. However, existing studies primarily focus on train-time robustness (handling dynamics shifts from training data), neglecting the test-time robustness against dynamics perturbations when deployed in practical scenarios. In this paper, we investigate dual (both train-time and test-time) robustness against dynamics shifts in cross-domain offline RL. We first empirically show that the policy trained with cross-domain offline RL exhibits fragility under dynamics perturbations during evaluation, particularly when target domain data is limited. To address this, we introduce a novel robust cross-domain Bellman (RCB) operator, which enhances test-time robustness against dynamics perturbations while staying conservative to the out-of-distribution dynamics transitions, thus guaranteeing the train-time robustness. 
To further counteract potential value overestimation or underestimation caused by the RCB operator, we introduce two techniques, the dynamic value penalty and the Huber loss, into our framework, resulting in the practical \textbf{D}ual-\textbf{RO}bust \textbf{C}ross-domain \textbf{O}ffline RL (DROCO) algorithm.
Extensive empirical results across various dynamics shift scenarios show that DROCO outperforms strong baselines and exhibits enhanced robustness to dynamics perturbations. Code is available at \url{https://github.com/zq2r/DROCO.git}.

\end{abstract}

\section{Introduction}
Reinforcement learning (RL)~\citep{sutton1999reinforcement} has been a vital tool in various fields, such as embodied manipulation~\citep{zakka2023robopianist,jiao2025tcpo} and natural language processing~\citep{ouyang2022training,rafailov2023direct}. The success of typical RL often relies on numerous online interactions with the environment, which can be costly or even risky in the real world. Offline RL~\citep{levine2020offline}, instead, trains the policy with only a pre-logged offline dataset, eliminating the need for interactions with the environment. However, offline RL often struggles with a limited offline dataset. To address this, recent studies~\citep{wen2024contrastive, lyu2025cross, liu2022dara} have explored Cross-Domain Offline RL. In this setting, data from the target domain is limited, but we have access to datasets from a relevant but distinct domain (the source domain), which may contain sufficient offline data. The goal of cross-domain offline RL is to utilize the datasets from both the source domain and the target domain to learn an effective policy for the target environment. 

Although cross-domain offline RL is promising, simply merging the source domain dataset and target domain dataset for policy training induces policy divergence and suboptimal  performance~\citep{wen2024contrastive}. The issue stems from the dynamics mismatch: the transition dynamics of the source domain may differ from that of the target domain. Recent advances tackle this issue by learning domain classifiers to estimate the dynamics gap~\citep{liu2022dara}, or by filtering source domain data based on mutual information~\citep{wen2024contrastive} or optimal transport~\citep{lyu2025cross}. These works focus on enhancing the \textit{train-time robustness}
of the policy against dynamics shifts, that is, handling the source-target dynamics mismatch. 
However, they overlook the occurrence of potential dynamics shifts during the deployment of the learned policy in real-world environments. 
For example, an RL policy for robotics manipulation is trained on data collected from a real robot (target domain data) and an imperfect simulator (source domain data). When the policy is deployed on the real robot, the robot's physical components may degrade over time, causing the transition dynamics to deviate from that observed in the target domain dataset. Consequently, the policy’s performance may deteriorate during deployment, highlighting the need for methods that ensure \textit{test-time robustness}, that is, addressing the dynamics mismatch between the target and deployment environment.

In this paper, we 
initiate the investigation of dual (both train-time and test-time) robustness to dynamics shifts in cross-domain offline RL. We first empirically show that with limited target domain data, the learned policy could be highly fragile to test-time dynamics shifts. To address this issue, we propose \textbf{D}ual-\textbf{RO}bust \textbf{C}ross-domain \textbf{O}ffline RL (DROCO), bringing a new perspective on robustness specifically tailored for cross-domain offline RL, going beyond single-domain robust RL~\citep{iyengar2005robust,kuang2022learning}.
The core component of DROCO is a novel robust cross-domain Bellman (RCB) operator, which we theoretically prove enhances test-time robustness against dynamics perturbations while remaining conservative to the out-of-distribution (OOD) dynamics transitions~\citep{liu2024beyond}, thus guaranteeing train-time robustness. However, value overestimation or underestimation may occur when using the RCB operator. To mitigate this, we introduce two techniques, the dynamic value penalty and the Huber loss~\citep{Huber1973robust}, to our framework, resulting in our practical DROCO algorithm. Our contributions are summarized as follows.
\begin{itemize}
    \item We empirically demonstrate the fragility of cross-domain offline RL to test-time dynamics shifts and 
    initiate the study of dual robustness in this setting, contributing new perspectives to the field.
    \item We introduce a novel RCB operator that is theoretically proven to achieve dual robustness against dynamics shifts. We further introduce dynamic value penalty and Huber loss to mitigate value overestimation or underestimation, yielding our practical algorithm, DROCO.
    \item Extensive experiments across diverse dynamics shift scenarios including kinematic and morphology shifts demonstrate that DROCO outperforms strong baselines and exhibits significant robustness against various test-time dynamics perturbations. 
\end{itemize}

\section{Preliminaries}

\textbf{RL.} We consider a Markov Decision Process (MDP)~\citep{puterman1990markov} which is defined by the six-tuple $\mathcal{M}=(\mathcal{S},\mathcal{A},P,r,\rho,\gamma)$ where $\mathcal{S}$ is the state space, $\mathcal{A}$ is the action space, $P: \mathcal{S}\times\mathcal{A}\rightarrow\Delta(\mathcal{S})$ is the transition dynamics, $\Delta(\cdot)$ is the probability simplex, $r(s,a):\mathcal{S}\times\mathcal{A}\rightarrow[-r_{\rm max},r_{\rm max}]$ is the reward function, $\rho$ is the initial state distribution, and $\gamma$ is the discount factor. The objective of RL is to learn a policy $\pi:\mathcal{S}\rightarrow\Delta(\mathcal{A})$ that maximizes the expected discounted cumulative return $\mathbb{E}_{\pi}\left[\sum_{t=0}^\infty\gamma^tr(s_t,a_t)\right]$. We define $Q^\pi(s,a):=\mathbb{E}_{\pi}\left[\sum_{t=0}^\infty\gamma^tr(s_t,a_t)|s_0=s,a_0=a\right]$ and $V^\pi(s):=\mathbb{E}_{a\sim\pi(\cdot|s)}\left[Q^\pi(s,a)\right]$.

\textbf{Cross-Domain RL.} In cross-domain RL, we have access to a \textit{source domain} MDP $\mathcal{M}_{\text{src}}=(\mathcal{S},\mathcal{A},P_{\text{src}},r,\rho,\gamma)$ and a \textit{target domain} MDP $\mathcal{M}_{\text{tar}}=(\mathcal{S},\mathcal{A},P_{\text{tar}},r,\rho,\gamma)$. The only difference between the two domains is the transition dynamics, as considered by previous works~\citep{wen2024contrastive,lyu2025cross, qiao2025cross}. In the offline setting, only a target domain dataset $\mathcal{D}_\text{tar}$ and a source domain dataset $\mathcal{D}_\text{src}$ are available. We aim to leverage the mixed dataset $\mathcal{D}_{\text{src}}\cup \mathcal{D}_{\text{tar}}$ to learn a well-performing agent in the target domain.

\textbf{Enhancing Robustness in RL.} Robust RL aims to optimize the worst-case policy performance to enhance the robustness against environmental perturbations. Different from standard RL, robust RL applies the 
following \textit{robust Bellman operator} for Bellman backup:
\begin{equation*}
    \mathcal{T}_\text{robust}Q(s,a)=r(s,a)+\gamma\inf_{\mathcal{M}\in\mathcal{M}_\epsilon}\mathbb{E}_{s^\prime\sim P_\mathcal{M}(\cdot|s,a)}\left[\max_{a^\prime\in\mathcal{A}}Q(s^\prime,a^\prime)\right],
\end{equation*}
where $\mathcal{M}_\epsilon$ is the dynamics uncertainty set under some distance metric. If we choose Wasserstein distance~\citep{villani2008optimal} as the distance metric, then $\mathcal{M}_\epsilon$ is the Wasserstein uncertainty set:
\begin{equation}
\label{eq:wasserstein}
    \mathcal{M}_\epsilon=\{\widehat{\mathcal{M}}:\mathcal{W}\left(P_\mathcal{M}(\cdot|s,a),P_{\widehat{\mathcal{M}}}(\cdot|s,a)\right)\leq\epsilon\},
\end{equation}
where $\mathcal{W}\left(P_\mathcal{M}(\cdot|s,a),P_{\widehat{\mathcal{M}}}(\cdot|s,a)\right)=\inf_{\gamma\in\Gamma(P_\mathcal{M},P_{\widehat{M}})}\mathbb{E}_{s^\prime_1,s^\prime_2\sim\gamma}\left[d(s^\prime_1,s^\prime_2)\right]$ is the Wasserstein distance between $P_\mathcal{M}(\cdot|s,a)$ and $P_{\widehat{\mathcal{M}}}(\cdot|s,a)$, $\Gamma(\cdot,\cdot)$ is the joint distribution, and $d(\cdot,\cdot)$ is an element-wise distance metric such as the Euclidean distance.




\section{Is Cross-Domain Offline RL Sensitive to Test-Time Dynamics Perturbations?}
\label{sec:motivation}

To motivate our approach, we conduct an empirical study on the sensitivity of cross-domain offline RL to test-time perturbations. Our key finding is that cross-domain offline RL could be \emph{highly sensitive} to test-time dynamics perturbations, especially when limited target domain data is given. Therefore, enhancing test-time robustness is crucial for cross-domain offline RL. 

\begin{wrapfigure}{r}{0.5\textwidth} 
\centering
\includegraphics[width=0.45\textwidth]{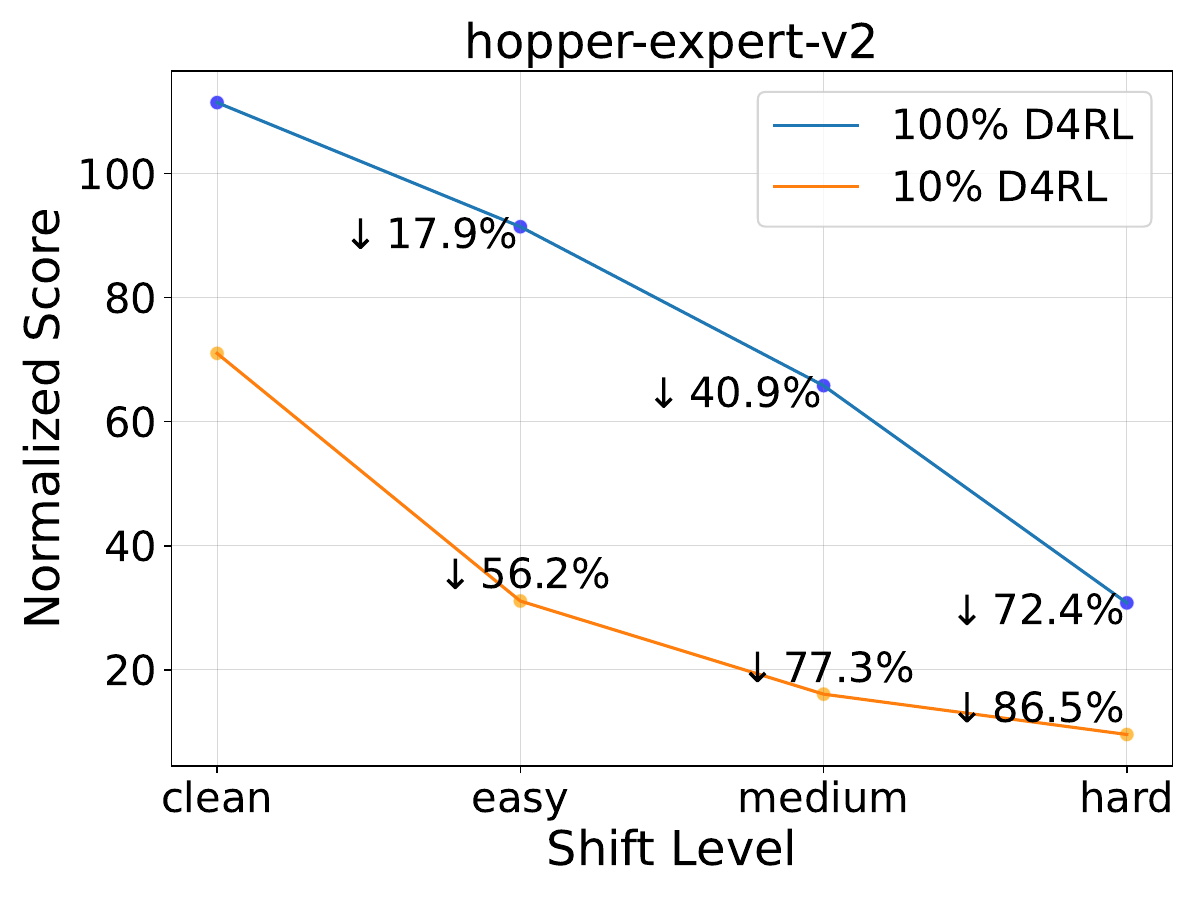}
\caption{Performance comparison with different dataset sizes under dynamics perturbations.}
\label{fig:motivation}
\end{wrapfigure}

We adopt the \texttt{hopper-v2} task from MuJoCo~\citep{todorov2012mujoco} as our target domain, and the full-size \texttt{hopper-expert-v2} dataset from D4RL~\citep{fu2020d4rl} as the target domain dataset. To simulate dynamics shifts in the source domain, we create a modified \texttt{hopper-v2} environment with \textit{kinematic shifts} (called \texttt{hopper-kinematic-v2}) by constraining the robot's joint rotation range. For the source domain dataset, we train an expert-level SAC~\citep{haarnoja2018soft} policy and collect 1M samples in \texttt{hopper-kinematic-v2} environment with it. To examine the test-time robustness of cross-domain offline RL, we first train a policy using IGDF~\citep{wen2024contrastive} on the full-size source and target domain datasets for 1M steps. We then evaluate the trained policy under four conditions: (1) the original target environment (\textit{clean}), and (2-4) kinematic perturbations with three levels (\textit{easy}, \textit{medium}, \textit{hard}) following~\citet{lyu2024odrlabenchmark}. As shown by the \textcolor{blue}{blue} curve in Figure~\ref{fig:motivation}, the policy demonstrates vulnerability to intense dynamics shifts, with performance degradation of $40.9\%$ (\textit{medium}) and $72.4\%$ (\textit{hard}) compared to the \textit{clean} environment.

To better mimic the challenges when target domain data is limited in cross-domain offline RL, we construct a reduced target domain dataset by sampling only $10\%$ of the \texttt{hopper-expert-v2} dataset. Our experiments reveal that the policy trained with this limited target data (while retaining full source domain data) is significantly more vulnerable to dynamics perturbations. As illustrated by the \textcolor{orange}{orange} curve in Figure~\ref{fig:motivation}, performance degradation intensifies across all shift levels compared to the full-data case, demonstrating substantially reduced test-time robustness.

We attribute this phenomenon to the discrepancy between the true dynamics and the observed dynamics in the target domain dataset, whose magnitude inversely correlates with the dataset size. This discrepancy causes the policy to overfit to the dataset dynamics, thereby reducing its robustness to dynamics perturbations. These results highlight the necessity of enhancing test-time robustness against dynamics shifts for cross-domain offline RL, which we address in the following section.

\section{Dual-Robust Cross-Domain Offline RL}
In this section, we present dual-robust cross-domain offline RL. Inspired by~\cite{liu2024micro}, we first define the robust cross-domain Bellman (RCB) operator and additionally provide a practical version of it. We then show that dual-robustness can be achieved by applying the RCB operator solely on the source domain data. Finally, we present our practical algorithm, DROCO.

\subsection{Robust Cross-Domain Bellman Operator}
\label{sec:rcb}


\begin{definition}[RCB operator] The robust cross-domain Bellman (RCB) operator $\mathcal{T}_{\mathrm{RCB}}$ is defined as
\label{def:rcb}
    \begin{equation}
    \label{eq:rcb}
        \mathcal{T}_{\mathrm{RCB}}Q = \begin{cases} r+\gamma\mathbb{E}_{s^\prime\sim P_{\mathcal{M}}}\bigg[\max\limits_{a^\prime\sim\hat{\mu}(\cdot|s^\prime)}Q(s^\prime, a^\prime)\bigg], & \quad \mathrm{if} \ \mathcal{M}=\mathcal{M}_{\mathrm{tar}} \\
    r+\gamma\inf_{\widehat{\mathcal{M}}\in\mathcal{M}_{\epsilon}}\mathbb{E}_{s^\prime\sim P_{\widehat{\mathcal{M}}}}\bigg[\max\limits_{a^\prime\sim\hat{\mu}(\cdot|s^\prime)}Q(s^\prime, a^\prime)\bigg], &\quad \mathrm{if}\ \mathcal{M}=\mathcal{M}_{\mathrm{src}}, \end{cases}
    \end{equation}
where $\hat{\mu}(\cdot|\cdot)$ is the behavior policy, and $\max_{a^\prime\sim\hat{\mu}(\cdot|s^\prime)}Q(s',a')$ denotes taking maximum over actions in the support of $\hat{\mu}(\cdot|s^\prime)$, i.e., $\max_{a'\in\mathcal A \text{ s.t. } \hat{\mu}(a'|s^\prime)>0}Q(s',a')$. 
\end{definition}

In Equation~\ref{eq:rcb}, we assume that the source and target domain datasets share the same behavior policy $\hat{\mu}$, following~\citet{wen2024contrastive}. Note that this assumption is \textbf{only for notational simplicity.} Even if it does not hold, we could replace $\hat{\mu}$ with the respective behavior policies without affecting our analysis. The basic idea behind the RCB operator is that if  $(s,a,s^\prime)$ comes from the target domain dataset, we use the standard in-sample Bellman operator~\citep{kostrikov2021offline,xu2023offline} for backup to enhance the performance; if the data are sampled from the source domain dataset, we apply the in-sample robust Bellman operator (which integrates in-sample learning into the robust Bellman operator) to achieve dual robustness to dynamics shifts, which we will discuss later.

We now characterize the dynamic programming property of the RCB operator and give the following proposition. All proofs are deferred to Appendix~\ref{append:proofs}.

\begin{proposition}[$\gamma$-contraction]
\label{prop:contraction}

The RCB operator is a $\gamma$-contraction operator in the complete state-action space $(\mathbb{R}^{|\mathcal{S}\times\mathcal{A}|},\|\cdot\|_\infty)$ where $\|\cdot\|_\infty$ denotes the $\ell_\infty$ norm, i.e., $\|\mathcal{T}_{\text{RCB}}Q_1-\mathcal{T}_{\text{RCB}}Q_2\|_\infty\leq  \gamma\|Q_1-Q_2\|_\infty$ for any Q-functions $Q_1$ and $Q_2$.
\end{proposition}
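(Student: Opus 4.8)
The plan is to verify the $\gamma$-contraction property by a direct case analysis that mirrors the two-branch definition of $\mathcal{T}_{\mathrm{RCB}}$ in Equation~\ref{eq:rcb}. Fix two $Q$-functions $Q_1,Q_2$ and an arbitrary state-action pair $(s,a)$. The quantity $\mathcal{T}_{\mathrm{RCB}}Q_i(s,a)$ depends on whether $(s,a)$ is attributed to $\mathcal{M}_{\mathrm{tar}}$ or $\mathcal{M}_{\mathrm{src}}$, but this attribution depends only on $(s,a)$, not on $Q_i$, so the same branch is used for both $Q_1$ and $Q_2$. I would therefore bound $|\mathcal{T}_{\mathrm{RCB}}Q_1(s,a)-\mathcal{T}_{\mathrm{RCB}}Q_2(s,a)|$ in each branch separately and then take the supremum over $(s,a)$.

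For the target-domain branch, the reward term $r(s,a)$ cancels, and I am left with $\gamma\big|\mathbb{E}_{s'\sim P_{\mathcal{M}_{\mathrm{tar}}}}[\max_{a'\sim\hat\mu(\cdot|s')}Q_1(s',a')] - \mathbb{E}_{s'\sim P_{\mathcal{M}_{\mathrm{tar}}}}[\max_{a'\sim\hat\mu(\cdot|s')}Q_2(s',a')]\big|$. Pulling the difference inside the expectation and using the elementary inequality $|\max_{a'\in B}Q_1(s',a') - \max_{a'\in B}Q_2(s',a')|\le \max_{a'\in B}|Q_1(s',a')-Q_2(s',a')|\le \|Q_1-Q_2\|_\infty$ (which holds for any fixed support set $B=\{a':\hat\mu(a'|s')>0\}$), I get the bound $\gamma\|Q_1-Q_2\|_\infty$. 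This step is routine.

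The source-domain branch is the only part requiring care, because of the infimum over the uncertainty set $\mathcal{M}_\epsilon$. Here I would use the standard fact that an infimum is $1$-Lipschitz with respect to the sup norm of its argument: for any index set, $|\inf_{\widehat{\mathcal{M}}} f_1(\widehat{\mathcal{M}}) - \inf_{\widehat{\mathcal{M}}} f_2(\widehat{\mathcal{M}})| \le \sup_{\widehat{\mathcal{M}}}|f_1(\widehat{\mathcal{M}}) - f_2(\widehat{\mathcal{M}})|$, where $f_i(\widehat{\mathcal{M}}) = \mathbb{E}_{s'\sim P_{\widehat{\mathcal{M}}}}[\max_{a'\sim\hat\mu(\cdot|s')}Q_i(s',a')]$. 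For each fixed $\widehat{\mathcal{M}}\in\mathcal{M}_\epsilon$, the same expectation-and-max argument as in the target branch bounds $|f_1(\widehat{\mathcal{M}})-f_2(\widehat{\mathcal{M}})|$ by $\|Q_1-Q_2\|_\infty$ uniformly in $\widehat{\mathcal{M}}$, so the supremum over $\widehat{\mathcal{M}}$ is also at most $\|Q_1-Q_2\|_\infty$, giving $\gamma\|Q_1-Q_2\|_\infty$ again. I expect the main (minor) obstacle to be stating the infimum-Lipschitz lemma cleanly and making sure the in-support maximum is well-defined and that the cancellation of $r$ and the interchange of $|\cdot|$ with $\mathbb{E}$ and $\max$ are all justified; none of this is deep, but it is where the write-up must be precise.

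Finally, since in either case $|\mathcal{T}_{\mathrm{RCB}}Q_1(s,a)-\mathcal{T}_{\mathrm{RCB}}Q_2(s,a)| \le \gamma\|Q_1-Q_2\|_\infty$ with a constant independent of $(s,a)$, taking the supremum over all $(s,a)$ yields $\|\mathcal{T}_{\mathrm{RCB}}Q_1-\mathcal{T}_{\mathrm{RCB}}Q_2\|_\infty \le \gamma\|Q_1-Q_2\|_\infty$, which is exactly the claimed $\gamma$-contraction. Completeness of $(\mathbb{R}^{|\mathcal{S}\times\mathcal{A}|},\|\cdot\|_\infty)$ is standard, so by the Banach fixed-point theorem $\mathcal{T}_{\mathrm{RCB}}$ has a unique fixed point, which I would mention as an immediate corollary if it is used later.
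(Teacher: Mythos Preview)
Your proposal is correct and matches the paper's own proof essentially step for step: both split into the target and source branches, cancel the reward, bound the target branch via the max-Lipschitz inequality $|\max f_1-\max f_2|\le\max|f_1-f_2|$, and bound the source branch via the analogous infimum-Lipschitz inequality before reducing to the same max argument. The paper also remarks on the unique fixed point, exactly as you anticipate.
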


Proposition~\ref{prop:contraction} presents that the RCB operator is a $\gamma$-contraction in the tabular MDP setting. However, directly applying the RCB operator for backup is unrealistic, since we are not available to the uncertainty set $\mathcal{M}_\epsilon$, given that the source environment is a black box. To handle this issue, we introduce the following dual reformulation of Equation~\ref{eq:rcb} under the Wasserstein distance measure.

\begin{proposition}[Dual Reformulation]
\label{prop:equivalent}
Letting $\mathcal{M}_\epsilon$ be the Wasserstein uncertainty set defined by Equation~\ref{eq:wasserstein}, then the term $\inf_{\widehat{\mathcal{M}}\in\mathcal{M}_{\epsilon}}\mathbb{E}_{s^\prime\sim P_{\widehat{\mathcal{M}}}}\big[\max_{a^\prime\sim\hat{\mu}(\cdot|s^\prime)}Q(s^\prime, a^\prime)\big]$ in Equation~\ref{eq:rcb} is equivalent to
\begin{equation*}
    \mathbb{E}_{s^\prime\sim P_\mathcal{M}}\bigg[\inf_{\bar{s}}\max\limits_{a^\prime\sim\hat{\mu}(\cdot|\bar{s})}Q(\bar{s},a^\prime)\bigg],\quad s.t. \quad d(s^\prime,\bar{s})\leq\epsilon.
\end{equation*}
\end{proposition}

Proposition~\ref{prop:equivalent} provides a solution for transforming the intractable dynamics disturbance into the tractable state perturbations. Based on Proposition~\ref{prop:equivalent}, we propose the practical RCB operator.

\begin{definition}[Practical RCB operator]
\label{def:prcb} The practical RCB operator $\widehat{\mathcal{T}}_{\mathrm{RCB}}$ is defined as
    \begin{equation*}
        \widehat{\mathcal{T}}_{\mathrm{RCB}}Q  = \begin{cases} r+\gamma\mathbb{E}_{s^\prime\sim P_{\mathcal{M}}}\bigg[\max\limits_{a^\prime\sim\hat{\mu}(\cdot|s^\prime)}Q(s^\prime, a^\prime)\bigg], \qquad \qquad \qquad \, \, \mathrm{if} \ \mathcal{M}=\mathcal{M}_{\mathrm{tar}} \\[1em]
    r+\gamma\mathbb{E}_{s^\prime\sim P_{\mathcal{M}}}\bigg[\inf_{\bar{s}\in U_\epsilon(s^\prime)}\max\limits_{a^\prime\sim\hat{\mu}(\cdot|\bar{s})}Q(\bar{s}, a^\prime)\bigg], \qquad \, \, \mathrm{if}\ \mathcal{M}=\mathcal{M}_{\mathrm{src}}  \end{cases}
    \end{equation*}
where $U_\epsilon(s^\prime)=\left\{\bar{s}\in\mathcal{S}\, |\,d(s^\prime,\bar{s})\leq\epsilon\right\}$ is the state uncertainty set. 
\end{definition}
 The key distinction between $\widehat{\mathcal{T}}_\text{RCB}$ and $\mathcal{T}_\text{RCB}$ lies in their Bellman target computation for source domain data. While $\mathcal{T}_\text{RCB}$ requires the dynamics uncertainty set $\mathcal{M}_\epsilon$ that is typically unavailable, $\widehat{\mathcal{T}}_\text{RCB}$ solely relies on the state uncertainty set $U_\epsilon(s^\prime)$. Since $s^\prime$ is observable in the source domain dataset, $U_\epsilon(s^\prime)$ can be constructed through noise perturbations of $s^\prime$. This makes $\widehat{\mathcal{T}}_\text{RCB}$ more feasible for Bellman backup than $\mathcal{T}_\text{RCB}$, and the subsequent analyses are based on $\widehat{\mathcal{T}}_\text{RCB}$. Moreover, the following proposition shows that $\widehat{\mathcal{T}}_\text{RCB}$ still possesses the same favorable property as $\mathcal{T}_\text{RCB}$, i.e., $\widehat{\mathcal{T}}_\text{RCB}$ remains a $\gamma$-contraction.
\begin{proposition}[$\gamma$-contraction]
\label{prop:practical_contraction}
The practical RCB operator is a $\gamma$-contraction operator in the space $(\mathbb{R}^{|\mathcal{S}\times\mathcal{A}|},\|\cdot\|_\infty)$, i.e., $\|\widehat{\mathcal{T}}_\text{RCB}Q_1-\widehat{\mathcal{T}}_\text{RCB}Q_2\|_\infty\leq  \gamma\|Q_1-Q_2\|_\infty$ for any $Q_1$ and $Q_2$.
\end{proposition}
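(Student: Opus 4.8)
The plan is to bound the per-state-action difference $|\widehat{\mathcal{T}}_{\mathrm{RCB}}Q_1(s,a) - \widehat{\mathcal{T}}_{\mathrm{RCB}}Q_2(s,a)|$ uniformly over $(s,a)$, splitting into the two cases of Definition~\ref{def:prcb}. In both cases the reward term $r(s,a)$ cancels, so the difference is $\gamma$ times the difference of the two ``backup value'' terms. The whole argument reduces to the standard fact that the three operations appearing in the backup --- expectation over $s'\sim P_{\mathcal M}$, infimum over $\bar s\in U_\epsilon(s')$, and maximum over $a'$ in the support of $\hat\mu(\cdot\mid\bar s)$ --- are all non-expansive in the $\ell_\infty$ norm.

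For the target-domain case ($\mathcal{M}=\mathcal{M}_{\mathrm{tar}}$), write $f_i(s') := \max_{a'\sim\hat\mu(\cdot\mid s')}Q_i(s',a')$. Using the elementary inequality $|\max_x g_1(x) - \max_x g_2(x)| \le \max_x |g_1(x)-g_2(x)|$ over the (nonempty) support set, we get $|f_1(s')-f_2(s')| \le \|Q_1-Q_2\|_\infty$ for every $s'$; then taking $\mathbb{E}_{s'\sim P_{\mathcal M}}$ preserves this bound since it is a convex combination. For the source-domain case, define instead $h_i(s') := \inf_{\bar s\in U_\epsilon(s')}\max_{a'\sim\hat\mu(\cdot\mid\bar s)}Q_i(\bar s,a')$. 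First, the inner max satisfies $|\max_{a'\sim\hat\mu(\cdot\mid\bar s)}Q_1(\bar s,a') - \max_{a'\sim\hat\mu(\cdot\mid\bar s)}Q_2(\bar s,a')| \le \|Q_1-Q_2\|_\infty$ for each $\bar s$; second, applying $|\inf_x g_1(x)-\inf_x g_2(x)| \le \sup_x|g_1(x)-g_2(x)|$ over $\bar s\in U_\epsilon(s')$ gives $|h_1(s')-h_2(s')| \le \|Q_1-Q_2\|_\infty$; finally, taking the expectation over $s'\sim P_{\mathcal M}$ again preserves the bound. Combining, in either case $|\widehat{\mathcal{T}}_{\mathrm{RCB}}Q_1(s,a)-\widehat{\mathcal{T}}_{\mathrm{RCB}}Q_2(s,a)| \le \gamma\|Q_1-Q_2\|_\infty$, and taking the supremum over $(s,a)$ yields the claim.

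There is no serious obstacle here; the only point requiring a little care is the legitimacy of the $\inf$- and $\max$-non-expansiveness lemmas when the sets involved could be unbounded or the $\inf$/$\max$ not attained. Since $Q_1,Q_2$ lie in the finite-dimensional space $\mathbb{R}^{|\mathcal S\times\mathcal A|}$ they are bounded, so all the quantities are finite and the inequalities $|\inf g_1 - \inf g_2|\le \sup|g_1-g_2|$ and $|\sup g_1 - \sup g_2|\le \sup|g_1-g_2|$ hold over arbitrary nonempty index sets (one shows $\inf g_1 \le \inf g_2 + \sup|g_1-g_2|$ by picking a near-minimizer of $g_2$, and symmetrically). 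I would state these two one-line inequalities explicitly as a preliminary step, then apply them mechanically to the two cases. The structure is essentially identical to the contraction proof for $\mathcal{T}_{\mathrm{RCB}}$ in Proposition~\ref{prop:contraction}, with the $\inf$-over-$\mathcal{M}_\epsilon$ replaced by $\inf$-over-$U_\epsilon(s')$ inside the expectation, so I would note that the target-domain branch is verbatim the same and only the source-domain branch needs the extra $\inf$-non-expansiveness line.
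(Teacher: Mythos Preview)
Your proposal is correct and follows essentially the same approach as the paper's proof: both observe that the target-domain branch is identical to Proposition~\ref{prop:contraction} and, for the source-domain branch, successively apply the non-expansiveness of $\max$ (Equation~\ref{eq:maxproperty}) and of $\inf$ (Equation~\ref{eq:minproperty}) together with Jensen's inequality for the outer expectation. Your treatment is slightly more explicit about why the $\inf$/$\max$ non-expansiveness lemmas are legitimate here, but the structure and key steps coincide with the paper's argument.
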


\subsection{Dual Robustness against Dynamics Shifts}

In this section, we conduct a comprehensive analysis of both train-time and test-time robustness against dynamics shifts when employing the practical RCB operator. We first make the Lipschitz continuity assumption about the learned $Q$ function, which is widely used in prior theoretical studies of RL~\citep{mao2024doubly,ran2023policy,xiong2022deterministic,liu2024adaptive}. 

\begin{assumption}[Lipschitz $Q$ function]
\label{assump:lipschitz}
The learned $Q$ function is $K_Q$-Lipschitz w.r.t. state $s$, i.e., $\forall a\in\mathcal{A}$, $\forall s_1,s_2\in\mathcal{S}$, $\left|Q(s_1,a)-Q(s_2,a)\right|\leq K_Q\left\|s_1-s_2\right\|$.

\end{assumption}

We then analyze the train-time robustness against dynamics shifts from source domain data. Standard Bellman updates on source domain data might cause $Q$ overestimation due to OOD dynamics issues~\citep{liu2024beyond,niu2022trust}, necessitating a conservative $Q$ estimation for robust performance. Proposition~\ref{prop:train_robust} shows that the learned $\hat{Q}_\text{RCB}$ maintains bounded by applying $\widehat{\mathcal{T}}_\text{RCB}$.

\begin{proposition}[Train-time robustness against dynamics shifts]
\label{prop:train_robust}
Assume that $\epsilon$ is chosen such that $\operatorname{supp}\!\big(P_{\mathrm{tar}}(\cdot\mid s,a)\big)\subseteq U_\epsilon(s'_{\mathrm{src}})$ for all $(s,a,s'_{\mathrm{src}})\sim\mathcal{D}_{\mathrm{src}}$. Then under Assumption~\ref{assump:lipschitz}, the learned $Q$ function by applying $\widehat{\mathcal{T}}_\text{RCB}$ satisfies:
\begin{equation*}
    Q^\star_{\hat{\mu}}(s,a)-\frac{2\gamma\epsilon K_Q}{1-\gamma}\leq \hat{Q}_{\text{RCB}}(s,a)\leq Q^\star_{\hat{\mu}}(s,a), \quad \forall (s,a)\in \mathcal{D}_{\mathrm{src}},
\end{equation*}
where $Q^\star_{\hat{\mu}}$ is the $Q$ function of optimal $\hat{\mu}$-supported policy\footnote{$\pi(\cdot|s)$ is $\hat{\mu}$-supported if $\pi(a|s)=0$ for any action $a$ that $\hat{\mu}(a|s)=0$.} in the target domain.
    
\end{proposition}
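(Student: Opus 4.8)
The plan is to establish the two inequalities in Proposition~\ref{prop:train_robust} separately, using the $\gamma$-contraction from Proposition~\ref{prop:practical_contraction} to pin down the fixed point $\hat{Q}_{\text{RCB}}$, and then comparing the practical RCB operator on source data against the in-sample Bellman operator whose fixed point is $Q^\star_{\hat\mu}$. Write $\mathcal{T}_{\hat\mu}$ for the in-sample (support-constrained) Bellman optimality operator that uses the \emph{target} dynamics $P_{\text{tar}}$ everywhere, so that its unique fixed point is $Q^\star_{\hat\mu}$; recall $\hat{Q}_{\text{RCB}}$ is the unique fixed point of $\widehat{\mathcal{T}}_{\text{RCB}}$. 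On target-domain transitions the two operators coincide, so all the work is in bounding, for a source transition $(s,a,s'_{\text{src}})$,
\[
\gamma\,\E_{s'\sim P_{\text{src}}(\cdot|s,a)}\!\Big[\inf_{\bar s\in U_\epsilon(s')}\max_{a'\sim\hat\mu(\cdot|\bar s)}Q(\bar s,a')\Big]
\quad\text{versus}\quad
\gamma\,\E_{s'\sim P_{\text{tar}}(\cdot|s,a)}\!\Big[\max_{a'\sim\hat\mu(\cdot|s')}Q(s',a')\Big].
\]

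\textbf{Upper bound.} For the inequality $\hat{Q}_{\text{RCB}}(s,a)\le Q^\star_{\hat\mu}(s,a)$, I would argue by monotonicity of the Bellman iteration. The coverage hypothesis $\mathrm{support}(P_{\text{tar}}(\cdot|s,a))\subseteq U_\epsilon(s'_{\text{src}})$ guarantees that for \emph{every} realized next state $s'$ under $P_{\text{src}}$, the set $U_\epsilon(s')$ — or more precisely the relevant infimum — dips at least as low as any target-reachable state's value, so taking the infimum over $U_\epsilon(\cdot)$ produces a value no larger than the target-dynamics expectation of the same $\max_{a'\sim\hat\mu}Q$. Concretely, $\inf_{\bar s\in U_\epsilon(s')}\max_{a'\sim\hat\mu(\cdot|\bar s)}Q(\bar s,a')\le \min_{s''\in\mathrm{support}(P_{\text{tar}}(\cdot|s,a))}\max_{a'\sim\hat\mu(\cdot|s'')}Q(s'',a')\le \E_{s''\sim P_{\text{tar}}}[\max_{a'\sim\hat\mu(\cdot|s'')}Q(s'',a')]$. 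Hence $\widehat{\mathcal{T}}_{\text{RCB}}Q \le \mathcal{T}_{\hat\mu}Q$ pointwise for any $Q$; iterating from $Q^\star_{\hat\mu}$ (using monotonicity of $\mathcal{T}_{\hat\mu}$ and that $Q^\star_{\hat\mu}$ is its fixed point) and passing to the limit gives $\hat{Q}_{\text{RCB}}\le Q^\star_{\hat\mu}$ on $\mathcal{D}_{\text{src}}$.

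\textbf{Lower bound.} For the reverse inequality I would control how far below $\mathcal{T}_{\hat\mu}Q$ the operator $\widehat{\mathcal{T}}_{\text{RCB}}Q$ can fall, using Assumption~\ref{assump:lipschitz}. For a fixed $\bar s\in U_\epsilon(s')$, the $K_Q$-Lipschitz property (plus the fact that the $\max$ over a common support shifts by at most the pointwise perturbation — a standard $|\max f - \max g|\le \sup|f-g|$ estimate) gives $\max_{a'\sim\hat\mu(\cdot|\bar s)}Q(\bar s,a') \ge \max_{a'\sim\hat\mu(\cdot|s')}Q(s',a') - K_Q\|\bar s - s'\| \ge \max_{a'\sim\hat\mu(\cdot|s')}Q(s',a') - K_Q\epsilon$, uniformly over $U_\epsilon(s')$, so the infimum over $U_\epsilon(s')$ loses at most $K_Q\epsilon$. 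Here one must be slightly careful that the support of $\hat\mu(\cdot|\bar s)$ may differ from that of $\hat\mu(\cdot|s')$; I would absorb this either into the stated assumption that both datasets share $\hat\mu$ (so the in-sample max is over the same dataset actions) or note it does not affect the Lipschitz-in-$s$ bound. This yields $\widehat{\mathcal{T}}_{\text{RCB}}Q \ge \mathcal{T}^{\text{src}}_{\hat\mu}Q - \gamma K_Q\epsilon \vone$, where $\mathcal{T}^{\text{src}}_{\hat\mu}$ uses source dynamics; combining with a $P_{\text{tar}}$-vs-$P_{\text{src}}$ comparison (again via the coverage/Lipschitz bound, costing another $\gamma K_Q\epsilon$) gives $\widehat{\mathcal{T}}_{\text{RCB}}Q \ge \mathcal{T}_{\hat\mu}Q - 2\gamma K_Q\epsilon\vone$ on source data, hence $\widehat{\mathcal{T}}_{\text{RCB}}Q \ge \mathcal{T}_{\hat\mu}Q - 2\gamma K_Q\epsilon\vone$ everywhere. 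A standard perturbed-fixed-point argument — iterate, use the $\gamma$-contraction and monotonicity, and sum the geometric series $\sum_{k\ge0}\gamma^k \cdot 2\gamma K_Q\epsilon$ — then yields $\hat{Q}_{\text{RCB}} \ge Q^\star_{\hat\mu} - \frac{2\gamma\epsilon K_Q}{1-\gamma}\vone$.

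\textbf{Main obstacle.} The delicate point is making the two $\epsilon$-comparisons between $P_{\text{src}}$ and $P_{\text{tar}}$ fully rigorous: the infimum over $U_\epsilon(s')$ is taken at each source-reachable $s'$ \emph{individually}, while the target expectation integrates over a possibly quite different distribution. Getting the clean constant $2\gamma\epsilon K_Q$ (rather than something depending on a Wasserstein radius between the two dynamics) hinges precisely on the coverage hypothesis $\mathrm{support}(P_{\text{tar}}(\cdot|s,a))\subseteq U_\epsilon(s'_{\text{src}})$, which lets me replace the $P_{\text{tar}}$-expectation by a worst case inside a single $\epsilon$-ball and then Lipschitz-transport back to $s'_{\text{src}}$; I would also double-check that the "$\infimum$ then expectation over $P_{\text{src}}$'' ordering in $\widehat{\mathcal{T}}_{\text{RCB}}$ is handled consistently in both bounds, and that the in-sample $\max_{a'\sim\hat\mu}$ operators compose monotonically so that the perturbed-fixed-point lemma applies verbatim.
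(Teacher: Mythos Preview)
Your proposal is correct and follows essentially the same route as the paper: establish the one-step sandwich $\mathcal{T}_{\hat\mu}Q - 2\gamma\epsilon K_Q \le \widehat{\mathcal{T}}_{\text{RCB}}Q \le \mathcal{T}_{\hat\mu}Q$ on source data (upper bound from the coverage hypothesis, lower bound from Lipschitz continuity plus the fact that both the infimizing $\bar s$ and any target-reachable $s''$ lie in $U_\epsilon(s'_{\text{src}})$, hence are within $2\epsilon$ of each other), then iterate and sum the geometric series. The paper carries out the iteration by explicit induction on $k$ rather than invoking an abstract perturbed-fixed-point lemma, and obtains the $2\epsilon$ factor in one shot via the triangle inequality $\|\bar s^\star - s^\star\|\le 2\epsilon$ rather than your two-step $\epsilon+\epsilon$ split, but these are cosmetic differences; your identification of the $\hat\mu$-support subtlety is also the one loose end the paper glosses over.
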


Proposition~\ref{prop:train_robust} suggests that, if a proper $\epsilon$ is chosen such that the uncertainty set $U_\epsilon(s^\prime_\text{src})$ covers the support of $P_\text{tar}(\cdot|s,a)$, then the erroneous value overestimation will not occur, and the OOD dynamics issue is mitigated. Thus, the train-time robustness against dynamics shifts is guaranteed.
We then analyze the test-time robustness against the environmental dynamics perturbations. Let $\pi_\text{RCB}$ and $\widehat{V}_\text{RCB}$ be the policy and value function learned by applying the practical RCB operator, respectively. When the target environment undergoes dynamics perturbations $\left(P_\text{tar}(\cdot)\rightarrow P_\text{per}(\cdot)\right)$, the value function of $\pi_\text{RCB}$ within perturbed dynamics $P_\text{per}$, denoted as $V^{\pi_\text{RCB}}_{\text{per}}$, is bounded by Proposition~\ref{prop:test_robust}.

\begin{proposition}[Test-time robustness against dynamics shifts]
\label{prop:test_robust}
Assume that $\epsilon$ is chosen such that $\operatorname{supp}\!\big(P_{\mathrm{tar}}(\cdot\mid s,a)\big)\subseteq U_\epsilon(s'_{\mathrm{src}})$ for all $(s,a,s'_{\mathrm{src}})\sim\mathcal{D}_{\mathrm{src}}$. As long as $\mathcal{W}(P_\mathrm{per}(\cdot|s,a), P_\mathrm{tar}(\cdot|s,a))\leq c$, then for $\forall s_0\in\mathcal{D}_\mathrm{src}$, we have
\begin{equation}
    \label{eq:test_robust}V^{\pi_\mathrm{RCB}}_\mathrm{per}(s_0)\geq\widehat{V}_\mathrm{RCB}(s_0),
\end{equation}
where $c=\max\left\{c\,| U_c(s^\prime_\mathrm{tar})\subseteq U_\epsilon(s^\prime_\mathrm{src}),s^\prime_\mathrm{tar}\sim P_\mathrm{tar}(\cdot|s,a),(s,a,s^\prime_\mathrm{src})\sim\mathcal{D}_\mathrm{src}\right\}$.
\end{proposition}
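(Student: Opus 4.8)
The plan is to show that applying the practical RCB operator produces a value function that lower-bounds the performance in \emph{any} perturbed target environment whose dynamics stay within Wasserstein distance $c$ of $P_\mathrm{tar}$. The key is to recognize that, by the choice of $c$, for every source-domain triple $(s,a,s'_\mathrm{src})$ the state uncertainty ball $U_\epsilon(s'_\mathrm{src})$ contains not only $\mathrm{support}(P_\mathrm{tar}(\cdot|s,a))$ but also a $c$-enlargement of it, hence it contains $\mathrm{support}(P_\mathrm{per}(\cdot|s,a))$ for every admissible perturbation $P_\mathrm{per}$. Therefore the infimum over $\bar s\in U_\epsilon(s'_\mathrm{src})$ in the source-domain branch of $\widehat{\mathcal T}_\mathrm{RCB}$ is, pointwise, no larger than $\max_{a'\sim\hat\mu(\cdot|\bar s)}Q(\bar s,a')$ evaluated at any $\bar s$ reachable under $P_\mathrm{per}$; in particular $\widehat{\mathcal T}_\mathrm{RCB}Q \le r + \gamma\,\mathbb E_{s'\sim P_\mathrm{per}(\cdot|s,a)}[\max_{a'\sim\hat\mu(\cdot|s')}Q(s',a')]$ on $\mathcal D_\mathrm{src}$.

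First I would set up the comparison operator. Let $\mathcal T_\mathrm{per}$ denote the in-sample Bellman evaluation operator in the perturbed environment restricted to $\hat\mu$-supported actions, i.e. $\mathcal T_\mathrm{per}Q(s,a) = r(s,a) + \gamma\,\mathbb E_{s'\sim P_\mathrm{per}(\cdot|s,a)}[\max_{a'\sim\hat\mu(\cdot|s')}Q(s',a')]$, whose fixed point is (a quantity dominated by) $V^{\pi_\mathrm{RCB}}_\mathrm{per}$ once we read off the greedy $\hat\mu$-supported policy — here I must be slightly careful and argue that the value of the specific policy $\pi_\mathrm{RCB}$ under $P_\mathrm{per}$ dominates the in-sample-max backup, or alternatively define $\pi_\mathrm{RCB}$ precisely as the $\hat\mu$-supported greedy policy so this is immediate. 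Second, I would establish the pointwise domination $\widehat{\mathcal T}_\mathrm{RCB}Q(s,a) \le \mathcal T_\mathrm{per}Q(s,a)$ for all $(s,a)\in\mathcal D_\mathrm{src}$ and all $Q$, using the uncertainty-set containment from the previous paragraph (this is where the definition of $c$ and the triangle-inequality-style enlargement of balls is used). On the target-domain states the two operators agree, so the domination is global on the relevant state-action set.

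Third, I would invoke monotonicity of both operators (both are max-based Bellman-type backups, hence order-preserving) together with the $\gamma$-contraction of $\widehat{\mathcal T}_\mathrm{RCB}$ from Proposition~\ref{prop:practical_contraction}: starting the fixed-point iteration from the same initialization and iterating, $\widehat{\mathcal T}_\mathrm{RCB}$-iterates stay below $\mathcal T_\mathrm{per}$-iterates at every step by induction (base case trivial; inductive step combines $\widehat{\mathcal T}_\mathrm{RCB}Q_k \le \mathcal T_\mathrm{per}Q_k \le \mathcal T_\mathrm{per}Q'_k$ where $Q_k \le Q'_k$). Passing to the limit gives $\widehat Q_\mathrm{RCB}(s,a) \le Q^{\pi_\mathrm{RCB}}_\mathrm{per}(s,a)$ on $\mathcal D_\mathrm{src}$, and taking $\hat\mu$-weighted (equivalently $\pi_\mathrm{RCB}$-weighted) expectation over actions and specializing to $s_0\in\mathcal D_\mathrm{src}$ yields $\widehat V_\mathrm{RCB}(s_0) \le V^{\pi_\mathrm{RCB}}_\mathrm{per}(s_0)$, which is \eqref{eq:test_robust}.

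The main obstacle I anticipate is the bookkeeping around which state-action pairs are actually visited: the backup comparison $\widehat{\mathcal T}_\mathrm{RCB} \le \mathcal T_\mathrm{per}$ only holds on $\mathcal D_\mathrm{src}$, so I must ensure the fixed-point/monotonicity argument never needs the inequality on states outside the dataset support — this should follow because the in-sample max over $\hat\mu$-supported actions keeps trajectories within the data-supported region, but making that airtight (and matching it to exactly how $\pi_\mathrm{RCB}$ and $\widehat V_\mathrm{RCB}$ were defined) is the delicate point. A secondary subtlety is verifying that the enlargement constant works uniformly: the Wasserstein bound $\mathcal W(P_\mathrm{per},P_\mathrm{tar})\le c$ controls transport cost on average, not support diameter directly, so I would instead lean on the \emph{definition} of $c$ given in the statement (which is phrased in terms of ball containment $U_c(s'_\mathrm{tar})\subseteq U_\epsilon(s'_\mathrm{src})$) and argue that the admissible $P_\mathrm{per}$ considered are exactly those whose support, per state-action pair, lands in $U_c(s'_\mathrm{tar})$ — i.e., the theorem's quantifier over perturbations should be read through that containment, which I would make explicit at the start of the proof.
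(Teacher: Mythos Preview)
Your plan is essentially the paper's proof: both arguments reduce to showing that the source-branch RCB backup lower-bounds a Bellman backup under $P_\mathrm{per}$ (via the ball containment $U_c(s'_\mathrm{tar})\subseteq U_\epsilon(s'_\mathrm{src})$), then push this through by monotonicity and contraction. The only cosmetic difference is that the paper initializes at the fixed point $\widehat V_\mathrm{RCB}$ and shows the \emph{policy-evaluation} iterates $V^{k+1}=\mathcal T^{\pi_\mathrm{RCB}}_{P_\mathrm{per}}V^k$ are monotone increasing, whereas you run two control-type iterations in parallel; both routes arrive at the same place.

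The one point to fix is your ``secondary subtlety.'' You propose to reinterpret the hypothesis $\mathcal W(P_\mathrm{per},P_\mathrm{tar})\le c$ as a support condition so that every $s'_\mathrm{per}$ lands in $U_c(s'_\mathrm{tar})$. That reinterpretation is not needed and would weaken the statement. The paper instead uses the Wasserstein dual reformulation (Proposition~\ref{prop:equivalent}) in the reverse direction: for any $P$ with $\mathcal W(P,P_\mathrm{tar})\le c$ and any $f$,
\[
\mathbb E_{s'\sim P}[f(s')]\;\ge\;\inf_{\widehat P:\,\mathcal W(\widehat P,P_\mathrm{tar})\le c}\mathbb E_{s'\sim\widehat P}[f(s')]\;=\;\mathbb E_{s'\sim P_\mathrm{tar}}\Big[\inf_{\bar s\in U_c(s')}f(\bar s)\Big].
\]
This is precisely the step that converts the average Wasserstein constraint into the ball-infimum form you need; then the definition of $c$ gives $\inf_{\bar s\in U_c(s'_\mathrm{tar})}f(\bar s)\ge \inf_{\bar s\in U_\epsilon(s'_\mathrm{src})}f(\bar s)$ and the chain closes. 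Replace your reinterpretation with this dual step and the argument goes through exactly as you outlined.
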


Proposition~\ref{prop:test_robust} gives that, for any disturbance with intensity below the threshold $c$ (measured in Wasserstein distance), the value of the learned policy $\pi_\text{RCB}$ in $P_\text{per}$ exceeds $\widehat{V}_\text{RCB}$. This implies that for any initial state $s_0\in\mathcal{D}_\mathrm{src}$, $\pi_\text{RCB}$ achieves better performance under perturbed dynamics $P_\text{per}$ than in the worst-case scenario, thereby improving test-time robustness against dynamics shifts. Furthermore, Proposition~\ref{prop:train_robust} and Proposition~\ref{prop:test_robust} reveal that, \textbf{(1)} dual robustness can be achieved by solely applying the RCB operator to the source domain data; \textbf{(2)} there is a trade-off between the two robustness notions, which is controlled by $\epsilon$. More discussions can be found in Appendix~\ref{appendix:discussion}.


\subsection{Practical Algorithm}

In Section~\ref{sec:rcb}, we formalize the practical RCB operator. Its application presents two key challenges: (1) determining the uncertainty set $U_\epsilon(s^\prime_\text{src})$; (2) computing the minimum $Q$ value within this set. Although one can fix $\epsilon$ and adopt random sampling within $U_\epsilon(s^\prime_\text{src})$, it lacks flexibility. In addition, if $P_\text{src}$ deviates far from $P_\text{tar}$, then $\epsilon$ would be too large, leading to overconservatism, compromising the performance. To address these limitations, we propose our practical algorithm, DROCO.



\textbf{Determining the uncertainty set via ensemble dynamics modeling.} Instead of fixing $\epsilon$ and randomly sampling from $U_\epsilon(s^\prime_\text{src})$, DROCO first trains an ensemble dynamics model~\citep{janner2019trust,yu2020mopo,liu2024micro} $\widehat{P}_\psi(\cdot)=\{\widehat{P}_{\psi_i}(\cdot)\}_{i=1}^N$ on $\mathcal{D}_\text{tar}$ via maximum likelihood estimation (MLE) to simulate $P_\text{tar}(\cdot)$:
\begin{equation}
    \label{eq:mainmle}
    \mathcal{L}_{\psi_i}=\mathbb{E}_{(s,a,s^\prime)\in\mathcal{D}_\text{tar}}\left[\log\widehat{P}_{\psi_i}(s^\prime|s,a)\right],\qquad i=1,2,...,N
\end{equation}
then we use the ensemble prediction set $\mathcal{X}=\big\{s^\prime_1,\cdots,s^\prime_N|s^\prime_i\sim\widehat{P}_{\psi_{i}}(\cdot|s,a),(s,a)\in\mathcal{D}_\text{src}\big\}$ to approximate sampling from the uncertainty set. This replacement is motivated by two key insights: (1) dual robustness only requires the uncertainty set around support of $P_\text{tar}(\cdot|s,a)$ rather than $s^\prime_{\text{src}}$, thus alleviating the unnecessary conservatism; (2) each ensemble member's prediction naturally serves as a sample from this uncertainty set. In this way, the practical RCB operator for source domain data becomes: 
\begin{equation}
\label{eq:dynamics_pracrcb}
\widehat{\mathcal{T}}_\text{RCB}Q=r+\gamma\inf_{\{s^\prime_i\}^N\sim\widehat{P}_{\psi_i}}\left[\max\limits_{a^\prime_i\sim\hat{\mu}(\cdot|s^\prime_i)}Q(s^\prime_i, a^\prime_i)\right], \qquad \text{if}\,\,\, \mathcal{M}=\mathcal{M}_\text{src}.
\end{equation}
However, the ensemble prediction set cannot cover the support of $P_\text{tar}(\cdot)$ as required in Proposition~\ref{prop:train_robust}, such that the overestimation of $Q$ value may still occur. Proposition~\ref{prop:limitedover} reveals that only limited overestimation would occur when applying Equation~\ref{eq:dynamics_pracrcb} as the Bellman target.

\begin{proposition}[Limited overestimation]
\label{prop:limitedover}

If $\sup_{s,a} D_{TV}(\widehat{P}_\psi(\cdot|s,a),P_\mathrm{tar}(\cdot|s,a))\leq \epsilon<\frac{1}{2}$, we have
\begin{equation*}
    \inf_{\{s^\prime_i\}^N\sim\widehat{P}_{\psi_i}(\cdot|s,a)}\bigg[\max\limits_{a^\prime_i\sim\hat{\mu}(\cdot|s^\prime_i)}Q(s^\prime_i, a^\prime_i)\bigg]\leq \mathbb{E}_{s^\prime\sim P_\mathrm{tar}(\cdot|s,a)}\bigg[\max\limits_{a^\prime\sim\hat{\mu}(\cdot|s^\prime)}Q(s^\prime,a^\prime)\bigg]+(1-(1-2\epsilon)^N)\frac{r_\mathrm{max}}{1-\gamma}.
\end{equation*}
\end{proposition}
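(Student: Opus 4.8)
The plan is to reduce the statement to a single, clean comparison between an ensemble-minimum backup and a genuine target-domain backup. Write $f(s') := \max_{a'\sim\hat{\mu}(\cdot|s')}Q(s',a')$ and $V_{\mathrm{max}} := r_{\mathrm{max}}/(1-\gamma)$. Since $Q$ is a value function for rewards in $[-r_{\mathrm{max}},r_{\mathrm{max}}]$ (equivalently, an iterate of a contraction with such rewards), $\|Q\|_\infty\le V_{\mathrm{max}}$, hence $f(s')\in[-V_{\mathrm{max}},V_{\mathrm{max}}]$ for all $s'$. The left-hand side is read as $\mathbb{E}\big[\min_{1\le i\le N} f(s'_i)\big]$ with $s'_i\sim\widehat{P}_{\psi_i}(\cdot|s,a)$ drawn independently, and the target quantity is $\mathbb{E}_{s'\sim P_{\mathrm{tar}}(\cdot|s,a)}[f(s')]$; the hypothesis $\sup_{s,a}D_{TV}(\widehat{P}_\psi(\cdot|s,a),P_{\mathrm{tar}}(\cdot|s,a))\le\epsilon$ is used as $D_{TV}(\widehat{P}_{\psi_i}(\cdot|s,a),P_{\mathrm{tar}}(\cdot|s,a))\le\epsilon$ for every ensemble member $i$ and every $(s,a)$.

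The core step proceeds via the elementary domination $\min_{1\le i\le N}f(s'_i)\le f(s'_1)$, so that $\mathbb{E}[\min_i f(s'_i)]\le \mathbb{E}_{s'_1\sim\widehat{P}_{\psi_1}}[f(s'_1)]$, and then the standard variational bound $\mathbb{E}_{\widehat{P}_{\psi_1}}[f]-\mathbb{E}_{P_{\mathrm{tar}}}[f]\le D_{TV}(\widehat{P}_{\psi_1},P_{\mathrm{tar}})\cdot\big(\sup f-\inf f\big)\le 2\epsilon\,V_{\mathrm{max}}$, using that the oscillation of $f$ is at most $2V_{\mathrm{max}}$. This already yields $\mathbb{E}[\min_i f(s'_i)]\le \mathbb{E}_{P_{\mathrm{tar}}}[f]+2\epsilon\,V_{\mathrm{max}}$. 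To recover the stated form, I would close with the algebraic fact that for $x:=1-2\epsilon\in(0,1]$ (legitimate because $\epsilon<\tfrac12$) and any integer $N\ge1$ one has $x^N\le x$, i.e. $2\epsilon=1-x\le 1-x^N=1-(1-2\epsilon)^N$; multiplying by $V_{\mathrm{max}}$ gives exactly the claimed upper bound $\mathbb{E}_{P_{\mathrm{tar}}}[f]+(1-(1-2\epsilon)^N)\tfrac{r_{\mathrm{max}}}{1-\gamma}$. Since the chain of inequalities holds pointwise in $(s,a)$, taking the supremum in the TV hypothesis makes everything uniform, as needed when $\widehat{\mathcal{T}}_{\mathrm{RCB}}$ is iterated.

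It is worth presenting, in parallel, the coupling argument that actually exhibits the $N$-dependence the statement advertises: for each $i$, take the maximal coupling $(s'_i,y_i)$ of $\widehat{P}_{\psi_i}$ with $P_{\mathrm{tar}}$ so that $\mathbb{P}(s'_i\ne y_i)=D_{TV}(\widehat{P}_{\psi_i},P_{\mathrm{tar}})\le\epsilon$, with the couplings independent across $i$; on the event that all members are ``favorable'' ($s'_i=y_i$ for all $i$) the ensemble minimum equals $\min_i f(y_i)\le f(y_1)$ and contributes at most $\mathbb{E}_{P_{\mathrm{tar}}}[f]$ in expectation, while on the complementary event one bounds crudely by $V_{\mathrm{max}}$ and the deficit relative to $\mathbb{E}_{P_{\mathrm{tar}}}[f]\ge -V_{\mathrm{max}}$ absorbs the remaining factor. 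I expect the delicate point here to be exactly reconciling the constant this coupling produces — which naively looks like $2V_{\mathrm{max}}\big(1-(1-\epsilon)^N\big)$ — with the tighter-looking $V_{\mathrm{max}}\big(1-(1-2\epsilon)^N\big)$ in the statement; the reconciliation is precisely the inequality $2\epsilon\le 1-(1-2\epsilon)^N$ noted above together with bookkeeping of the oscillation $2V_{\mathrm{max}}$ of $f$, so in the write-up I would route the rigorous argument through the short domination-plus-oscillation estimate and keep the coupling picture as the intuition for why ensembling (larger $N$) only improves conservatism.
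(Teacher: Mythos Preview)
Your argument is correct, but it differs from the paper's proof. The paper proceeds by a support-splitting decomposition: from the TV hypothesis it deduces that each ensemble sample $s'_i$ lands outside $\mathrm{support}(P_{\mathrm{tar}}(\cdot|s,a))$ with probability at most $2\epsilon$, then invokes the law of total expectation on the event $\bigcap_i\{s'_i\in\mathrm{support}(P_{\mathrm{tar}})\}$; on that event the value is bounded by the target expectation, and on the complement (which has probability at most $1-(1-2\epsilon)^N$) it is bounded by $Q_{\max}\le r_{\max}/(1-\gamma)$. This is where the factor $(1-(1-2\epsilon)^N)$ originates in the paper.

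Your route is more elementary: you throw away all but one ensemble member via $\min_i f(s'_i)\le f(s'_1)$, then apply the standard oscillation-TV inequality $\mathbb{E}_{\widehat{P}_{\psi_1}}[f]-\mathbb{E}_{P_{\mathrm{tar}}}[f]\le 2\epsilon V_{\max}$ to obtain the sharper, $N$-independent bound $2\epsilon V_{\max}$, and finally relax it using $2\epsilon\le 1-(1-2\epsilon)^N$. This is cleaner and makes explicit something the paper's formulation obscures: the stated bound actually \emph{loosens} as $N$ grows, so the $N$-dependence is cosmetic rather than reflecting a genuine benefit of ensembling for this upper bound. Your parallel coupling sketch and the observation that its natural constant $2V_{\max}(1-(1-\epsilon)^N)$ does not match the stated one are both apt; routing the rigorous proof through the one-sample oscillation estimate, as you propose, is the right call.
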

Proposition~\ref{prop:limitedover} holds under the assumption that the prediction error of the dynamics model stays small, which is difficult to fulfill given that the target domain data is limited and the dynamics model tends to overfit. Moreover, value underestimation may also occur due to the infimum operator. Therefore, we introduce the following techniques for the underlying value estimation issue.

\textbf{Tackling overestimation and underestimation.} We adopt two techniques to address the value estimation issue: dynamic value penalty, and using Huber loss~\citep{Huber1973robust} for Bellman update.

Instead of directly using Equation~\ref{eq:dynamics_pracrcb} for Bellman backup, we introduce a value penalty term:
\begin{equation}
\label{eq:penalty}
u(s,a,s^\prime)=\mathbb{I}\left(s^\prime\sim P_{\text{src}}(\cdot|s,a)\right)\cdot\bigg(\max_{a^\prime\sim\hat{\mu}(\cdot|s^\prime)}Q(s^\prime,a^\prime)-\inf_{\{s^\prime_i\}^N\sim\widehat{P}_{\psi_i}(\cdot|s,a)}\bigg[\max\limits_{a^\prime_i\sim\hat{\mu}(\cdot|s^\prime_i)}Q(s^\prime_i, a^\prime_i)\bigg]\bigg).
\end{equation}
We then unify the source and target dynamics in the practical RCB operator, reformulating it as
\begin{equation}
    \label{eq:tdtarget}
    \widehat{\mathcal{T}}_\text{RCB}Q(s,a)=r(s,a)+\gamma\mathbb{E}_{s^\prime\sim P_\mathcal{M}(\cdot|s,a)}\bigg[\max\limits_{a^\prime\sim\hat{\mu}(\cdot|s^\prime)}Q(s^\prime,a^\prime)-\beta\cdot u(s,a,s^\prime)\bigg],
\end{equation}
where $\mathcal{M}=\mathcal{M}_\text{tar}$ or $\mathcal{M}_\text{src}$ and  $\beta$ serves as a dynamic penalty coefficient that provides flexible control over value estimation. Specifically, we recover the practical RCB operator by setting $\beta$ to 1.0, $\beta>1.0$ will increase the penalty to mitigate value overestimation, and $\beta<1.0$ reduces the penalty to alleviate value underestimation. Although the dynamics model and value penalty are widely applied in offline RL~\citep{yu2020mopo, sun2023model,liu2024micro, qiao2025sumo}, our difference lies in the specific usage of the dynamics model and design of the penalty term.

\textbf{Remark.} If we use IQL~\citep{kostrikov2021offline} for policy optimization, then $\max_{a^\prime\sim\hat{\mu}(\cdot|s^\prime)}Q(s^\prime,a^\prime)\approx V(s^\prime)$, and Equation~\ref{eq:penalty} can be re-written as
\begin{equation}
\label{eq:practicalpenalty}  u(s,a,s^\prime)=\mathbb{I}\left(s^\prime\sim P_{\text{src}}(\cdot|s,a)\right)\cdot\bigg(V(s^\prime)-\inf_{\{s^\prime_i\}^N\sim\widehat{P}_{\psi_i}(\cdot|s,a)}\left[V(s^\prime_i)\right]\bigg).
\end{equation}
We note that Equation~\ref{eq:practicalpenalty} resembles the value discrepancy term in VGDF~\citep{xu2024cross}, which is $V(s^\prime)-\mathbb{E}_{\{s^\prime_i\}^N\sim\widehat{P}_{\psi_i}(\cdot|s, a)}\left[V(s^\prime_i)\right]$. However, we extend this term by incorporating additional penalties for test-time dynamics shifts, whereas VGDF only addresses train-time dynamics shifts.

The second technique we adopt is the Huber loss~\citep{Huber1973robust}, a well-established technique for noise-resistant optimization~\citep{yangtowards,roy2021empirical}. We replace the regular $\ell_2$ loss in the Bellman update with the Huber loss:
\begin{equation}\label{eq:Huber}\mathcal{L}_Q=\mathbb{E}_{\mathcal{D}_\mathrm{src}}\left[l_\delta \left(Q(s,a)-\widehat{\mathcal{T}}_\text{RCB}Q(s,a)\right)\right] + \frac{1}{2}\mathbb{E}_{\mathcal{D}_\mathrm{tar}}\left[\left(Q(s,a)-\mathcal{T}Q(s,a)\right)^2\right],
\end{equation}



where $ l_\delta(a)=\textstyle\begin{cases}        0.5a^2, &|a|<\delta\\
\delta(|a|-0.5\delta), &|a|\geq\delta
    \end{cases}$ 
    with $\delta$ being the transition threshold and $\mathcal{T}$ being the standard Bellman operator for target domain data. Specifically, if a severe value estimation error occurs such that $|Q(s,a) - \widehat{\mathcal{T}}_\text{RCB}Q(s,a)| > \delta$, the $\ell_2$ loss would transition to $\ell_1$ loss to improve robustness against outliers. This technique helps mitigate value estimation error. The last step is to utilize offline RL algorithms such as IQL to optimize the policy as in other works~\citep{lyu2025cross,wen2024contrastive}. We present the detailed pseudo-code of DROCO in Appendix~\ref{appendix:impledetails}.

\section{Experiments}

In this section, we conduct extensive experiments to examine our method. We aim to answer the following two questions: (1) Can DROCO outperform prior strong baselines across various train-time dynamics shifts and dataset qualities? (2) Can DROCO show enhanced robustness against test-time dynamics perturbations? We also test the parameter sensitivity of DROCO.

\subsection{Main Results}
\label{sec:main}

\begin{table}[]
    \centering
    \begingroup
    \setlength{\tabcolsep}{6.8pt}
    \caption{\textbf{Evaluation Results with train-time kinematic shifts.} half=halfcheetah, hopp=hopper, walk=walker2d, m=medium, me=medium-expert, mr=medium-replay, e=expert. We report the normalized score evaluated in the target domain, and $\pm$ captures the standard deviation across 5 seeds.}
    \label{tab:main}
    \begin{tabular}{l|ccccccc}
    \toprule
    \textbf{Dataset} & $\text{IQL}^\star$ & $\text{CQL}^\star$ & BOSA & DARA & IGDF & OTDF & DROCO  (Ours)\\
    \midrule
    half-m & \textbf{45.2} & 37.7 & 39.6 & 44.1 & \textbf{45.2$\pm$0.1} & 42.2$\pm$0.1 & \textbf{45.3$\pm$0.2} \\
    half-mr & 22.1 & 23.6 & \textbf{26.3} & 21.6 & 22.9$\pm$1.4 & 15.6$\pm$3.1 & \textbf{26.9$\pm$3.2} \\
    half-me & 43.7 & 54.8  & 42.2 & 52.7 & 57.1$\pm$8.9 & 46.7$\pm$4.4 & \textbf{60.1$\pm$7.1} \\
    half-e & 49.7 & 36.0 & \textbf{84.3} & 47.4 & 47.6$\pm$2.1 & 79.6$\pm$3.0 & 67.4$\pm$5.8 \\
    hopp-m & 48.8 & 35.7 & \textbf{71.4} & 48.8 & 54.3$\pm$6.6 & 46.3$\pm$3.7 & 55.4$\pm$5.3 \\
    hopp-mr & 40.2 & 43.2 & 29.5 & 41.6 & 30.0$\pm$5.2 & 26.2$\pm$4.4 & \textbf{47.3$\pm$7.0} \\
    hopp-me & 12.5 & 7.8 & 49.6 & 17.0 & 11.6$\pm$0.6 & \textbf{58.1$\pm$4.9} & 54.0$\pm$6.4 \\
    hopp-e & 62.6 & 47.9 & 94.8 & 59.1 & 70.1$\pm$3.2 & \textbf{97.0$\pm$3.3} & 89.3$\pm$9.6 \\
    walk-m & 48.7 & 47.7 &44.5 & 43.4 & 51.8$\pm$2.4 & 43.0$\pm$2.1 & \textbf{70.8$\pm$3.3} \\
    walk-mr & 12.6 & 17.8 & 4.8 & 15.6 & 11.2$\pm$1.1 & 10.7$\pm$1.9 & \textbf{27.7$\pm$3.0} \\
    walk-me & \textbf{95.4} & 61.4 & 35.1 & 85.3 & 90.6$\pm$3.4 & 63.1$\pm$6.6 & 78.5$\pm$6.7 \\
    walk-e & 90.1 & 83.8 & 41.9 & 85.5 & 93.7$\pm$5.8 & 98.9$\pm$2.1 & \textbf{106.0$\pm$0.8} \\
    ant-m & 89.9 & 58.2 & 28.4 & \textbf{98.9} & 88.0$\pm$4.6 & 86.1$\pm$3.7 & 92.7$\pm$6.3 \\
    ant-mr & 46.8 & 39.4 & 22.0 & 42.1 & \textbf{58.2$\pm$9.7} & 39.6$\pm$8.1 & 44.8$\pm$4.5 \\
    ant-me & 106.1 & 100.6 & 102.5 & 104.8 & 112.8$\pm$4.0 & 105.1$\pm$3.9 & \textbf{119.0$\pm$3.6} \\
    ant-e & 111.0 & 94.3 & 57.6 & 115.1 & \textbf{119.2$\pm$5.6} & 111.6$\pm$2.9 & \textbf{120.0$\pm$2.1} \\ 
    \midrule
    \textbf{Total} & 925.4 & 789.9 & 774.5 & 923.0 & 964.3 & 969.8 & \textbf{1105.2} \\ 
    \bottomrule
    \end{tabular}
    \endgroup

\end{table}

\textbf{Experimental Settings.} Following previous works~\citep{lyu2025cross,wen2024contrastive}, We employ 4 MuJoCo~\citep{todorov2012mujoco} tasks as source domains: \texttt{halfcheetah-v2}, \texttt{hopper-v2}, \texttt{walker2d-v2}, and \texttt{ant-v3}. For the target domain datasets, we utilize 4 data qualities from D4RL~\citep{fu2020d4rl} for each task: \texttt{medium}, \texttt{medium-replay}, \texttt{medium-expert}, and \texttt{expert}, totaling 16 target domain datasets. For the source domain, we introduce kinematic shifts and morphology shifts in the target domain setup. We collect source domain datasets with 4 data qualities, resulting in a total of 32 $\left(4 [\texttt{tasks}]\times2[\texttt{shift types}]\times4[\texttt{data qualities}]\right)$ source domain datasets. Each pair of the source and target domain datasets shares the same task type (such as \texttt{hopper-v2}) and dataset quality (such as \texttt{expert}). More details about the tasks and datasets can be found in Appendix~\ref{appendix:taskdataset}.

\textbf{Baselines.} We consider the following baselines: \textbf{$\text{IQL}^\star$}, \textbf{$\text{CQL}^\star$} (which train IQL~\citep{kostrikov2021offline} and CQL~\citep{kumar2020conservative} with the mixed dataset $\mathcal{D}_\text{tar}\cup\mathcal{D}_\text{src}$), \textbf{BOSA}~\citep{liu2024beyond}, \textbf{DARA}~\citep{liu2022dara}, \textbf{IGDF}~\citep{wen2024contrastive} and \textbf{OTDF}~\citep{lyu2025cross}.

\textbf{Results.} We run each baseline and DROCO for 1M training steps over 5 random seeds, and report the results with train-time kinematic shifts in Table~\ref{tab:main} (the results with morphology shifts are deferred to Appendix~\ref{appendix:morphology}). Note that our evaluation is under the clean target environment. Empirical results demonstrate that DROCO achieves superior performance in \textbf{9} out of 16 tasks, outperforming all 6 baselines. Furthermore, in terms of the total normalized score, DROCO achieves a remarkable \textbf{1105.2}, significantly surpassing the second-best method, OTDF (969.8), by $\textbf{14.0\%}$. We attribute the suboptimal performance of DROCO on the remaining datasets to its trade-off between performance and robustness, whereas other methods only consider performance. However, DROCO still achieves a competitive performance compared to other baselines on these datasets. These results indicate that DROCO exhibits superior train-time robustness against dynamics shifts.

\subsection{Evaluation under Dynamics Perturbations}
\label{sec:perturbationevaluation}

\begin{figure}
    \centering
	\subfigure[Kinematic shift] {\includegraphics[width=.32\textwidth]{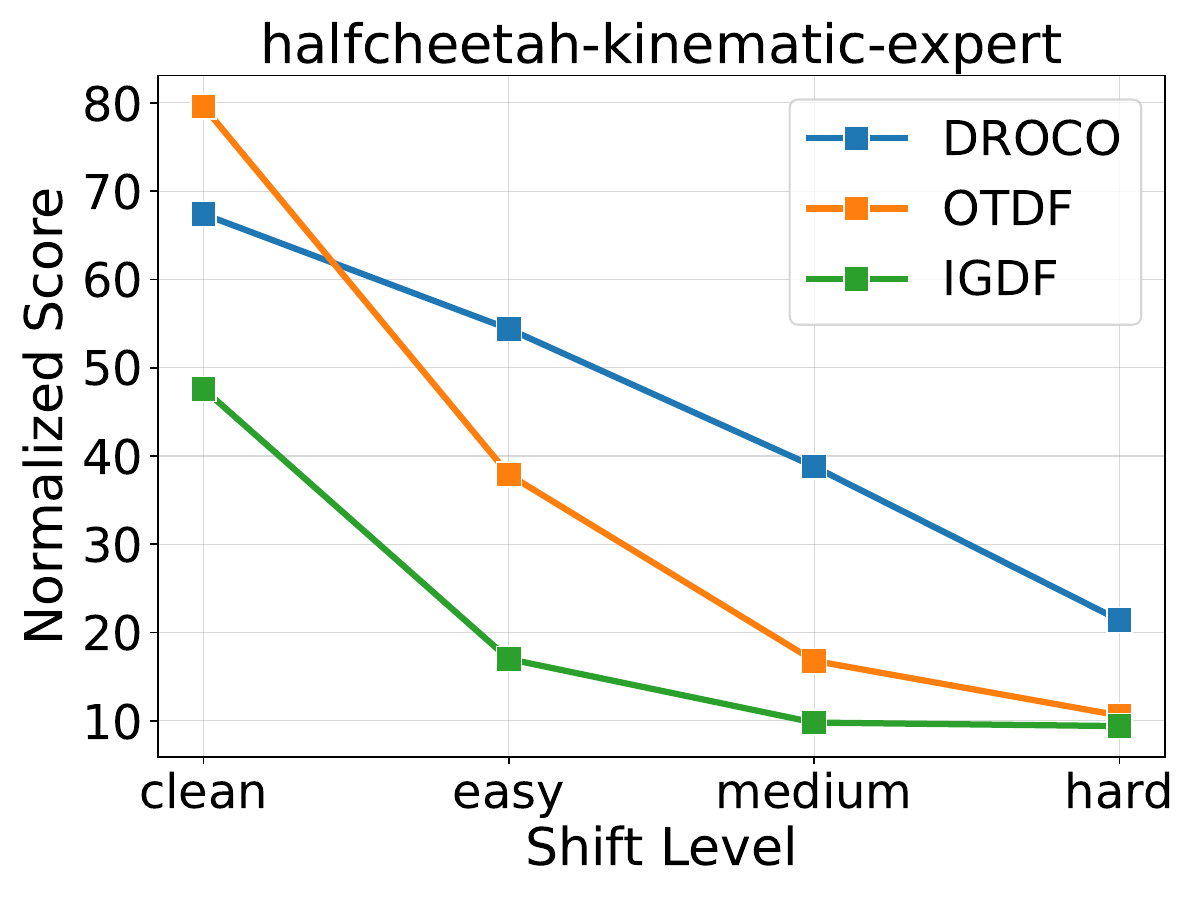}}
	\subfigure[Morphology shift] {\includegraphics[width=.32\textwidth]{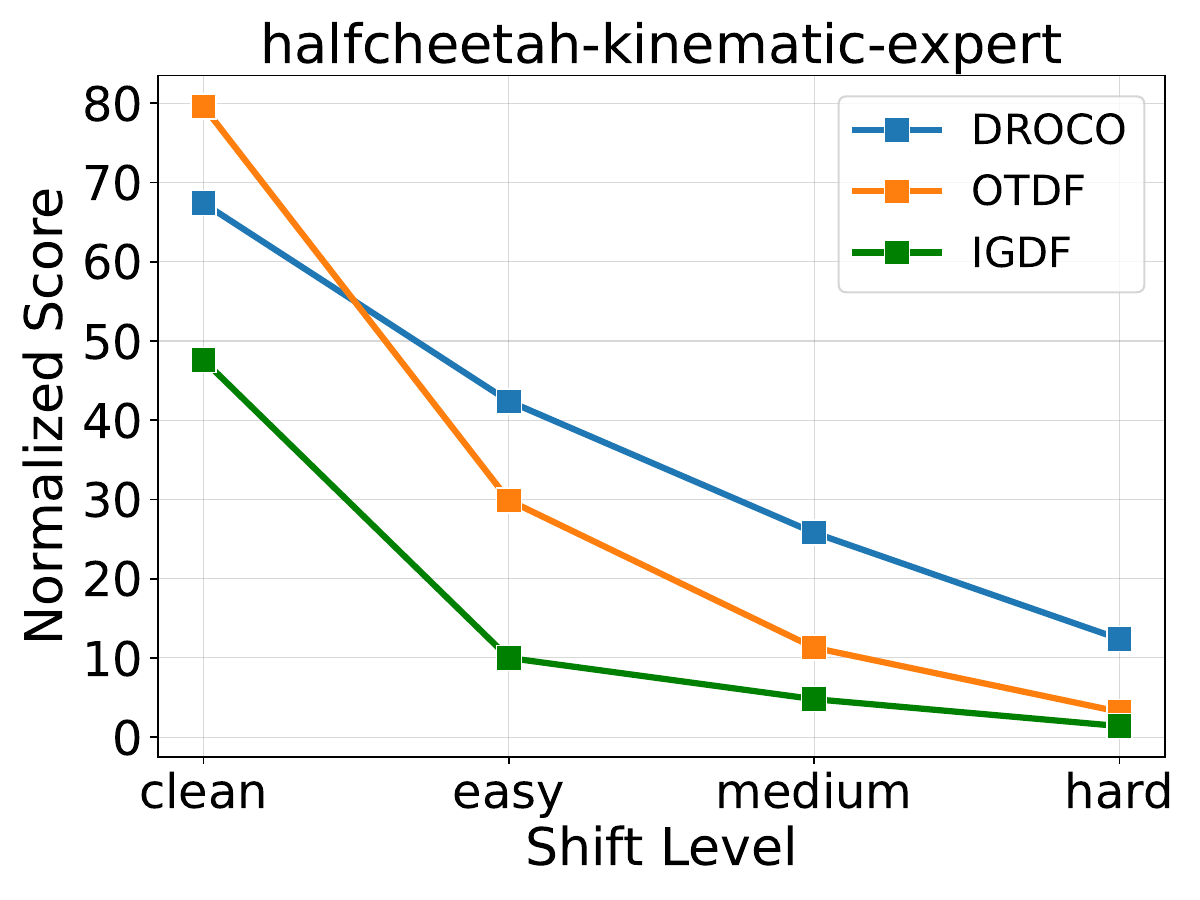}}
	\subfigure[min Q perturbation] {\includegraphics[width=.32\textwidth]{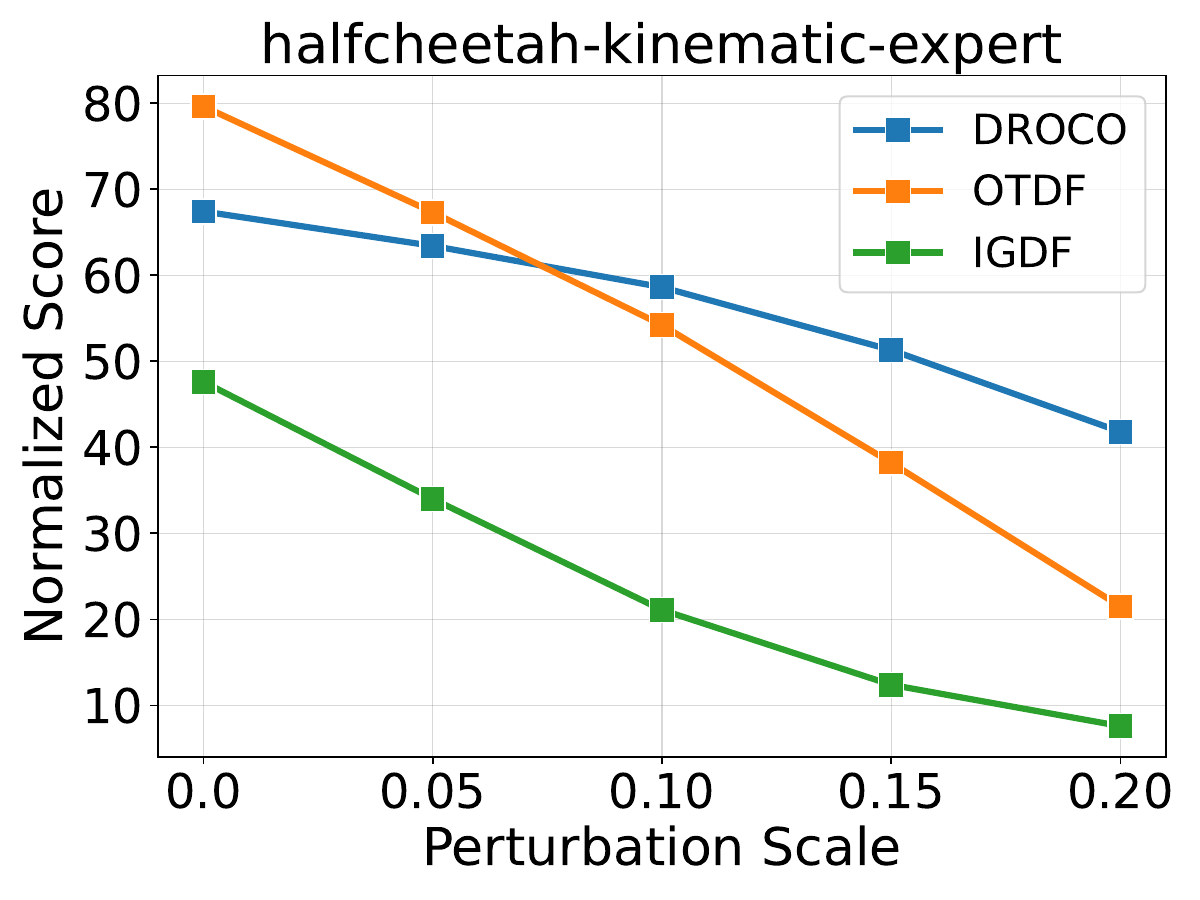}}
    \caption{Evaluation results under different types and levels of dynamics perturbations.}
    \label{fig:perturbations}
\end{figure}

\textbf{Experimental Settings.} {Test-time robustness can be measured by the degree of performance degradation in the face of dynamic perturbations, compared to the clean environment.} To examine the test-time robustness of DROCO, we introduce three kinds of dynamics perturbations during evaluation in the target environment: kinematic perturbations, morphology perturbations, min Q perturbations~\citep{yang2022rorl, yangtowards}. The first two perturbation types mirror the source domain dynamics shifts, each implemented at three intensity levels following~\citep{lyu2024odrlabenchmark}: \textit{easy}, \textit{medium}, and \textit{hard}. The third perturbation type represents an adversarial attack strategy that modifies dynamics by finding the state $\bar{s}$ from $U_\epsilon(s^\prime)$ that minimizes the $Q$ value, i.e., $\bar{s}=\arg\min_{\bar{s}\in U_\epsilon(s^\prime)}Q(\bar{s},\pi(\bar{s}))$, where $\epsilon$ is related to the perturbation scale. {We consider DROCO to exhibit better test-time robustness if it demonstrates less performance degradation than the baselines under the same shift severity.}

\textbf{Results.} We evaluate our method against two baselines (IGDF and OTDF) under three perturbation types with varying intensity levels. Due to space constraints, we only report results when the source domain dataset is \texttt{halfcheetah-kinematic-expert}, as illustrated in Figure~\ref{fig:perturbations}. A wider range of evaluation can be found in Appendix~\ref{appendix:perturbation}.

Our experiments demonstrate that DROCO exhibits superior robustness to dynamic perturbations compared to baseline methods. Specifically, under easy-level kinematic shifts, DROCO shows only a $\textbf{19.3\%}$ performance degradation (from 67.4 to 54.4), whereas both IGDF and OTDF suffer over $50\%$ performance deterioration. We notice that DROCO displays greater sensitivity to morphological perturbations than to kinematic perturbations, with a $42.1\%$ performance decrease under easy-level morphological variations. We attribute this to the absence of morphology shifts in the source domain data, rendering the policy less adaptable to this unseen perturbation type. Nevertheless, DROCO still outperforms both baselines: under the same conditions, OTDF and IGDF exhibit performance declines of $62.4\%$ and $78.9\%$ respectively. Notably, DROCO maintains consistent robustness against min Q perturbations across all scales. At the highest perturbation scale of 0.2, DROCO's performance decreases by $\textbf{37.9\%}$, compared to $73.6\%$ and $84.0\%$ for OTDF and IGDF.

\subsection{Parameter Sensitivity}

\begin{figure}
    \centering
	\subfigure[Effect of $\beta$] {\includegraphics[width=.49\textwidth]{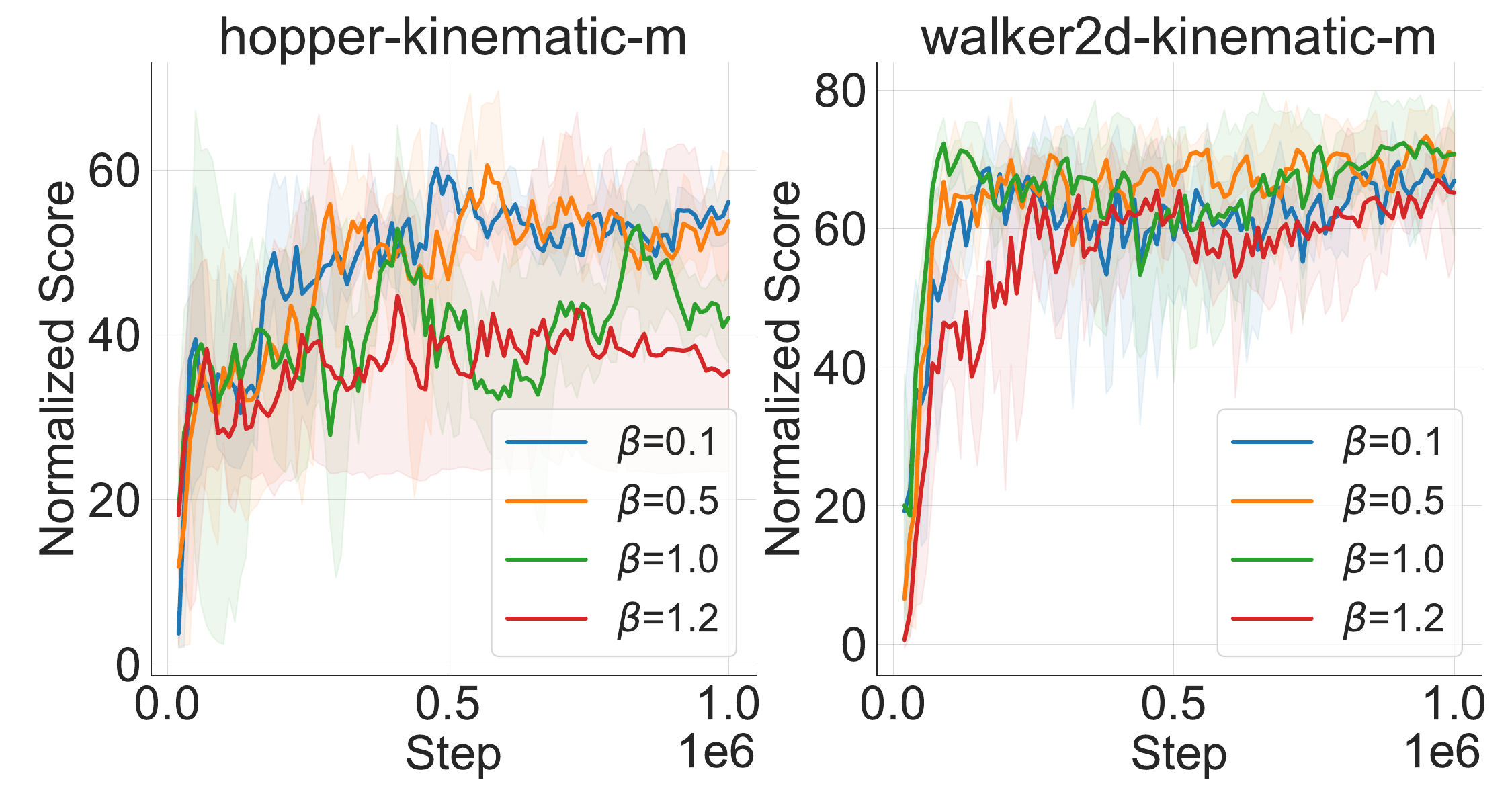}}
	\subfigure[Effect of $\delta$] {\includegraphics[width=.49\textwidth]{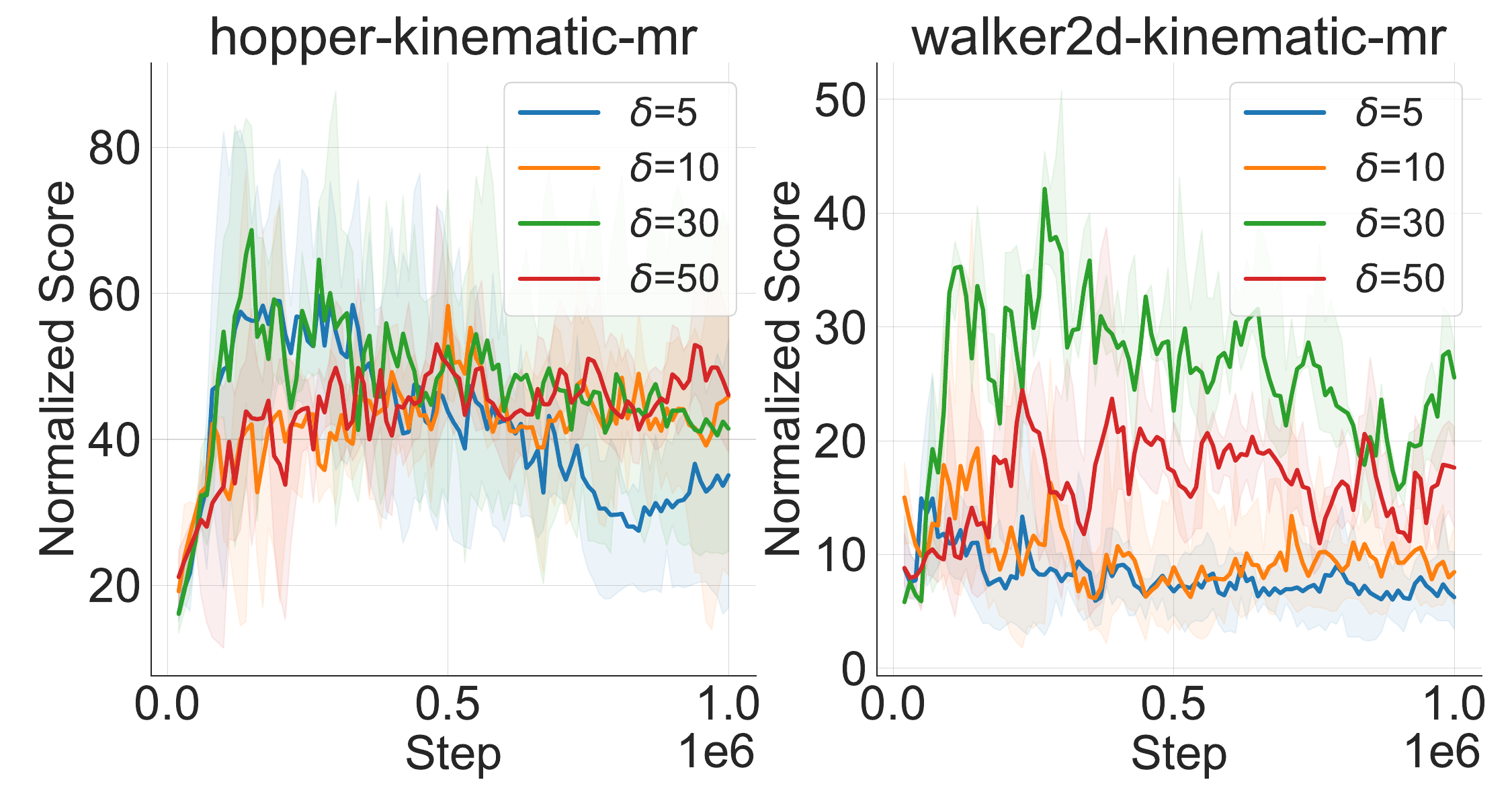}
     }
    \caption{Parameter sensitivity experiments on $\beta$ and $\delta$.}
    \vspace{-0.35cm}
    \label{fig:parameter}
\end{figure}

We examine the sensitivity of DROCO to the introduced hyperparameters. There are two main hyperparameters in DROCO: the penalty coefficient $\beta$ and the transition threshold $\delta$.

\textbf{Penalty Coefficient $\beta$.} The parameter $\beta$ controls the intensity of value penalty. A larger $\beta$ leads to a stronger penalty and suppresses value overestimation, and vice versa. We sweep $\beta$ across $\{0.1, 0.5, 1.0, 1.2\}$ and show the experimental results with \texttt{medium} datasets in Figure~\ref{fig:parameter} (a). We observe that different tasks prefer distinct $\beta$. For example, setting $\beta=0.1$ achieves the best performance for \texttt{hopper-kinematic-medium}, while \texttt{walker2d-kinematic-medium} prefers $\beta=1.0$.

\textbf{Transition Threshold $\delta$.} The parameter $\delta$ determines when the $\ell_2$ loss turns to $\ell_1$ loss for Bellman update. A larger $\delta$ corresponds to a more lenient transition condition. To test the effect of $\delta$, we select $\delta$ among $\{5,10,30,50\}$ and conduct experiments with \texttt{medium-replay} datasets. The results in Figure~\ref{fig:parameter} (b) indicate that a too small $\delta$ (e.g., $\delta=5$) leads to inferior performance, while setting $\delta=30$ achieves a good performance. However, we find the optimal $\delta$ varies across different tasks through additional experiments, and more discussions are provided in Appendix~\ref{appendix:extendeparameter}. 

\textbf{Remark.} Although different values of $\beta$ and 
$\delta$ are preferred for different tasks (as shown in Appendix~\ref{appendix:extendeparameter}), we could still find some patterns across different tasks. We find that setting $\beta\leq1.0$ works for most tasks, implying that value underestimation occurs more often due to the infimum operator. We also find that a larger $\delta$ (30 and 50) is preferred for most tasks. We believe it is because the $\ell_2$ loss is beneficial for training stability. Therefore, for a new task, we could first try $\beta\leq1.0$
and $\delta=30$ (or $\delta=50$). This could serve as a guideline for finding the best hyperparameter.

\vspace{-0.1cm}
\section{Related Work}
\vspace{-0.1cm}

\textbf{Offline RL.} In offline RL, a fixed dataset is provided, and no further interactions are allowed. As a result, conventional off-policy RL algorithms suffer from the extrapolation
error due to OOD actions and exhibit poor performance~\citep{kumar2020conservative, fujimoto2019off}. To address this challenge, various offline RL algorithms have been developed. Common solutions include incorporating policy constraints~\citep{kumar2019stabilizing, fujimoto2021minimalist}, learning a conservative value function~\citep{kumar2020conservative, lyu2022mildly,jin2021is,zhang2024pessimism}, leveraging a dynamics model to facilitate policy learning~\citep{yu2020mopo, yu2021combo, qiao2025sumo,qiao2024mind,qiao2026romi}, performing in-sample learning to avoid querying OOD actions~\citep{kostrikov2021offline,xu2023offline,garg2023extreme,zhang2023sample}, etc. However, these methods require that the offline dataset contains a large amount of data. In contrast, we focus on cross-domain offline RL, which relaxes the target data coverage requirement.

\textbf{Cross-Domain RL.} Cross-domain RL~\citep{niu2024comprehensive} faces the challenge of domain mismatch, including observation mismatch~\citep{yang2023essential}, viewpoints mismatch~\citep{liu2018imitation,sadeghi2018sim2real}, and dynamics mismatch~\citep{wen2024contrastive, lyu2025cross, xu2024cross,niu2022trust,niu2023h2o+}, etc. In this paper, we exclusively focus on the dynamics mismatch. Previous studies handle this issue by adaptively penalizing $Q$ value on source domain samples~\citep{niu2022trust}, capturing dynamics mismatch from a representation learning perspective~\citep{lyu2024cross} and value discrepancy perspective~\citep{xu2024cross}, modifying the reward function in the source domain~\citep{liu2022dara,eysenbach2020off,xue2023state,wang2024return}, etc. We focus on the offline setting, where the current works~\citep{liu2022dara,liu2024beyond,wen2024contrastive,lyu2025cross,wang2024return,yan2026cross} primarily consider the dynamics shifts from the source domain data, while we further consider the dynamics shifts from environmental perturbations.

\textbf{Robust RL.} Robust RL~\citep{iyengar2005robust,xu2010distributionally} aims to learn a policy resilient to environmental perturbations or data corruption. One line of research in robust RL focuses on train-time robustness against data corruption~\citep{yangtowards,zhang2021robust,zhang2022corruption,ye2023corruption,yang2024uncertainty,xu2025tackling}, while another line addresses test-time robustness against environmental perturbations~\citep{yang2022rorl,zhihe2023dmbp,shi2024distributionally,liu2024micro}. These works focus only on a single perspective of robustness (train-time or test-time) and the single-domain offline settings. For the cross-domain setting,~\citep{hesample,liu2024distributionally,van2025hybrid} study robust cross-domain RL, but they are different from our work, since they still only consider one aspect of robustness and focus on the online setting. In contrast, our work addresses the cross-domain offline setting and jointly considers both train-time and test-time robustness.

\section{Conclusion}

In this paper, we investigate the dual (train-time and test-time) robustness against dynamics shifts in cross-domain offline RL. We propose a novel RCB operator and theoretically demonstrate its ability of dual robustness. To further handle the potential value estimation error, we add a dynamic value penalty and use Huber loss for Bellman update, yielding our practical DROCO algorithm.
Through extensive experiments across various dynamics shift scenarios, we show that DROCO outperforms prior strong baselines and exhibits strong robustness to dynamics perturbations. 

\section{Acknowledgements}

This research was supported in part by the Hong Kong Research Grants Council (GRF 11217925, GRF 16209124) and the National Science Foundation of China (Grant 72371214). The authors would also like to thank the anonymous reviewers for their valuable comments on our manuscript.


\bibliography{iclr2026_conference}
\bibliographystyle{iclr2026_conference}

\newpage

\appendix

\section{More Discussions of DROCO}
\label{appendix:discussion}
In this section, we provide further clarifications on several questions regarding DROCO that readers might be concerned about. 

\textbf{1. Why not test DROCO on more tasks such as Antmaze and Adroit?}

In our experiments, we evaluate our method DROCO and other baselines on MuJoCo-based tasks (e.g., \texttt{halfcheetah-v2} and \texttt{hopper-v2}). This experimental setting is standard and has been adopted by recent works such as VGDF~\citep{xu2024cross}, IGDF~\citep{wen2024contrastive}, OTDF~\citep{lyu2025cross}, and CompFlow~\citep{kong2025composite}. By following these established settings, we believe our experiments are sufficient for evaluating DROCO's effectiveness.

On the other hand, existing literature~\citep{lyu2024odrlabenchmark} indicates that Antmaze tasks with varying map structures are highly challenging for cross-domain RL (Observation 3, p.7), as adapting policies across structural barriers remains difficult. Similarly, cross-domain RL methods often fail on dexterous hand manipulation tasks (Adroit) with kinematic or morphology shifts (Observation 4, p.8). Empirically, we find that not only DROCO but all baseline methods (e.g., BOSA, DARA, IGDF, OTDF) struggle to achieve meaningful performance on Antmaze and Adroit tasks.

We emphasize that enabling cross-domain RL to succeed in such challenging settings (Antmaze and Adroit) remains an open problem (p.17 in~\citep{lyu2024odrlabenchmark}) and falls beyond the scope of this work. We believe that MuJoCo tasks with dynamics shift provide a sufficient and appropriate testbed for evaluating our method.

\textbf{2. Can DROCO be extended into settings where the source and target domains have distinct state-action representations?}

The answer is yes. Although this work follows the setting of recent studies (e.g., BOSA, DARA, IGDF, OTDF) and assumes identical state and action spaces across source and target domains, DROCO can be generalized to domains with distinct state-action representations. This can be achieved by incorporating techniques such as inter-domain mapping via dynamics cycle consistency~\citep{zhang2020learning}. Other mapping methods~\citep{you2022cross,pan2024survive} are also compatible with our framework and could be seamlessly integrated. Thus, DROCO remains applicable even under varying state-action spaces.

\textbf{3. Why is the Wasserstein uncertainty set chosen instead of other uncertainty sets?}

Although there are other possible choices of uncertainty set like the $(s,a)$-rectangularity~\citep{iyengar2005robust} and $d$-rectangularity~\citep{liu2024minimax},  their dual reformulations often result in complex constraints or regularizations and are typically limited to simple linear MDP settings. In contrast, the Wasserstein uncertainty set admits an elegant closed-form dual reformulation (Proposition~\ref{prop:equivalent}), which allows converting dynamics perturbations into a simple state uncertainty set—a property critical for practical implementation. Moreover, the Wasserstein metric inherently couples state transitions, enabling a natural mapping to state perturbations and providing geometric interpretability.

\textbf{4. Why not apply the RCB operator on target domain data to improve test-time robustness?}

On the one hand, Propositions~\ref{prop:train_robust} and~\ref{prop:test_robust} show that dual robustness is guaranteed as long as $\operatorname{supp}(P_{\mathrm{tar}}(\cdot|s,a))\subseteq U_\epsilon(s^\prime_{\mathrm{src}})$ holds—even when the RCB operator is applied only to source domain data. Even when this condition is not fully satisfied, our techniques (dynamic value penalty and Huber loss) still enhance robustness. Therefore, applying the RCB operator to target domain data is unnecessary.

On the other hand, target domain data is important for achieving high performance in the clean target environment. Applying the RCB operator to it would introduce conservatism and compromise performance. Moreover, since the target data is scarce, any improvement in test-time robustness from using them would be limited. Thus, the optimal strategy is to apply the standard Bellman operator to target data to improve performance, and the RCB operator to source data to enhance robustness.

\textbf{5. Is there a trade-off between train-time and test-time robustness?}

There is a trade-off between train-time and test-time robustness, and it is controlled by $\epsilon$. Specifically, when $\operatorname{supp}(P_{\mathrm{tar}}(\cdot|s,a))\subseteq U_\epsilon(s^\prime_{\mathrm{src}})$ is satisfied, further increasing $\epsilon$ might bring excessive conservatism. While this enhances test-time robustness against dynamics shifts (since $c$ is monotonically increasing with respect to $\epsilon$), it sacrifices performance on the clean target domain, thereby reducing train-time robustness.

\section{Proofs of Propositions}  \label{append:proofs}

\subsection{Proof of Proposition \ref{prop:contraction}}






\begin{proof}
We recall the definition of the RCB operator below:
\begin{equation*}
        \mathcal{T}_{\text{RCB}}Q(s,a) = \begin{cases} r+\gamma\mathbb{E}_{s^\prime\sim P_{\mathcal{M}}}\left[\max\limits_{a^\prime\sim\hat{\mu}(\cdot|s^\prime)}Q(s^\prime, a^\prime)\right], \qquad \qquad \qquad \, \, \mathrm{if} \ \mathcal{M}=\mathcal{M}_{\text{tar}} \\
    r+\gamma\inf_{\mathcal{M}\in\mathcal{M}_{\epsilon}}\mathbb{E}_{s^\prime\sim P_{\mathcal{M}}}\left[\max\limits_{a^\prime\sim\hat{\mu}(\cdot|s^\prime)}Q(s^\prime, a^\prime)\right], \qquad\mathrm{if}\ \mathcal{M}=\mathcal{M}_{\text{src}}  \end{cases}.
    \end{equation*}
Let $Q_1$ and $Q_2$ be two arbitrary $Q$ functions. Then for any state-action pair $(s,a)$, if the next state $s^\prime\sim P_{\text{tar}}(\cdot|s,a)$, we have
\begin{equation*}
\begin{aligned}
\left\|\mathcal{T}_{\text{RCB}}Q_1-\mathcal{T}_{\text{RCB}}Q_2\right\|_\infty&=\gamma\max_{s,a}\left|\mathbb{E}_{s^\prime}\left[\max\limits_{a^\prime\sim\hat{\mu}(\cdot|s^\prime)}Q_1(s^\prime,a^\prime)-\max\limits_{a^\prime\sim\hat{\mu}(\cdot|s^\prime)}Q_2(s^\prime,a^\prime)\right]\right|\\
&\leq \gamma\max_{s,a}\mathbb{E}_{s^\prime}\left|\max\limits_{a^\prime\sim\hat{\mu}(\cdot|s^\prime)}Q_1(s^\prime,a^\prime)-\max\limits_{a^\prime\sim\hat{\mu}(\cdot|s^\prime)}Q_2(s^\prime,a^\prime)\right|\\
&\leq \gamma\max_{s,a}\left\|Q_1-Q_2\right\|_\infty\\
&=\gamma\left\|Q_1-Q_2\right\|_\infty,
\end{aligned}
\end{equation*}
where the second inequality holds from the fact that for any function $f_1$, $f_2$, any variant $x\sim\mathcal{X}$,
\begin{equation}
    \label{eq:maxproperty}
\left|\max\limits_{x\sim\mathcal{X}}f_1(x)-\max\limits_{x\sim\mathcal{X}}f_2(x)\right|\leq\max\limits_{x\sim\mathcal{X}}\left|f_1(x)-f_2(x)\right|.
\end{equation}

If the next state $s^\prime\sim P_{\text{src}}(\cdot|s,a)$, we have
\begin{equation*}
    \begin{aligned}
        &\left\|\mathcal{T}_{\text{RCB}}Q_1-\mathcal{T}_{\text{RCB}}Q_2\right\|_\infty\\
        &\qquad=\gamma\max_{s,a}\left|\inf\limits_{\mathcal{M}\in\mathcal{M}_\epsilon}\mathbb{E}_{s^\prime}\left[\max\limits_{a^\prime\sim\hat{\mu}(\cdot|s^\prime)}Q_1(s^\prime,a^\prime)\right]-\inf\limits_{\mathcal{M}\in\mathcal{M}_\epsilon}\mathbb{E}_{s^\prime}\left[\max\limits_{a^\prime\sim\hat{\mu}(\cdot|s^\prime)}Q_2(s^\prime,a^\prime)\right]\right|\\
        &\qquad\leq \gamma\max_{s,a,s^\prime}\left|\max\limits_{a^\prime\sim\hat{\mu}(\cdot|s^\prime)}Q_1(s^\prime,a^\prime)-\max\limits_{a^\prime\sim\hat{\mu}(\cdot|s^\prime)}Q_2(s^\prime,a^\prime)\right|\\
        &\qquad\leq \gamma\max_{s,a}\left\|Q_1-Q_2\right\|_\infty\\
        &\qquad=\gamma\left\|Q_1-Q_2\right\|_\infty,
    \end{aligned}
\end{equation*}
where the first inequality comes from the fact that for any function $f_1$, $f_2$, any variant $x\sim\mathcal{X}$,
\begin{equation}
    \label{eq:minproperty}
    \left|\min\limits_{x\sim\mathcal{X}}f_1(x)-\min\limits_{x\sim\mathcal{X}}f_2(x)\right|\leq\max\limits_{x\sim\mathcal{X}}\left|f_1(x)-f_2(x)\right|.
\end{equation}
Combining the results, we conclude that the RCB operator is a $\gamma$-contraction operator in the complete state-action space, which naturally leads to the conclusion that any initial $Q$ function would converge to a unique fixed point by repeatedly applying $\mathcal{T}_{\text{RCB}}$.
This completes the proof.
\end{proof}

\subsection{Proof of Proposition~\ref{prop:equivalent}}


We first introduce the following lemma before proving Proposition~\ref{prop:equivalent}.

\begin{lemma}
\label{lemma:dual}
    Let $\mathcal{S}$ be a measure space and $P$ be a probability measure on $\mathcal{S}$. We further let $f:\mathcal{S}\rightarrow\mathbb{R}$ be any measure function and $c: \mathcal{S}\times\mathcal{S}\rightarrow\mathbb{R}_{\geq 0}$ be a cost function. Then for any scalar $\lambda\geq0$, the following equality holds
    \begin{equation}
    \label{eq:lemmawasser}
        \inf_{\hat{P}\sim\mathcal{P}(\mathcal{S})}\left(\mathbb{E}_{\hat{s}\sim\hat{P}}[f(\hat{s})]+\lambda \mathcal{W}(\hat{P},P)\right)=\mathbb{E}_{s\sim P}\left[\inf_{\hat{s}\sim\mathcal{S}}\left(f(\hat{s})+\lambda c(s,\hat{s})\right)\right],
    \end{equation}
where $\mathcal{P}(\mathcal{S})$ represents all probability measures on $\mathcal{S}$, and $\mathcal{W}$ is the Wasserstein distance w.r.t. the cost function $c$.
\end{lemma}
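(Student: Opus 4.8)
\textbf{Proof proposal for Lemma~\ref{lemma:dual}.}

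The plan is to prove Equation~\ref{eq:lemmawasser} by establishing the two inequalities separately, and the main tool will be Kantorovich--Rubinstein--type duality together with the disintegration (gluing) of couplings. For the ``$\geq$'' direction, the strategy is to take an arbitrary $\hat{P}$ and an arbitrary coupling $\pi\in\Gamma(P,\hat{P})$; disintegrate $\pi$ as $P(ds)\,\pi_s(d\hat{s})$, so that $\mathbb{E}_{\hat{s}\sim\hat{P}}[f(\hat{s})]+\lambda\,\mathbb{E}_\pi[c(s,\hat{s})]=\mathbb{E}_{s\sim P}\bigl[\mathbb{E}_{\hat{s}\sim\pi_s}[f(\hat{s})+\lambda c(s,\hat{s})]\bigr]$. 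Pointwise in $s$ we bound the inner expectation below by $\inf_{\hat{s}}\bigl(f(\hat{s})+\lambda c(s,\hat{s})\bigr)$, which gives that the left-hand side of Equation~\ref{eq:lemmawasser} is at least the right-hand side, after taking the infimum over couplings (recall $\mathcal{W}(\hat{P},P)=\inf_\pi \mathbb{E}_\pi[c]$) and then over $\hat{P}$.

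For the ``$\leq$'' direction, the idea is to construct a near-optimal $\hat{P}$ from the pointwise minimizers. For each $s$, (approximately) pick $T(s)\in\arg\inf_{\hat{s}}\bigl(f(\hat{s})+\lambda c(s,\hat{s})\bigr)$, let $\hat{P}=T_\#P$ be the pushforward, and use the coupling $(\mathrm{id},T)_\#P$ to certify $\mathcal{W}(\hat{P},P)\leq \mathbb{E}_{s\sim P}[c(s,T(s))]$. Then $\mathbb{E}_{\hat{s}\sim\hat{P}}[f]+\lambda\mathcal{W}(\hat{P},P)\leq \mathbb{E}_{s\sim P}[f(T(s))+\lambda c(s,T(s))]=\mathbb{E}_{s\sim P}\bigl[\inf_{\hat{s}}(f(\hat{s})+\lambda c(s,\hat{s}))\bigr]$, which shows the infimum on the left is at most the right-hand side. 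Combining both inequalities yields the claimed equality; afterwards I would specialize $c=d$, $\lambda$ the dual multiplier, and $f(\hat s)=\max_{a'\sim\hat\mu(\cdot|\hat s)}Q(\hat s,a')$ to recover Proposition~\ref{prop:equivalent} (noting that the constrained form with $d(s',\bar s)\le\epsilon$ corresponds to taking $c$ to be $0$ on the $\epsilon$-ball and $+\infty$ outside, or equivalently sending $\lambda\to\infty$).

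The main obstacle is the measurability/selection issue in the ``$\leq$'' direction: the map $s\mapsto T(s)$ need not be exactly attainable or measurable without extra hypotheses on $f$ and $c$ (lower semicontinuity, coercivity, Polishness of $\mathcal{S}$). The clean way around this is a measurable-selection argument (e.g.\ Kuratowski--Ryll-Nardzewski, or an $\varepsilon$-net/approximate-minimizer construction with $T_\varepsilon$ measurable and $f(T_\varepsilon(s))+\lambda c(s,T_\varepsilon(s))\le \inf_{\hat s}(\cdot)+\varepsilon$), followed by letting $\varepsilon\to 0$. A secondary technical point is justifying that $\mathcal{W}$ defined via couplings equals the quantity used here and that the disintegration of a coupling is legitimate, which holds on standard (Polish) spaces; I would state these as the standing regularity assumptions. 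Everything else is a routine application of Fubini/Tonelli and monotonicity of expectation.
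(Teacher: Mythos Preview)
Your proposal is correct and follows essentially the same approach as the paper: both rely on disintegrating couplings as $P(ds)\,K(d\hat s\mid s)$ and then reducing the problem to a pointwise infimum over $\hat s$. The only difference is organizational---the paper collapses the double infimum over $\hat P$ and couplings into a single infimum over kernels $K$ and then swaps $\inf_K$ with $\mathbb{E}_{s\sim P}$ in one line, whereas you prove the two inequalities separately and are (rightly) more explicit about the measurable-selection issue that the paper's interchange step silently assumes.
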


\begin{proof}
We prove this lemma by showing the left-hand-side (LHS) of Equation~\ref{eq:lemmawasser} is equivalent to its right-hand-side (RHS).

According to the definition of Wasserstein distance, the LHS could be written as
\begin{equation*}
    \mathrm{LHS}=\inf_{\hat{P}\in\mathcal{P}(\mathcal{S})}\left(\mathbb{E}_{\hat{s}\sim\hat{P}}[f(\hat{s})]+\lambda\inf_{\gamma\in\Gamma(P,\hat{P})}\mathbb{E}_{(s,\hat{s})\sim\gamma}[c(s,\hat{s})]\right).
\end{equation*}
The optimization of $\hat{P}$ and the inner optimization over $\gamma\in\Gamma(P,\hat{P})$ could be combined into a single optimization over all couplings $\gamma$ whose first marginal is $P$, and the second marginal, $\hat{P}$, could be arbitrary in $\mathcal{P}(\mathcal{S})$. We then have
\begin{equation}
\label{eq:lhsany}
\begin{aligned}
\mathrm{LHS}&=\inf_{\gamma\in\Gamma(P,\hat{P})}\left(\mathbb{E}_{\hat{s}\sim\hat{P}}[f(\hat{s})]+\lambda\mathbb{E}_{(s,\hat{s})\sim\gamma}[c(s,\hat{s})]\right)\\
    &=\inf_{\gamma\in\Gamma(P,\hat{P})}\mathbb{E}_{(s,\hat{s})\sim\gamma}\left[f(\hat{s})+\lambda c(s,\hat{s})\right],
\end{aligned}
\end{equation}
where the second equality holds by the linearity of expectation.

By the disintegration theorem for measures, any coupling $\gamma\in\Gamma(P,\hat{P})$ could be represented as the product of its first marginal $P$ and a stochastic kernel $K(d\hat{s}|s):\mathcal{S}\rightarrow\mathcal{P}(\mathcal{S})$ such that
\begin{equation}
    \label{eq:stokernal}
    \gamma(ds,d\hat{s})=K(d\hat{s}|s)P(ds).
\end{equation}
This implies that optimizing over all couplings $\gamma\in\Gamma(P,\hat{P})$ is equivalent to optimizing over all possible stochastic kernels $K$. We substitute Equation~\ref{eq:stokernal} into Equation~\ref{eq:lhsany} to obtain
\begin{equation*}
    \begin{aligned}
        \mathrm{LHS}&=\inf_{K}\int_{\mathcal{S}}\int_{\mathcal{S}}\left[f(\hat{s})+\lambda c(s,\hat{s})\right] K(d\hat{s}|s) P(ds) \\
        &=\inf_K\mathbb{E}_{s\sim P}\left[\mathbb{E}_{\hat{s}\sim K(\cdot|s)}\left[f(\hat{s})+\lambda c(s,\hat{s})\right]\right].
        \end{aligned}
\end{equation*}
We change the position of the infimum operator and the inner expectation to get
\begin{equation}
\label{eq:changeposi}
    \mathrm{LHS}=\mathbb{E}_{s\sim P}\left[\inf_{K(\cdot|s)\in\mathcal{P}(\mathcal{S})}\mathbb{E}_{\hat{s}\sim K(\cdot|s)}\left[f(\hat{s})+\gamma c(s,\hat{s})\right]\right].
\end{equation}
We then solve the inner minimization problem for a fixed $s\in\mathcal{S}$:
\begin{equation*}
    \inf_{K(\cdot|s)\in\mathcal{P}(\mathcal{S})}\mathbb{E}_{\hat{s}\sim K(\cdot|s)}\left[f(\hat{s})+\gamma c(s,\hat{s})\right].
\end{equation*}
Let $g_s(\hat{s})\triangleq f(\hat{s})+\gamma c(s,\hat{s})$. The problem is to find a probability measure $K(\cdot|s)$ that minimizes the expectation of $g_s(\hat{s})$. It is obvious that this minimum is achieved by concentrating the entire probability mass on point $\hat{s}$ where $g_s(\hat{s})$ attains its infimum.

Let $\hat{s}^\star=\arg\inf_{\hat{s}\in\mathcal{S}}g_s(\hat{s})$. The optimal measure is a Dirac measure $\delta_{\hat{s}^\star}$ centered on $\hat{s}^\star$. Therefore, we have
\begin{equation}
\label{eq:diracequa}
    \begin{aligned}
        &\inf_{K(\cdot|s)\in\mathcal{P}(\mathcal{S})}\mathbb{E}_{\hat{s}\sim K(\cdot|s)}\left[f(\hat{s})+\gamma c(s,\hat{s})\right]\\
        &=\mathbb{E}_{\hat{s}\sim\delta_{\hat{s}^\star}}[g_s(\hat{s})]\\
        &=f(\hat{s}^\star)+\gamma c(s,\hat{s}^\star)\\
        &=\inf_{\hat{s}\in\mathcal{S}}\left[f(\hat{s})+\gamma c(s,\hat{s})\right].
    \end{aligned}
\end{equation}
Finally, substituting Equation~\ref{eq:diracequa} back into Equation~\ref{eq:changeposi}, we obtain the RHS of the lemma as
\begin{equation*}
\begin{aligned}
    \mathrm{LHS}&=\mathbb{E}_{s\sim P}\left[\inf_{\hat{s}\in\mathcal{S}}\left(f(\hat{s})+\gamma c(s,\hat{s})\right)\right]\\
    &=\mathrm{RHS}.
\end{aligned}
\end{equation*}
This concludes the proof.
\end{proof}

Now we give our formal proof for Proposition~\ref{prop:equivalent}. We restate it as follows.
\begin{proposition}[Proposition~\ref{prop:equivalent}] Letting $\mathcal{M}_\epsilon$ be the Wasserstein uncertainty set defined by Equation~\ref{eq:wasserstein}, then the term $\inf_{\widehat{\mathcal{M}}\in\mathcal{M}_{\epsilon}}\mathbb{E}_{s^\prime\sim P_{\widehat{\mathcal{M}}}}\big[\max_{a^\prime\sim\hat{\mu}(\cdot|s^\prime)}Q(s^\prime, a^\prime)\big]$ in Equation~\ref{eq:rcb} is equivalent to
\begin{equation}
\label{eq:restate}    \mathbb{E}_{s^\prime\sim P_\mathcal{M}}\bigg[\inf_{\bar{s}}\max\limits_{a^\prime\sim\hat{\mu}(\cdot|\bar{s})}Q(\bar{s},a^\prime)\bigg],\quad s.t. \quad d(s^\prime,\bar{s})\leq\epsilon.
\end{equation}
\end{proposition}

\begin{proof}

The original term (OT) is a constrained optimization problem that can be solved by the Lagrange multiplier method. Let $\widehat{V}(s)=\max_{a\sim\hat{\mu}(\cdot|s)}Q(s,a)$. Then we define the Lagrange function as
\begin{equation*}
    \mathcal{L}(\widehat{\mathcal{M}},\lambda)=\mathbb{E}_{\hat{s}^\prime\sim\widehat{\mathcal{M}}(\cdot|s,a)}\widehat{V}(\hat{s}^\prime)+\lambda\left(\mathcal{W}(\widehat{\mathcal{M}},\mathcal{M})-\epsilon\right).
\end{equation*}
The LHS is equivalent to solving the dual problem:
\begin{equation}
    \label{eq:supinf}
    \mathrm{OT}=\sup_{\lambda\geq 0}\inf_{\widehat{\mathcal{M}}}\mathcal{L}(\widehat{\mathcal{M}},\lambda).
\end{equation}
For a fixed $\lambda$, we solve the inner minimization problem $\inf_{\widehat{M}}\mathcal{L}(\widehat{M},\lambda)$:
\begin{equation}
\label{eq:innermin}
\inf_{\widehat{\mathcal{M}}}\mathcal{L}(\widehat{\mathcal{M}},\lambda)=\inf_{\widehat{\mathcal{M}}}\left(\mathbb{E}_{\hat{s}^\prime\sim\widehat{\mathcal{M}}(\cdot|s,a)}\widehat{V}(\hat{s}^\prime)+\lambda \mathcal{W}(\widehat{\mathcal{M}},\mathcal{M})\right)-\lambda\epsilon.
\end{equation}
According to Lemma~\ref{lemma:dual}, we have
\begin{equation}
    \label{eq:lhsdual}
    \inf_{\widehat{\mathcal{M}}}\left(\mathbb{E}_{\hat{s}^\prime\sim\widehat{\mathcal{M}}(\cdot|s,a)}\widehat{V}(\hat{s}^\prime)+\lambda \mathcal{W}(\widehat{\mathcal{M}},\mathcal{M})\right)=\mathbb{E}_{s^\prime\sim \mathcal{M}(\cdot|s,a)}\left[\inf_{\hat{s}^\prime}\left(\widehat{V}(\hat{s}^\prime)+\lambda c(s^\prime,\hat{s}^\prime)\right)\right].
\end{equation}
Substituting Equation~\ref{eq:lhsdual} into Equation~\ref{eq:innermin} and Equation~\ref{eq:supinf}, we obtain the dual reformulation of the original term as
\begin{equation}
\label{eq:rhsdual}
\mathrm{OT}=\sup_{\lambda\geq 0}\left\{\mathbb{E}_{s^\prime\sim T(\cdot|s,a)}\left[\inf_{\hat{s}^\prime}\left(\hat{V}(\hat{s}^\prime)+\lambda c(s^\prime,\hat{s}^\prime)\right)\right]-\lambda\epsilon\right\}.
\end{equation}

The RHS in Equation~\ref{eq:rhsdual} is exactly the Lagrange dual reformulation of Equation~\ref{eq:restate}. This implies Equation~\ref{eq:restate} holds, which concludes the proof.
\end{proof}

\subsection{Proof of Proposition~\ref{prop:practical_contraction}}


\begin{proof}
    We only discuss the case where $s^\prime\sim P_{\text{src}}(\cdot|s,a)$, since for $s^\prime\sim P_{\text{tar}}(\cdot|s,a)$, the proof is identical as Proposition~\ref{prop:contraction}. For $s^\prime\sim P_{\text{src}}(\cdot|s,a)$, letting $Q_1$ and $Q_2$ be two arbitrary $Q$ functions, we have
\begin{equation*}
    \begin{aligned}
        \left\|\mathcal{T}_{\text{RCB}}Q_1-\mathcal{T}_{\text{RCB}}Q_2\right\|_\infty&=\gamma\max_{s,a}\left|\mathbb{E}_{s^\prime}\left[\inf_{\bar{s}\in U_\epsilon(s^\prime)}\max\limits_{a^\prime\sim\hat{\mu}(\cdot|\bar{s})}Q_1(\bar{s},a^\prime)-\inf_{\bar{s}\in U_\epsilon(s^\prime)}\max\limits_{a^\prime\sim\hat{\mu}(\cdot|\bar{s})}Q_2(\bar{s},a^\prime)\right]\right|\\
        &\leq \gamma\max_{s,a}\mathbb{E}_{s^\prime}\left|\inf_{\bar{s}\in U_\epsilon(s^\prime)}\max\limits_{a^\prime\sim\hat{\mu}(\cdot|\bar{s})}Q_1(\bar{s},a^\prime)-\inf_{\bar{s}\in U_\epsilon(s^\prime)}\max\limits_{a^\prime\sim\hat{\mu}(\cdot|\bar{s})}Q_2(\bar{s},a^\prime)\right|\\
        &\leq \gamma\max_{s,a}\left\|Q_1-Q_2\right\|_\infty \\
        &=\gamma\left\|Q_1-Q_2\right\|_\infty,
    \end{aligned}
\end{equation*}
where the second inequality holds from Equation~\ref{eq:maxproperty} and Equation~\ref{eq:minproperty}. Then, we can conclude that the practical RCB operator is still a $\gamma$-contraction operator.
\end{proof}

\subsection{Proof of Proposition~\ref{prop:train_robust}}




    

\begin{proof}
    For any $(s,a)\in\mathcal{D}_{\text{src}}$, 
\begin{equation*}
    \begin{aligned}
        &\quad \mathcal{T}_{\text{RCB}}Q(s,a)-\mathcal{T}Q(s,a)\\
        &=\gamma\left(\mathbb{E}_{s^\prime\sim P_{\mathcal{M}_{\text{src}}}}\left[\inf_{\bar{s}\in U_\epsilon(s^\prime)}\max\limits_{a^\prime\sim\hat{\mu}(\cdot|\bar{s})}Q(\bar{s},a^\prime)\right]-\mathbb{E}_{s^\prime\sim P_{\mathcal{M}_\text{tar}}}\left[\max\limits_{a^\prime\sim\hat{\mu}(\cdot|s^\prime)}Q(s^\prime,a^\prime)\right]\right)\\
        &\leq \gamma\left(\inf\limits_{s^\prime\sim P_{\mathcal{M}_\text{tar}}}\max\limits_{a^\prime\sim\hat{\mu}(\cdot|s^\prime)}Q(s^\prime,a^\prime)-\mathbb{E}_{s^\prime\sim P_{\mathcal{M}_\text{tar}}}\left[\max\limits_{a^\prime\sim\hat{\mu}(\cdot|s^\prime)}Q(s^\prime,a^\prime)\right]\right)\\
        &\leq 0,
    \end{aligned}
\end{equation*}
where the first inequality holds by $\operatorname{supp}(P_{\text{tar}}(\cdot|s,a))\subseteq U_\epsilon(s^\prime_{\text{src}})$. In the mean time, for any $(s,a)\in\mathcal{D}_\text{src}$, we have
\begin{equation*}
\begin{aligned}
    &\quad \mathcal{T}_{\text{RCB}}Q(s,a)-\mathcal{T}Q(s,a)\\
        &=\gamma\left(\mathbb{E}_{s^\prime\sim P_{\mathcal{M}_{\text{src}}}}\left[\inf_{\bar{s}\in U_\epsilon(s^\prime)}\max\limits_{a^\prime\sim\hat{\mu}(\cdot|\bar{s})}Q(\bar{s},a^\prime)\right]-\mathbb{E}_{s^\prime\sim P_{\mathcal{M}_\text{tar}}}\left[\max\limits_{a^\prime\sim\hat{\mu}(\cdot|s^\prime)}Q(s^\prime,a^\prime)\right]\right)\\
        &=\gamma\left(\mathbb{E}_{\bar{s}}\left[\max\limits_{a^\prime\sim\hat{\mu}(\cdot|\bar{s})}Q(\bar{s},a^\prime)\right]-\mathbb{E}_{s^\prime\sim P_{\mathcal{M}_\text{tar}}}\left[\max\limits_{a^\prime\sim\hat{\mu}(\cdot|s^\prime)}Q(s^\prime,a^\prime)\right]\right)\\
        &\geq \gamma\left(\min_{\bar{s}}\left[\max\limits_{a^\prime\sim\hat{\mu}(\cdot|\bar{s})}Q(\bar{s},a^\prime)\right]-\max_{s^\prime\sim P_{\mathcal{M}_\text{tar}}}\left[\max\limits_{a^\prime\sim\hat{\mu}(\cdot|s^\prime)}Q(s^\prime,a^\prime)\right]\right).
\end{aligned}
\end{equation*}
Letting $\bar{s}^\star=\arg\min_{\bar{s}}\left[\max\limits_{a^\prime\sim\hat{\mu}(\cdot|\bar{s})}Q(\bar{s},a^\prime)\right]$ and $s^\star=\arg\max\limits_{s^\prime\sim P_{\mathcal{M}_\text{tar}}}\left[\max\limits_{a^\prime\sim\hat{\mu}(\cdot|s^\prime)}Q(s^\prime,a^\prime)\right]$, we have
\begin{equation*}
    \begin{aligned}
    &\quad \mathcal{T}_{\text{RCB}}Q(s,a)-\mathcal{T}Q(s,a)\\
    &\geq \gamma\left(\max\limits_{a^\prime\sim\hat{\mu}(\cdot|\bar{s}^\star)}Q(\bar{s},a^\prime)-\max\limits_{a^\prime\sim\hat{\mu}(\cdot|s^\star)}Q(s^\star,a^\prime)\right)\\
    &\geq \gamma\left(Q(\bar{s},a^\star)-Q(s^\star,a^\star)\right)\\
    &\geq -2\gamma\epsilon K_Q,
    \end{aligned}
\end{equation*}
where $a^\star=\arg\max\limits_{a^\prime\sim\hat{\mu}(\cdot|s^\star)}Q(s^\star,a^\prime)$, and the last inequality holds by the Lipschitz continuity assumption and triangle inequality.

Combining the above results, we have
\begin{equation}
    \label{eq:operatorbound}\mathcal{T}Q(s,a)-2\gamma\epsilon K_Q\leq\mathcal{T}_{\text{RCB}}Q(s,a)\leq\mathcal{T}Q(s,a).
\end{equation}
Let $Q^k$ denote the $Q$ value at iteration $k$, and let the initial $Q$ value be $Q^0$. After one iteration using the RCB operator and the oracle optimal Bellman operator, according to Equation~\ref{eq:operatorbound},
\begin{equation*}
    Q^1(s,a)-\frac{2\gamma\epsilon K_Q}{1-\gamma}(1-\gamma)\leq\hat{Q}^1_\text{RCB}(s,a)\leq Q^1(s,a).
\end{equation*}
Suppose when $k=i$, we have
\begin{equation}
\label{eq:suppose}
    \begin{aligned}
            Q^i(s,a)-\frac{2\gamma\epsilon K_Q}{1-\gamma}(1-\gamma^i)\leq\hat{Q}^i_\text{RCB}(s,a)\leq Q^i(s,a),\quad i\in\mathbb{Z}^+.
    \end{aligned}
\end{equation}
For $k=i+1$, we have
\begin{equation*}
    \mathcal{T}\hat{Q}^i_{\text{RCB}}(s,a)-2\gamma\epsilon K_Q\leq\hat{Q}^{i+1}_{\text{RCB}}(s,a)=\mathcal{T}_{\text{RCB}}\hat{Q}^i_\text{RBC}(s,a)\leq\mathcal{T}\hat{Q}^i_{\text{RCB}}(s,a).
\end{equation*}
On the one hand, we have
\begin{equation*}
\begin{aligned}
    &\quad\mathcal{T}\hat{Q}^i_\text{RCB}(s,a)\\
    &\geq \mathcal{T}\left(Q^i(s,a)-\frac{2\gamma\epsilon K_Q}{1-\gamma}(1-\gamma^i)\right)\\
    &=r(s,a)+\gamma\mathbb{E}_{s^\prime\sim P_\text{tar}}\left[\max\limits_{a^\prime\sim\hat{\mu}(\cdot|s^\prime)}\left(Q^i(s^\prime,a^\prime)-\frac{2\gamma\epsilon K_Q}{1-\gamma}(1-\gamma^i)\right)\right] \\
    &=r(s,a)+\gamma\mathbb{E}_{s^\prime\sim P_\text{tar}}\left[\max\limits_{a^\prime\sim\hat{\mu}(\cdot|s^\prime)}Q^i(s^\prime,a^\prime)\right]-\gamma\frac{2\gamma\epsilon K_Q}{1-\gamma}(1-\gamma^i) \\
    &=\mathcal{T}Q^i(s,a)-\gamma\frac{2\gamma\epsilon K_Q}{1-\gamma}(1-\gamma^i)\\
    &=Q^{i+1}(s,a)-\gamma\frac{2\gamma\epsilon K_Q}{1-\gamma}(1-\gamma^i).
\end{aligned}
\end{equation*}
Therefore, we have
\begin{equation}
\label{eq:lowerbound}
\begin{aligned}
    &\quad\hat{Q}^{i+1}_\text{RCB}(s,a)\\
    &\geq Q^{i+1}(s,a)-\gamma\frac{2\gamma\epsilon K_Q}{1-\gamma}(1-\gamma^i) - 2\gamma\epsilon K_Q\\
    &=Q^{i+1}(s,a)-\frac{2\gamma\epsilon K_Q}{1-\gamma}(1-\gamma^{i+1}).
\end{aligned}
\end{equation}
On the other hand,
\begin{equation*}
    \mathcal{T}\hat{Q}^i_{\text{RCB}}(s,a)\leq\mathcal{T}Q^i(s,a)=Q^{i+1}(s,a).
\end{equation*}
Therefore, we have
\begin{equation}
\label{eq:upperbound}
    \hat{Q}^{i+1}_\text{RCB}(s,a)\leq Q^{i+1}(s,a).
\end{equation}
Combining the results of Equation~\ref{eq:lowerbound} and Equation~\ref{eq:upperbound}, we have
\begin{equation*}
    Q^{i+1}(s,a)-\frac{2\gamma\epsilon K_Q}{1-\gamma}(1-\gamma^{i+1})\leq\hat{Q}^{i+1}_\text{RCB}(s,a)\leq Q^{i+1}(s,a).
\end{equation*}
Hence, Equation~\ref{eq:suppose} still holds for $k=i+1$. Therefore, Equation~\ref{eq:suppose} holds for all $k\in\mathbb{Z}^+$. If $k$ is large enough, such that $\hat{Q}_\text{RCB}$ and $Q(s,a)$ converge to the fixed point, then we have
\begin{equation*}
    Q^\star_{\hat{\mu}}(s,a)-\frac{2\gamma\epsilon K_Q}{1-\gamma}\leq\hat{Q}_\text{RCB}(s,a)\leq Q^\star_{\hat{\mu}}(s,a),
\end{equation*}
which concludes the proof.
\end{proof}

\subsection{Proof of Proposition~\ref{prop:test_robust}}


\begin{proof}
    The learned value function $\widehat{V}_\text{RCB}(s)$ by repeatedly applying $\mathcal{T}_\text{RCB}$ satisfies:
    \begin{equation*}
        \widehat{V}_\text{RCB}(s) = r(s,a^\star)+\gamma\mathbb{E}_{s^\prime\sim P_\text{src}}\left[\inf\limits_{\bar{s}\in U_\epsilon(s^\prime)}\widehat{V}_\text{RCB}(\bar{s})\right]
    \end{equation*}
where $a^\star=\pi_\text{RCB}(\cdot|s)=\arg\max\limits_{a\sim\hat{\mu}(\cdot|s)}\left[r(s,a)+\gamma\mathbb{E}_{s^\prime\sim P_\text{src}}\left[\inf\limits_{\bar{s}\in U_\epsilon(s^\prime)}\widehat{V}_\text{RCB}(\bar{s})\right]\right]$. 

Let $c=\max\left\{c\,| U_c(s^\prime_\text{tar})\subseteq U_\epsilon(s^\prime_\text{src}),s^\prime_\text{tar}\sim P_\text{tar}(\cdot|s,a),(s,a,s^\prime_\text{src})\sim\mathcal{D}_\text{src}\right\}$. Then we have
\begin{equation*}
    \mathbb{E}_{s^\prime\sim P_\text{tar}}\left[\inf\limits_{\bar{s}\in U_c(s^\prime)}\widehat{V}_\text{RCB}(\bar{s})\right] \geq \mathbb{E}_{s^\prime\sim P_\text{src}}\left[\inf\limits_{\bar{s}\in U_\epsilon(s^\prime)}\widehat{V}_\text{RCB}(\bar{s})\right],
\end{equation*}
since $U_\epsilon(s^\prime_\text{src})$ has a broader region than $U_c(s^\prime_\text{tar})$. Given any dynamics $P$ which satisfies $\mathcal{W}(P(\cdot|s,a), P_\text{tar}(\cdot|s,a))\leq c$, we can iteratively evaluate $\pi_\text{RCB}$ within $P$:
\begin{equation*}
\begin{aligned}
    V^{k+1}(s) &= \mathcal{T}^{\pi_\text{RCB}}_P (V^k(s))\\
    &=r(s,a\sim\pi_\text{RCB}(\cdot|s))+\gamma\mathbb{E}_{s^\prime\sim P(\cdot|s,a)}\left(V^k(s^\prime)\right)\\
    &\geq r(s,a\sim\pi_\text{RCB}(\cdot|s)) + \gamma\mathbb{E}_{s^\prime\sim P_\text{tar}}\left[\inf\limits_{\bar{s}\in U_c(s^\prime)}V^k(\bar{s})\right].
\end{aligned}
\end{equation*}
If we initialize $V^0(s)$ as $\widehat{V}_\text{RCB}(s)$, we have
\begin{equation*}
    \begin{aligned}
        V^1(s)&=\mathcal{T}^{\pi_\text{RCB}}_P(V^0(s))\\
        &\geq r(s,a\sim\pi_\text{RCB}(\cdot|s)) + \gamma\mathbb{E}_{s^\prime\sim P_\text{tar}}\left[\inf\limits_{\bar{s}\in U_c(s^\prime)}\widehat{V}_\text{RCB}(\bar{s})\right]\\
        &\geq r(s,a\sim\pi_\text{RCB}(\cdot|s)) + \gamma\mathbb{E}_{s^\prime\sim P_\text{src}}\left[\inf\limits_{\bar{s}\in U_\epsilon(s^\prime)}\widehat{V}_\text{RCB}(\bar{s})\right]\\
        &=\widehat{V}_\text{RCB}(s)\\
        &=V^0(s).
    \end{aligned}
\end{equation*}
According to the monotonicity of the Bellman operator, we have $V^2(s)=\mathcal{T}^{\pi_\text{RCB}}_P(V^1(s))\geq\mathcal{T}^{\pi_\text{RCB}}_P(V^0(s))=V^1(s)$. Similarly, we can get $V^k(s)\geq V^{k-1}(s)\geq\cdots\geq V^0(s)=\hat{V}_\text{RCB}(s)$. Given that $\mathcal{T}^{\pi_\text{RCB}}_P$ is a $\gamma$-contraction, $V^{\pi_\text{RCB}}_P(s)=\lim\limits_{k\rightarrow\infty}V^k(s)\geq \widehat{V}_\text{RCB}(s)$, which proves Equation~\ref{eq:test_robust} and conclude the proof.
\end{proof}

\subsection{Proof of Proposition~\ref{prop:limitedover}}


\begin{proof}
    We draw the proof inspiration from~\citep{lyu2022mildly}. Given that $\sup_{s,a} D_{TV}(\widehat{P}_\text{tar}(\cdot|s,a),P_\text{tar}(\cdot|s,a))\leq \epsilon<\frac{1}{2}$, we have
    \begin{equation*}
        \begin{aligned}
            1&>2\epsilon\\&\geq2\sup_{s,a} D_{TV}(\widehat{P}_\text{tar}(\cdot|s,a),P_\text{tar}(\cdot|s,a))\\
            &\geq\sum_{s^\prime}\left|\widehat{P}_\text{tar}(s^\prime|s,a)-P_\text{tar}(s^\prime|s,a)\right|\\
            &=\sum_{s^\prime\in\operatorname{supp}(P_\text{tar}(\cdot|s,a))}\left|\widehat{P}_\text{tar}(s^\prime|s,a)-P_\text{tar}(s^\prime|s,a)\right| + \sum_{s^\prime\notin\operatorname{supp}(P_\text{tar}(\cdot|s,a))}\left|\widehat{P}_\text{tar}(s^\prime|s,a)-P_\text{tar}(s^\prime|s,a)\right|\\
            &\geq\sum_{s^\prime\notin\operatorname{supp}(P_\text{tar}(\cdot|s,a))}\widehat{P}_\text{tar}(s^\prime|s,a).
        \end{aligned}
    \end{equation*}
Note that the maximum $Q$ value $Q_\text{max}\leq\frac{r_\text{max}}{1-\gamma}$. Thus, we have
\begin{equation*}
    \begin{aligned}
        &\inf_{\{s^\prime_i\}^N\sim\widehat{P}_\text{tar}(\cdot|s,a)}\left[\max\limits_{a^\prime_i\sim\hat{\mu}(\cdot|s^\prime_i)}Q(s^\prime_i, a^\prime_i)\right]\qquad\qquad\qquad\qquad\qquad\qquad\qquad\qquad\qquad\qquad\\
        &\leq\mathbb{E}_{\{s^\prime_i\}^N\sim\widehat{P}_\text{tar}(\cdot|s,a)}\left[\max\limits_{a^\prime_i\sim\hat{\mu}(\cdot|s^\prime_i)}Q(s^\prime_i, a^\prime_i)\right]\\
        &\leq \mathbb{P}\left(\bigcap_i \left\{s^\prime \in \operatorname{supp}(P_\text{tar}(\cdot|s,a))\right\}\right) \cdot \mathbb{E}_{s^\prime\sim P_\text{tar}(\cdot|s,a)}\left[\max_{a^\prime\sim\hat{\mu}(\cdot|s^\prime)} Q(s^\prime, a^\prime)\right] \\ &\quad + \mathbb{P}\left(\bigcup _i\left\{s^\prime \notin \operatorname{supp}(P_\text{tar}(\cdot|s,a))\right\}\right) \cdot Q_{\max}\\
        &\leq \mathbb{E}_{s^\prime\sim P_\text{tar}(\cdot|s,a)}\left[\max_{a^\prime\sim\hat{\mu}(\cdot|s^\prime)} Q(s^\prime, a^\prime)\right] + \left(1-\left(\mathbb{P}(s_1^\prime\in\operatorname{supp}(P_\text{tar}(\cdot|s,a)))\right)^N\right)\frac{r_\text{max}}{1-\gamma}\\
        &\leq \mathbb{E}_{s^\prime\sim P_\text{tar}(\cdot|s,a)}\left[\max_{a^\prime\sim\hat{\mu}(\cdot|s^\prime)} Q(s^\prime, a^\prime)\right] + \left(1-(1-2\epsilon)^N\right)\frac{r_\text{max}}{1-\gamma},
    \end{aligned}
\end{equation*}
where the first inequality uses the law of total expectation. Thus, we conclude the proof.
\end{proof}

\section{Experimental Settings}

In this section, we introduce the detailed environmental settings missing in the main text.

\subsection{Tasks and Datasets}
\label{appendix:taskdataset}
\textbf{Target domain and datasets.} We directly adopt the four locomotion tasks from MuJoCo Engine~\citep{todorov2012mujoco} as the target domain tasks: \texttt{halfcheetah-v2}, \texttt{hopper-v2}, \texttt{walker2d-v2}, \texttt{ant-v3}. For the target domain datasets, we reuse the datasets in D4RL~\citep{fu2020d4rl} for each task. Since cross-domain offline RL only allows a small quantity of target domain data, we sample $10\%$ data from the original D4RL datasets
as the target domain datasets. The target domain datasets consist of four data qualities for each task: the \textbf{medium} datasets that contain samples collected by an early-stopped SAC policy; the \textbf{medium-replay} datasets that represent the replay buffer of the medium-level SAC agent; the \textbf{medium-expert} datasets that mix the medium data and expert data at a 50-50 ratio; the \textbf{expert} datasets that are collected by an SAC policy trained to the expert level. The trained policy is evaluated in the target domain, and the evaluation metric we use is \texttt{Normalized Score} in D4RL:

\begin{equation*}
    \text{Normalized Score}=\frac{J_\pi-J_\text{random}}{J_\text{expert}-J_\text{random}}\times100\%,
\end{equation*}
where $J_\pi$ is the return acquired by the trained policy in the target domain, and $J_\text{expert}$ and $J_\text{random}$ are the returns acquired by the expert policy and the random policy in the target domain, respectively. 

\textbf{Source domain and datasets.} To simulate the source domain with dynamics shifts, we consider the four MuJoCo tasks (\texttt{halfcheetah-v2}, \texttt{hopper-v2}, \texttt{walker2d-v2}, \texttt{ant-v3}) with kinematic shifts and morphology shifts introduced as the source domain. The kinematic shifts refer to some joints of the robot being broken and unable to rotate, while the morphology shifts indicate that the robot's morphology is modified, differing from the target domain. To illustrate this more clearly, we visualize the robots in both the target and source domains for all four tasks in Figure~\ref{fig:visualization}. We also provide detailed code-level modifications for implementing the dynamics shifts in the following section.

For the source domain datasets, we follow a data collection process similar to D4RL. Specifically, we train an SAC policy in the source domain for 1M environmental steps and log policy checkpoints at different steps for trajectory rollouts. The \textbf{medium} datasets are collected using a logged policy that achieves approximately half the performance of the expert policy. The \textbf{medium-replay} datasets consist of the logged replay buffer from the medium-level agent. The \textbf{expert} datasets are collected using the final policy checkpoint, while the \textbf{medium-expert} datasets are a 50-50 mixture of medium-level and expert-level data. Note that all the source domain datasets contain about 1M samples, whereas the target domain datasets contain much fewer samples.

\begin{figure}[t]
    \centering
    \includegraphics[width=1.0\linewidth]{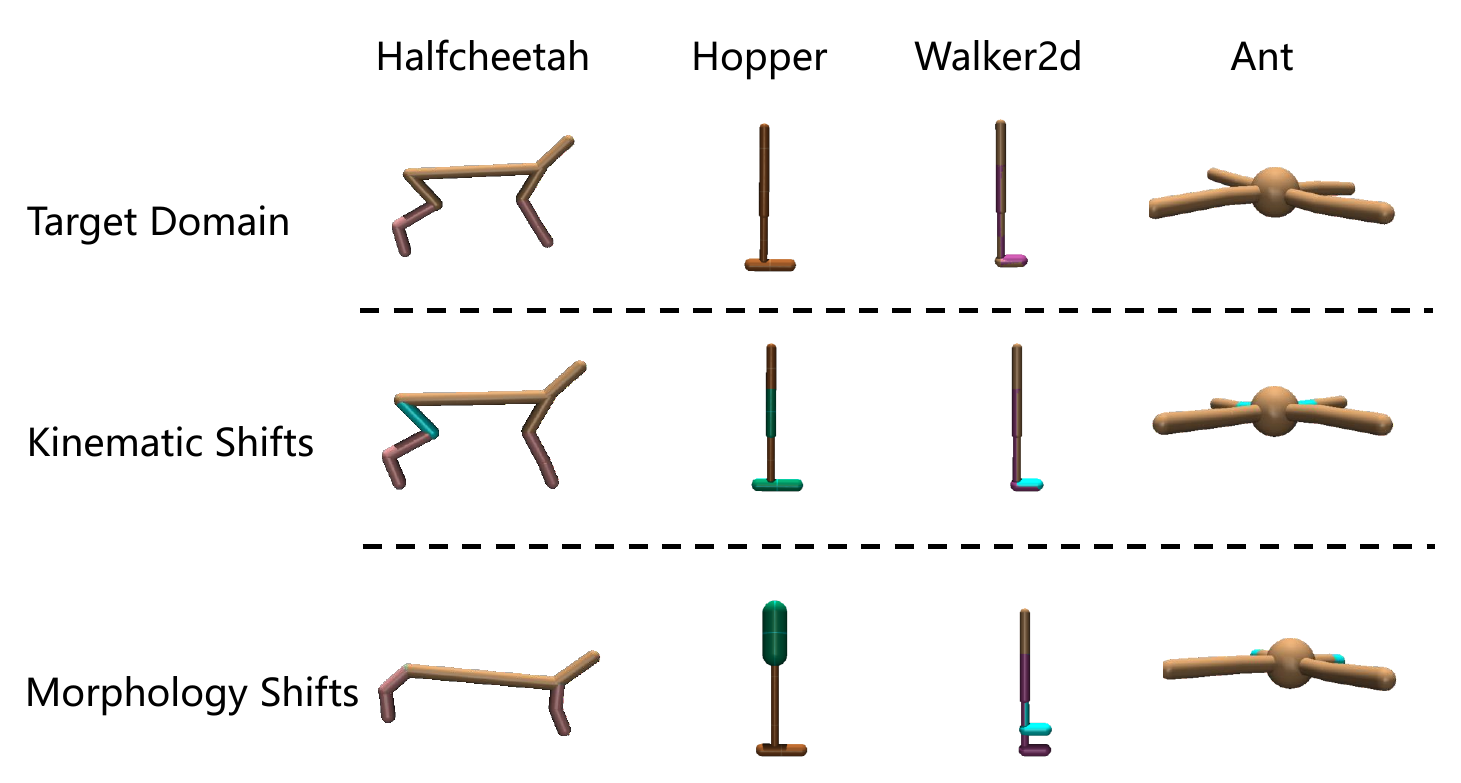}
    \caption{Visualization of the target domains and source domains with kinematic shifts and morphology shifts, across four tasks (\texttt{halfcheetah}, \texttt{hopper}, \texttt{walker2d}, \texttt{ant}).} 
    \label{fig:visualization}
\end{figure}

\subsection{Kinematic Shifts Realization}
\label{appendix:realizationkinematic}
To simulate the kinematic shifts in the source domain, we modify the \texttt{xml} files of the original environments. Specifically, we change the rotation angle of some joints of the simulated robot for different tasks:

\textbf{\textit{halfcheetah-kinematic}:} The rotation angle of the joint on the thigh of the robot's back leg is modified from $[-0.52,1.05]$ to $[-0.0052,0.0105]$.

\begin{lstlisting}[language=Python]
# broken back thigh joint
<joint axis="0 1 0" damping="6" name="bthigh" pos="0 0 0" range="-.0052 .0105" stiffness="240" type="hinge"/>
\end{lstlisting}

\textbf{\textit{hopper-kinematic}:} The rotation angle of the head joint is modified from $[-150,0]$ to $[-0.15,0]$ and the rotation angle of the foot joint is modified from $[-45,45]$ to $[-18,18]$.

\begin{lstlisting}[language=Python]
# broken head joint
<joint axis="0 -1 0" name="thigh_joint" pos="0 0 1.05" range="-0.15 0" type="hinge"/>
# broken foot joint
<joint axis="0 -1 0" name="foot_joint" pos="0 0 0.1" range="-18 18" type="hinge"/>
\end{lstlisting}

\textbf{\textit{walker2d-kinematic}:} The rotation angle of the right foot joint is modified from $[-45,45]$ to $[-0.45,0.45]$.

\begin{lstlisting}[language=Python]
# broken right foot joint
<joint axis="0 -1 0" name="foot_joint" pos="0 0 0.1" range="-0.45 0.45" type="hinge"/>
\end{lstlisting}

\textbf{\textit{ant-kinematic}:} The rotation angles of the joints on the hip of two front legs are modified from $[-30,30]$ to $[-0.3,0.3]$.

\begin{lstlisting}[language=Python]
# broken hip joints of front legs
<joint axis="0 0 1" name="hip_1" pos="0.0 0.0 0.0" range="-0.3 0.3" type="hinge"/>
<joint axis="0 0 1" name="hip_2" pos="0.0 0.0 0.0" range="-0.3 0.3" type="hinge"/>
\end{lstlisting}

\subsection{Morphology Shifts Realization}
\label{appendix:morphologyrealization}
Akin to the kinematic shifts, we modify the \texttt{xml} files to simulate morphology shifts:

\textbf{\textit{halfcheetah-morph}:} The sizes of the back thigh and the forward thigh are modified.

\begin{lstlisting}[language=Python]
# back thigh
<geom fromto="0 0 0 -0.0001 0 -0.0001" name="bthigh" size="0.046" type="capsule"/>
<body name="bshin" pos="-0.0001 0 -0.0001">
# front thigh
<geom fromto="0 0 0 0.0001 0 0.0001" name="fthigh" size="0.046" type="capsule"/>
<body name="fshin" pos="0.0001 0 0.0001">
\end{lstlisting}

\textbf{\textit{hopper-morph}:} The head size of the robot is modified.

\begin{lstlisting}[language=Python]
# head size
<geom friction="0.9" fromto="0 0 1.45 0 0 1.05" name="torso_geom" size="0.125" type="capsule"/>
\end{lstlisting}

\textbf{\textit{walker2d-morph}:} The thigh on the right leg of the robot is modified.

\begin{lstlisting}[language=Python]
# right leg
<body name="thigh" pos="0 0 1.05">
<joint axis="0 -1 0" name="thigh_joint" pos="0 0 1.05" range="-150 0" type="hinge"/>
<geom friction="0.9" fromto="0 0 1.05 0 0 1.045" name="thigh_geom" size="0.05" type="capsule"/>
<body name="leg" pos="0 0 0.35">
  <joint axis="0 -1 0" name="leg_joint" pos="0 0 1.045" range="-150 0" type="hinge"/>
  <geom friction="0.9" fromto="0 0 1.045 0 0 0.3" name="leg_geom" size="0.04" type="capsule"/>
  <body name="foot" pos="0.2 0 0">
    <joint axis="0 -1 0" name="foot_joint" pos="0 0 0.3" range="-45 45" type="hinge"/>
    <geom friction="0.9" fromto="-0.0 0 0.3 0.2 0 0.3" name="foot_geom" size="0.06" type="capsule"/>
  </body>
</body>
</body>
\end{lstlisting}

\textbf{\textit{ant-morph}:} The size of the robot's two front legs is reduced.

\begin{lstlisting}[language=Python]
# front leg 1
<geom fromto="0.0 0.0 0.0 0.1 0.1 0.0" name="left_ankle_geom" size="0.08" type="capsule"/>
# front leg 2
<geom fromto="0.0 0.0 0.0 -0.1 0.1 0.0" name="right_ankle_geom" size="0.08" type="capsule"/>
\end{lstlisting}

\section{Implementation Details}
\label{appendix:implementation}
In this section, we provide the implementation details for the baselines we use in our experiments and our method, DROCO. 

\subsection{Baselines}

\textbf{$\text{IQL}^\star$:} $\text{IQL}^\star$ is the cross-domain adaptation of IQL~\citep{kostrikov2021offline}. $\text{IQL}^\star$ follows the same algorithmic procedure except being trained on both target and source domain datasets. The state value function is trained by expectile regression:
\begin{equation*}
    \mathcal{L}_{V}=\mathbb{E}_{(s,a)\sim \mathcal{D}_{\text{src}}\cup\mathcal{D}_{\text{tar}}}\left[L_2^\tau(Q_{\theta^\prime}(s,a)-V_{\psi}(s))\right],
\end{equation*}
where $L_2^\tau(u)=\left|\tau-\mathbb{I}(u<0)\right|u^2$, $\mathbb{I}(\cdot)$ is the indicator function, and $\theta^\prime$ is the parameter of the target network. This expectile regression enables learning an in-sample optimal value function. Subsequently, the state-action value function is updated by:
\begin{equation*}
    \mathcal{L}_{Q}=\mathbb{E}_{(s,a,r,s^\prime)\sim\mathcal{D}_{\text{src}}\cup\mathcal{D}_{\text{tar}}}\left[(r(s,a)+\gamma V_\psi(s^\prime)-Q_\theta(s,a))^2\right].
\end{equation*}
Then the advantage value is computed as $A(s,a)=Q(s,a)-V(s,a)$. Based on this, the policy is obtained through exponential advantage-weighted behavior cloning:
\begin{equation*}
    \mathcal{L}_{\pi}=-\mathbb{E}_{(s,a)\sim \mathcal{D}_{\text{src}}\cup\mathcal{D}_{\text{tar}}}\left[\exp(\beta\times A(s,a))\log\pi_{\phi}(a|s)\right],
\end{equation*}
where $\beta$ is the inverse temperature coefficient. We implement $\text{IQL}^\star$ based on the official codebase\footnote{\textcolor{brown}{https://github.com/ikostrikov/implicit\_q\_learning.git}} of IQL.

\textbf{$\text{CQL}^\star$:} the cross-domain version of CQL~\citep{kumar2020conservative} similar to $\text{IQL}^\star$. CQL learns a conservative value function that lower bounds the true value function:
\begin{equation*}
    \mathcal{L}_Q=\alpha\mathbb{E}_{s\in\mathcal{D}}\left[\log\sum_a \exp (Q(s,a))-\mathbb{E}_{a\sim\mu}[Q(s,a)]\right]+\frac{1}{2}\mathbb{E}_{(s,a,s^\prime)\in\mathcal{D}}\left[(Q(s,a)-\mathcal{T}Q(s,a))^2\right]
\end{equation*}
The policy $\pi$ is then optimized with SAC~\citep{haarnoja2018soft}. We implement $\text{CQL}^\star$ based on the implementation of CORL\footnote{\textcolor{brown}{https://github.com/tinkoff-ai/CORL.git}}.

\textbf{BOSA:} BOSA~\citep{liu2024beyond} identifies two key challenges in cross-domain offline RL: the state-action OOD problem and the dynamics OOD problem. To address these, BOSA proposes two support constraints. Specifically, BOSA handles the OOD state-action problem by supported policy optimization, and mitigates the OOD dynamics problem by supported value optimization. The critic is updated through supported value optimization:
\begin{equation*}
    \mathcal{L}_Q=\mathbb{E}_{(s,a)\sim\mathcal{D}_{\text{src}}}\left[Q_{\theta_i}(s,a)\right] + \mathbb{E}_{\substack{(s,a,r,s^\prime)\sim\mathcal{D}_{\text{src}}\cup\mathcal{D}_\text{tar},\\ a^\prime\sim\pi_\phi(s^\prime)}}\left[\mathbb{I}(\hat{P}_{tar}(s'|s,a) > \epsilon)(Q_{\theta_i}(s,a) - y)^2\right],
\end{equation*}
where $\mathbb{I}(\cdot)$ is the indicator function, and $\hat{P}_{\text{tar}}(s^\prime|s,a)$ is the estimated target domain dynamics, and $\epsilon$ is the threshold coefficient. The policy in BOSA is updated by supported policy optimization to mitigate the OOD action issue:
\begin{equation*}
    \mathcal{L}_\pi=\mathbb{E}_{s \sim \mathcal{D}_{\text{src}} \cup \mathcal{D}_{\text{tar}}, \ a \sim \pi_{\phi}(s)} \left[Q_{\theta_i}(s, a)\right], \quad \text{s.t. } \mathbb{E}_{s \sim \mathcal{D}_{\text{src}} \cup \mathcal{D}_{\text{tar}}} \left[\hat{\pi}_{\text{mix}}(\pi_{\phi}(s) \mid s)\right] > \epsilon',
\end{equation*}
where $\epsilon^\prime$ is the threshold coefficient, and $\hat{\pi}_{\phi_{\text{mix}}}(\cdot|s)$ is the behavior policy of the mixed datasets $\mathcal{D}_{\text{src}}\cup\mathcal{D}_{\text{tar}}$ learned with CVAE~\citep{kingma2013auto}. We adopt the BOSA implementation from ODRL\footnote{\textcolor{brown}{https://github.com/OffDynamicsRL/off-dynamics-rl.git}} benchmark~\citep{lyu2024odrlabenchmark}, which provides reliable implementations for various off-dynamics RL algorithms.

\textbf{DARA.} DARA~\citep{liu2022dara} employs dynamics-aware reward modification to achieve dynamics adaptation, extending DARC~\citep{eysenbach2020off} to the offline setting. Specifically, DARA trains two domain classifiers $q_{\theta_{SAS}}(\text{target}|s_t,a_t,s_{t+1})$ and $q_{\theta_{SA}}(\text{target}|s_t,a_t)$ as follows:
\begin{equation*}
    \begin{aligned}
    \mathcal{L}_{\theta_{SAS}}&=\mathbb{E}_{\mathcal{D}_{\text{tar}}}\left[\log q_{\theta_{SAS}}(\text{target}|s_t, a_t, s_{t+1})\right]+\mathbb{E}_{\mathcal{D}_{\text{src}}}\left[\log(1- q_{\theta_{SAS}}(\text{target}|s_t, a_t, s_{t+1}))\right] \\
    \mathcal{L_{\theta_{SA}}}&=\mathbb{E}_{\mathcal{D}_{\text{tar}}}\left[\log q_{\theta_{SA}}(\text{target}|s_t, a_t)\right]+\mathbb{E}_{\mathcal{D}_{\text{src}}}\left[\log(1- q_{\theta_{SA}}(\text{target}|s_t, a_t))\right].
    \end{aligned}
\end{equation*}
The domain classifiers are used to quantify the dynamics gap $\log\frac{P_{\mathcal{M}_{\text{tar}}}(s_{t+1}|s_t,a_t)}{P_{\mathcal{M}_{\text{src}}}(s_{t+1}|s_t,a_t)}$ between the source domain and the target domain according to Bayes' rule. Then the estimated dynamics gap serves as a penalty to the source domain rewards:
\begin{equation}
    \hat{r}_{\text{DARA}} = r - \lambda \times \delta_r, \quad \delta_r(s_t, a_t) = - \log \frac{q_{\theta_{\text{SAS}}}(\text{target} | s_t, a_t, s_{t+1}) q_{\theta_{\text{SA}}}(\text{source} | s_t, a_t)}{q_{\theta_{\text{SAS}}}(\text{source} | s_t, a_t, s_{t+1}) q_{\theta_{\text{SA}}}(\text{target} | s_t, a_t)},
\end{equation}
where $\lambda$ controls the intensity of the reward penalty. We use the DARA implementation from ODRL and follow the hyperparameter setting in the original paper: $\lambda$ is set to $0.1$, and the reward penalty is clipped within $[-10,10]$ for training stability.

\textbf{IGDF.} IGDF~\citep{wen2024contrastive} quantifies the domain discrepancy between the source domain and the target domain with contrastive representation learning. To facilitate effective knowledge transfer, IGDF implements data filtering to selectively share source domain samples exhibiting smaller dynamics gaps. Specifically, IGDF trains a score function $h(\cdot)$ using $(s,a,s^\prime_{\text{tar}})\sim\mathcal{D}_{\text{tar}}$ as the positive samples, and transitions $(s,a,s^\prime_{\text{src}})$ as the negative samples, where $(s,a)\sim\mathcal{D}_{\text{tar}}$ and $s^\prime_{\text{src}}\sim\mathcal{D}_{\text{src}}$. $h(\cdot)$ is optimized via the following contrastive learning objective:
\begin{equation*}
    \mathcal{L}=-\mathbb{E}_{(s,a,s^\prime_{\text{tar}})}\mathbb{E}_{s^\prime_{\text{src}}}\left[\log\frac{h(s,a,s^\prime_{\text{tar}})}{\sum_{s^\prime\in s^\prime_{\text{tar}}\cup s^\prime_{\text{src}}}h(s,a,s^\prime)}\right].
\end{equation*}

Based on the learned score function, IGDF proposes to selectively share source domain data for training value functions:
\begin{equation*}
    \mathcal{L}_Q = \frac{1}{2} \mathbb{E}_{\mathcal{D}_{\text{tar}}} \left[ (Q_{\theta} - \mathcal{T}Q_{\theta})^2 \right] + \frac{1}{2} \alpha \cdot h(s, a, s') \mathbb{E}_{(s, a, s') \sim \mathcal{D}_{\text{src}}} \left[ \mathbb{I}(h(s, a, s') > h_{\xi\%})(Q_{\theta} - \mathcal{T}Q_{\theta})^2 \right],
\end{equation*}
where $\mathbb{I}(\cdot)$ is the indicator function, $\alpha$ is the weighting coefficient, $\xi$ is the data selection ratio. We implement IGDF based on its official codebase\footnote{\textcolor{brown}{https://github.com/BattleWen/IGDF.git}}.

\textbf{OTDF.} OTDF~\citep{lyu2025cross} estimates the discrepancy between the source domain and target domain by computing the Wasserstein distance~\citep{peyre2019computational}:
\begin{equation}
\label{eq:ot}
\mathcal{W}(u, u') = \min_{\mu \in M} \sum_{t=1}^{|\mathcal{D}_{\text{src}}|} \sum_{t'=1}^{|\mathcal{D}_{\text{tar}}|} C(u_t, u'_{t'}) \cdot \mu_{t, t'},
\end{equation}
where $u=s_{\text{src}}\oplus a_{\text{src}} \oplus s'_{\text{src}}$, $u^\prime=s_{\text{tar}}\oplus a_{\text{tar}} \oplus s'_{\text{tar}}$ are the concatenated vectors, $C$ is the cost function and $M$ is the coupling matrices. After solving Equation~\ref{eq:ot} for the optimal coupling matrix $\mu^\star$, the OTDF computes the distance between a source domain sample and the target domain dataset via
\begin{equation*}
    d(u_t)=-\sum_{t^\prime=1}^{\left|\mathcal{D}_{\text{tar}}\right|}C(u_t,u_{t^\prime})\mu_{t,t^\prime}^\star,\quad u_t=(s_{\text{src}}^t,a^t_{\text{src}},(s^\prime_{\text{src}})^t)\sim\mathcal{D}_{\text{src}}.
\end{equation*}
Then the critic is updated by
\begin{equation*}
    \mathcal{L}_Q = \mathbb{E}_{\mathcal{D}_{\text{tar}}} \left[ (Q_\theta - \mathcal{T}Q_\theta)^2 \right] + \mathbb{E}_{(s, a, s') \sim \mathcal{D}_{\text{src}}} \left[ \exp(\alpha \times d) \mathbb{I}(d > d_{\%}) (Q_\theta - \mathcal{T}Q_\theta)^2 \right].
\end{equation*}
Furthermore, OTDF incorporates a policy regularization term that forces the policy to stay close to the support of the target dataset:
\begin{equation*}
    \widehat{\mathcal{L}_\pi}=\mathcal{L}_\pi-\beta\times\mathbb{E}_{s\sim\mathcal{D}_{\text{src}}\cup\mathcal{D}_{\text{tar}}}\log\pi_{\text{tar}}^b(\pi(\cdot|s)|s),
\end{equation*}
where $\mathcal{L}_\pi$ is the original policy optimization objective and $\beta$ is the weight coefficient, $\pi_\text{tar}^b$ is the behavior policy of the target domain dataset learned with a CVAE. We run the official code\footnote{\textcolor{brown}{https://github.com/dmksjfl/OTDF.git}} for OTDF in our experiments.

\subsection{Implementation details of DROCO}
\label{appendix:impledetails}
In this part, we provide more implementation details of DROCO omitted in the main text.

First, we model the target domain dynamics using a neural network that outputs a Gaussian distribution over the next state: $\widehat{P}_\psi(s^\prime|s,a)=\mathcal{N}\left(\mu_\psi(s,a),\Sigma_\psi(s,a)\right)$. We learn an ensemble of $N$ dynamics models $\{\widehat{P}_{\psi_i}=\mathcal{N}(\mu_{\psi_i},\Sigma_{\psi_i})\}_{i=1}^N$, with each model trained independently with maximum likelihood estimation (MLE) on the target domain dataset:
\begin{equation}
    \label{eq:mle}
    \mathcal{L}_{\psi_i}=\mathbb{E}_{(s,a,s^\prime)\in\mathcal{D}_\text{tar}}\left[\log \widehat{P}_{\psi_i}(s^\prime|s,a)\right].
\end{equation}

When we sample from the uncertainty set, we can directly sample from each dynamics model $\mathcal{N}(\mu_{\psi_i},\Sigma_{\psi_i})$ as the sampling points. We can then compute the value penalty and penalize the Q value of source domain data when leveraging IQL for policy optimization. Specifically, we perform expectile regression to train the V function:
\begin{equation*}
    \mathcal{L}_V(\eta)=\mathbb{E}_{(s,a)\sim\mathcal{D}_\text{src}\cup\mathcal{D}_\text{tar}}\left[\mathcal{L}_2^\tau(Q_\theta(s,a)-V_\eta(s))\right],
\end{equation*}
where $\mathcal{L}_2^\tau(u)=|\tau-\mathbb{I}(u<0)|u^2$ and $\tau\in(0,1)$. For $\tau\approx1$, $V_\eta$ can capture the in-sample maximal Q~\citep{kostrikov2021offline}: $V_\eta(s)\approx \max_{a\sim\hat{\mu}(\cdot|s)}Q(s,a)$. We can then practically compute the value penalty as
\begin{equation}
\label{eq:practicalpenaltyappendix}  u(s,a,s^\prime)=\mathbb{I}\left(s^\prime\sim P_{\text{src}}(\cdot|s,a)\right)\cdot\bigg(V(s^\prime)-\inf_{\{s^\prime_i\}^N\sim\widehat{P}_{\psi_i}(\cdot|s,a)}\left[V(s^\prime_i)\right]\bigg),
\end{equation}
and the practical Bellman target can be written as
\begin{equation}
    \widehat{\mathcal{T}}_\mathrm{RCB}Q(s,a)=r(s,a)+\gamma\mathbb{E}_{s^\prime\sim P_\mathcal{M}(\cdot|s,a)}\left[V(s^\prime)-\beta\cdot u(s,a,s^\prime)\right].
\end{equation}
Then, we incorporate Huber loss and have the following Q training loss:
\begin{equation}
\label{eq:Huberappendix}
\mathcal{L}_Q(\theta)=\mathbb{E}_{\mathcal{D}_\mathrm{src}}\left[l_\delta \left(Q_\theta(s,a)-\widehat{\mathcal{T}}_\text{RCB}Q_\theta(s,a)\right)\right] + \frac{1}{2}\mathbb{E}_{\mathcal{D}_\mathrm{tar}}\left[\left(Q_\theta(s,a)-\mathcal{T}Q_\theta(s,a)\right)^2\right],
\end{equation}
where $l_\delta$ is the Huber loss. The final step is policy learning. We follow IQL and utilize exponential advantage-weighted imitation learning to extract the policy:
\begin{equation*}
    \mathcal{L}_\pi(\phi)=-\mathbb{E}_{\mathcal{D}_\text{src}\cup\mathcal{D}_\text{tar}}\left[\exp(Q(s,a)-V(s))\log\pi_\phi(a|s)\right].
\end{equation*}
We show the detailed pseudocode of DROCO in Algorithm~\ref{alg:droco_detail}.

\begin{algorithm}[htbp]
\caption{Dual-Robust Cross-domain Offline RL (DROCO)}
\label{alg:droco_detail}
\begin{algorithmic}[1]
\STATE \textbf{Require:} Source domain offline dataset $\mathcal{D}_{\text{src}}$, target domain offline dataset $\mathcal{D}_{\text{tar}}$, mixed offline dataset $\mathcal{D}_{\text{mix}}$
\STATE \textbf{Initialization:} Policy network $\pi_\phi$, value network $V_\eta$, $Q$ network $Q_\theta$, ensemble dynamics model $\widehat{P}_{\psi}=\{\widehat{P}_{\psi_i}\}_{i=1}^N$, penalty coefficient $\beta$, transition threshold $\delta$ for Huber loss
\STATE \textbf{// Train the ensemble dynamics model}
\FOR{each model gradient step}
\FOR{each ensemble member $\hat{P}_{\psi_i}$}
\STATE Compute loss $\mathcal{L}_{\psi_i}=\mathbb{E}_{(s,a,s^\prime)\in\mathcal{D}_\text{tar}}\left[\log \widehat{P}_{\psi_i}(s^\prime|s,a)\right]$
\STATE Update $\widehat{P}_{\psi_i}$ using $\mathcal{L}_{\psi_i}$
\ENDFOR
\ENDFOR
\STATE \textbf{// TD Learning}
\FOR{each gradient step}
    \STATE Sample $b_{\text{src}} := \{(s, a, r, s')\}$ from $\mathcal{D}_{\text{src}}$
    \STATE Sample $b_{\text{tar}} := \{(s, a, r, s')\}$ from $\mathcal{D}_{\text{tar}}$
    \STATE \textbf{// Optimize the $V_\beta$ function}
    \STATE Compute loss $\mathcal{L}_V$:
    \STATE \quad $\mathcal{L}_V = \mathbb{E}_{(s, a) \sim \mathcal{D}_{\text{src}} \cup \mathcal{D}_{\text{tar}}} \left[ \mathcal{L}_2^\tau\left( Q_\theta(s, a) - V_\eta(s) \right) \right]$
    \STATE Update $V_\eta$ using $\mathcal{L}_V$
    \STATE \textbf{// Compute the value penalty}
    \STATE compute $u(s,a,s^\prime)=\mathbb{I}\left(s^\prime\sim P_{\text{src}}(\cdot|s,a)\right)\cdot\bigg(V(s^\prime)-\inf_{\{s^\prime_i\}^N\sim\widehat{P}_{\psi_i}(\cdot|s,a)}\left[V(s^\prime_i)\right]\bigg)$
    \STATE \textbf{// Optimize the $Q_\theta$ function}
    \STATE Compute loss $\mathcal{L}_Q$:
    \STATE \quad $\mathcal{L}_Q = \frac{1}{2}\cdot\mathbb{E}_{(s, a, r, s') \sim \mathcal{D}_{\text{tar}}} \left[ \left( Q_\theta(s, a) - (r + \gamma V_\eta(s')) \right)^2 \right]$
    \STATE \quad $+ \frac{1}{2}\cdot\mathbb{E}_{(s, a, r, s') \sim \mathcal{D}_{\text{src}}} \left[l_\delta\left( Q_\theta(s, a) - (r + \gamma V_\eta(s') -\beta u(s,a,s^\prime))\right) \right]$
    \STATE Update $Q_\theta$ using $\mathcal{L}_Q$
    \STATE \textbf{// Update target network}
    \STATE Update target network parameters: $\theta' \leftarrow (1 - \mu) \theta + \mu \theta'$
    \STATE \textbf{// Policy Extraction (AWR)}
    \STATE Compute advantage $A(s, a) = Q_\theta(s, a) - V_\eta(s)$
    \STATE Optimize policy network $\pi_\eta$ using advantage-weighted regression (AWR):
    \STATE \quad $\mathcal{L}_\pi = \mathbb{E}_{(s, a) \sim \mathcal{D}_{\text{src}} \cup \mathcal{D}_{\text{tar}}} \left[ \exp(\alpha A(s, a)) \log \pi_\phi(a|s) \right]$
\ENDFOR
\end{algorithmic}
\end{algorithm}

\section{Extended Experimental Results}

\subsection{Extended Results of Motivation Example}

In Section~\ref{sec:motivation}, we demonstrate our motivation with a simple example. In this part, we provide more details and results for the motivation example.

The source and target domains are \texttt{hopper-kinematic-v2} and \texttt{hopper-v2} respectively, with their corresponding datasets being \texttt{hopper-kinematic-expert} and \texttt{hopper-expert}.
Figure~\ref{fig:motivation} in Section~\ref{sec:motivation} demonstrates performance across different target domain data sizes under three test-time kinematic perturbation levels (easy, medium, hard), implemented as in~\citep{lyu2024odrlabenchmark}. We further evaluate the trained IGDF under morphology perturbations and min-Q perturbations (with other settings unchanged), presenting results in Figure~\ref{fig:motivation_appendix}.

The results clearly show that with only $10\%$ D4RL data, IGDF's robustness to dynamics perturbations is significantly weaker compared to using $100\%$ D4RL data. Notably, under easy-level morphology perturbations, IGDF with $10\%$ D4RL data exhibits a $66.9\%$ performance drop, versus only $21.8\%$ degradation with $100\%$ data. These findings, combined with the results in Section~\ref{sec:motivation}, validate our motivation that cross-domain offline RL is particularly sensitive to dynamics perturbations when limited target domain data is available, underscoring the need for enhanced test-time robustness.

\begin{figure}
    \centering
	\subfigure[Morphology Perturbation] {\includegraphics[width=.44\textwidth]{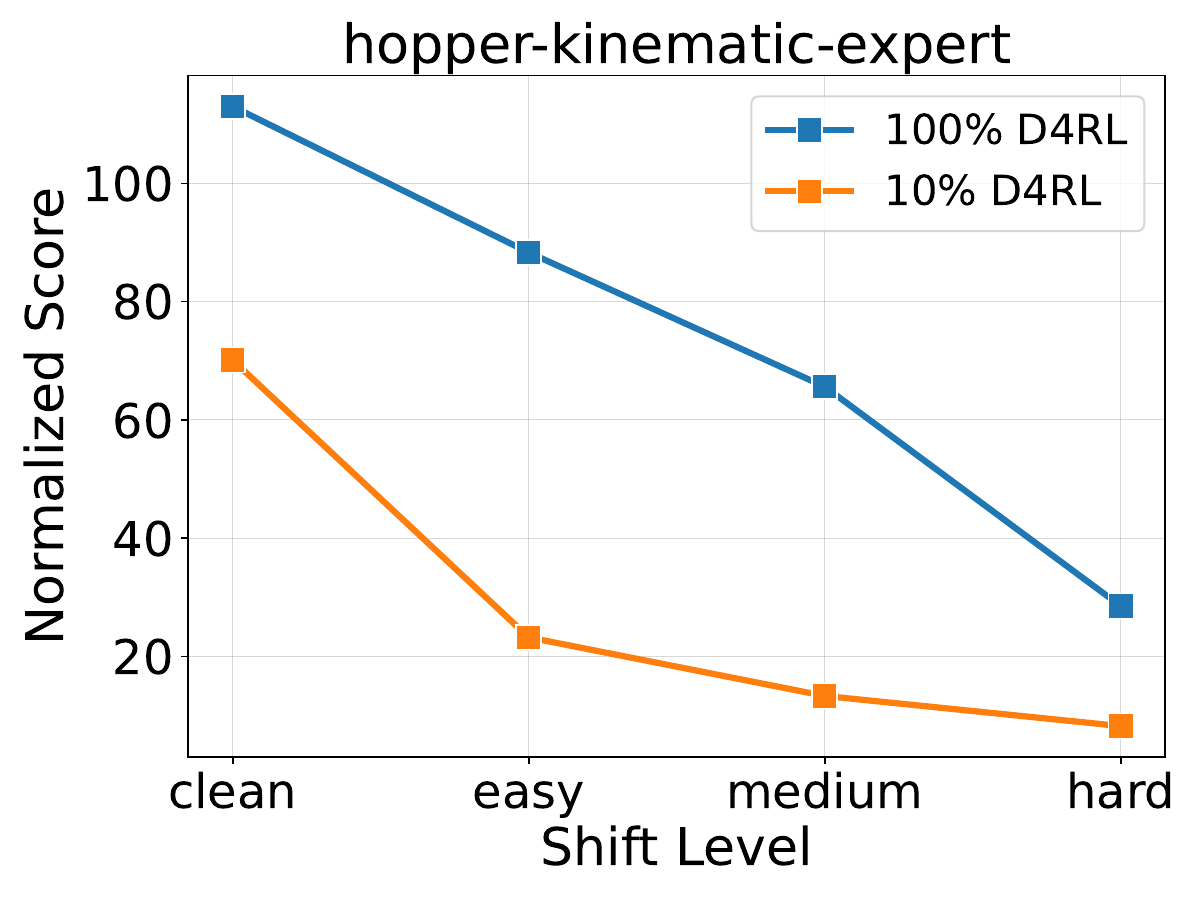}}
	\subfigure[min Q perturbation] {\includegraphics[width=.44\textwidth]{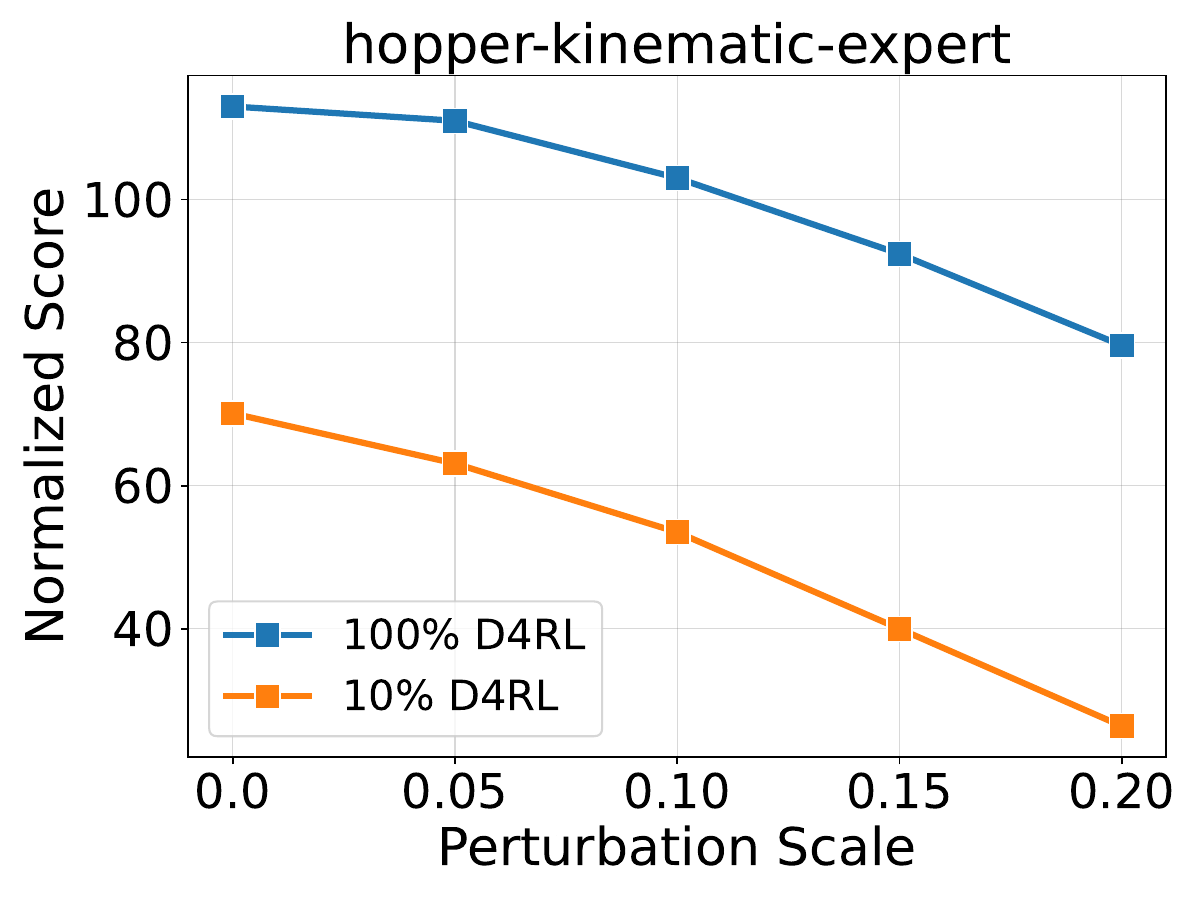}}
    \caption{Evaluation results of IGDF under morphology and min Q perturbations with different sizes of target domain data.}
    \label{fig:motivation_appendix}
\end{figure}

\subsection{Evaluation under Morphology Shifts}
\label{appendix:morphology}
\begin{table}[t]
    \centering
    \begingroup
    \caption{\textbf{Evaluation results with train-time morphology shifts.} half=halfcheetah, hopp=hopper, walk=walker2d, m=medium, me=medium-expert, mr=medium-replay, e=expert. We report the normalized score evaluated in the target domain, and $\pm$ captures the standard deviation across 5 seeds. We \textbf{bold} the highest scores for each task.}
    \label{tab:resultsappendix}
    \begin{tabular}{l|ccccccc}
    \toprule
    \textbf{Dataset} & $\text{IQL}^\star$ & $\text{CQL}^\star$ & BOSA & DARA & IGDF & OTDF & DROCO (Ours)\\
    \midrule
    half-m & \textbf{45.8} & 40.2 & 41.3 & \textbf{45.6} & \textbf{45.5$\pm$0.1} & 44.3$\pm$0.2 & \textbf{45.8$\pm$0.2} \\
    half-mr & 26.1 & 21.3 & 27.8 & \textbf{28.9} & 24.2$\pm$3.3 & 19.7$\pm$2.5 & 27.9$\pm$4.4 \\
    half-me & 63.0 & 54,6 & 44.4 & 59.2 & 61.9$\pm$ 4.9 & 42.9$\pm$3.6 & \textbf{70.1$\pm$5.6} \\
    half-e & 65.2 & 66.7 & \textbf{78.6} & 55.4 & 56.0$\pm$6.2 & 74.2$\pm$5.0 & \textbf{79.2$\pm$3.9} \\
    hopp-m & \textbf{56.4} & 32.8 & 28.7 & 49.5 & 55.5$\pm$2.9 & 49.1$\pm$2.2 & \textbf{56.3$\pm$1.6} \\
    hopp-mr & 51.3 & 37.6 & 40.6 & 53.5 & \textbf{54.9$\pm$5.8} & 24.9$\pm$3.4 & 51.6$\pm$8.7 \\
    hopp-me & 35.8 & 36.6 & 20.2 & 38.2 & 43.3$\pm$3.6 & 51.8$\pm$3.9 & \textbf{82.3$\pm$4.1} \\
    hopp-e & 87.2 & 67.9 & 64.3 & 77.1 & 51.5$\pm$2.9 & \textbf{113.2$\pm$5.9} & 92.5$\pm$1.2 \\
    walk-m & 32.6 & 43.1 & 40.3 & 25.0 & 33.0$\pm$2.3 & 40.3$\pm$7.1 & \textbf{60.1$\pm$3.4} \\
    walk-mr & 9.0 & 2.0 & 2.9 & 6.9 & 9.5$\pm$0.4 & 14.1$\pm$1.8 & \textbf{15.5$\pm$4.7} \\
    walk-me & 27.6 & 22.4 & 46.7 & 42.2 & \textbf{75.7$\pm$11.8} & 66.7$\pm$5.3 & \textbf{78.9$\pm$9.4} \\
    walk-e & 103.4 & 79.0 & 30.2 & 102.7 & \textbf{108.3$\pm$6.7} & 103.5$\pm$1.9 & 104.5$\pm$1.7 \\
    ant-m & 89.1 & 57.3 & 36.1 & \textbf{96.4} & 91.6$\pm$4.4 & 92.5$\pm$2.7 & 94.5$\pm$2.8 \\
    ant-mr & 59.7 & 39.5 & 24.0 & 64.1 & 58.2$\pm$7.1 & \textbf{69.6$\pm$8.1} & 66.9$\pm$4.9 \\
    ant-me & 113.1 & 107.3 & 100.5 & 111.9 & 116.8$\pm$3.5 & 107.3$\pm$4.4 & \textbf{120.3$\pm$1.5} \\
    ant-e & 116.3 & 94.4 & 76.3 & 124.5 & \textbf{126.8$\pm$1.7} & 111.0$\pm$2.4 & 120.0$\pm$1.3 \\ 
    \midrule
    \textbf{Total} & 981.6 & 802.7  & 702.9 & 981.1 & 1012.7 & 1025.1 & \textbf{1166.4} \\ 
    \bottomrule
    \end{tabular}
    \endgroup

\end{table}

In the main text, we present DROCO's evaluation results under kinematic shifts. In this section, we supplement with additional results under morphological shifts, providing a comprehensive assessment of DROCO's train-time robustness against diverse dynamics shifts.

\textbf{Experimental Settings.} The target domain tasks and datasets remain consistent with Section~\ref{sec:main}: the target domain tasks include \texttt{halfcheetah-v2}, \texttt{hopper-v2}, \texttt{walker2d-v2} and \texttt{ant-v3}, and the target domain datasets comprise four data qualities (\texttt{medium}, \texttt{medium-replay}, \texttt{medium-expert}, \texttt{expert}) for each task. The difference lies in the dynamics shift type in the source domain. We implement morphology shifts as described in Appendix~\ref{appendix:morphologyrealization} and collect the corresponding source domain datasets.

\textbf{Baselines.} We adopt the same baselines as in Section~\ref{sec:main}: $\text{IQL}^\star$, $\text{CQL}^\star$, BOSA, DARA, IGDF and OTDF.

\textbf{Results.} We run each baseline and DROCO for 1M steps over 5 random seeds, and present the results with train-time morphology shifts in Table~\ref{tab:resultsappendix}. It is clear that DROCO delivers superior performance to baselines. Specifically, DROCO achieves the highest performance in \textbf{9} out of 16 tasks. In terms of the total normalized score across all 16 tasks, DROCO attains a remarkable \textbf{1166.4}, significantly outperforming the second-best baseline OTDF (1025.1). Combined with the results in Section~\ref{sec:main}, these findings conclusively demonstrate DROCO's superiority across different types of dynamics shifts, highlighting its strong train-time robustness against dynamics shifts.

\subsection{Extended Evaluation under Dynamics Perturbations}
\label{appendix:perturbation}

In this section, we supplement with more experimental results for evaluating the test-time robustness of DROCO. 

We first extend the results in Section~\ref{sec:perturbationevaluation} by incorporating a broader range of datasets. We evaluate the robustness of DROCO against two baselines (IGDF, OTDF) under varying levels of three perturbation types: kinematic, morphology, and min Q perturbations, following the methodology in Section~\ref{sec:perturbationevaluation}. Additional experiments are conducted using \texttt{hopper-morph-expert}, \texttt{walker2d-kinematic-expert}, and \texttt{ant-morph-expert} as source domain datasets, with results presented in Figure~\ref{fig:appendixperturbation}. We can see that DROCO demonstrates superior robustness to all three perturbation types compared to the baselines. For instance, on the \texttt{walker2d-kinematic-expert} dataset under min Q perturbations, DROCO exhibits only $23.4\%$ performance degradation (from 106.0 to 81.2) at the highest perturbation level (0.2), substantially lower than IGDF ($75.3\%$) and OTDF ($55.9\%$). This enhanced robustness is consistently observed across all datasets and perturbation types, confirming DROCO's improved test-time robustness against dynamics perturbations.

We further evaluate DROCO's test-time robustness using varying target domain dataset sizes. Experiments are conducted under different levels of min Q perturbations, with target domain sizes set to $100\%$, $50\%$, and $10\%$ of the original D4RL datasets. The source domain datasets comprise \texttt{hopper-morph-expert}, \texttt{walker2d-kinematic-expert}, and \texttt{ant-morph-expert}. As shown in Figure~\ref{fig:perturbationappendix2}, all methods demonstrate improved robustness against dynamics perturbations with increasing target domain data size, consistent with our claim in Section~\ref{sec:motivation}. Notably, DROCO maintains superior robustness across varying data sizes and perturbation scales compared to IGDF and OTDF, further validating its effectiveness in enhancing test-time robustness.

\begin{figure}
    \centering
	 \includegraphics[width=0.98\textwidth]{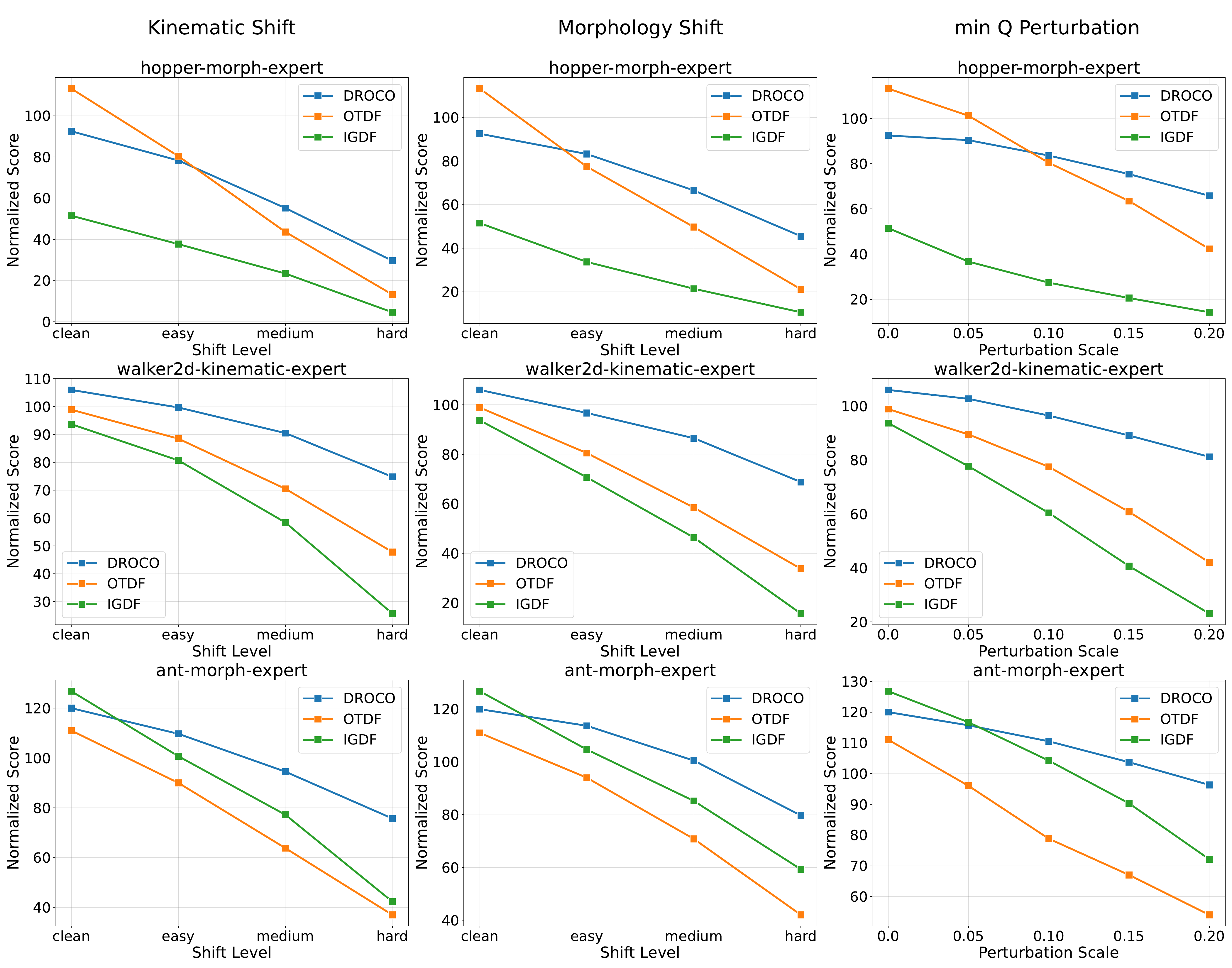}
     \caption{Evaluation results under different types and levels of dynamics perturbations.}
\label{fig:appendixperturbation}
\end{figure}

\begin{figure}
    \centering
	 \includegraphics[width=1.0\textwidth]{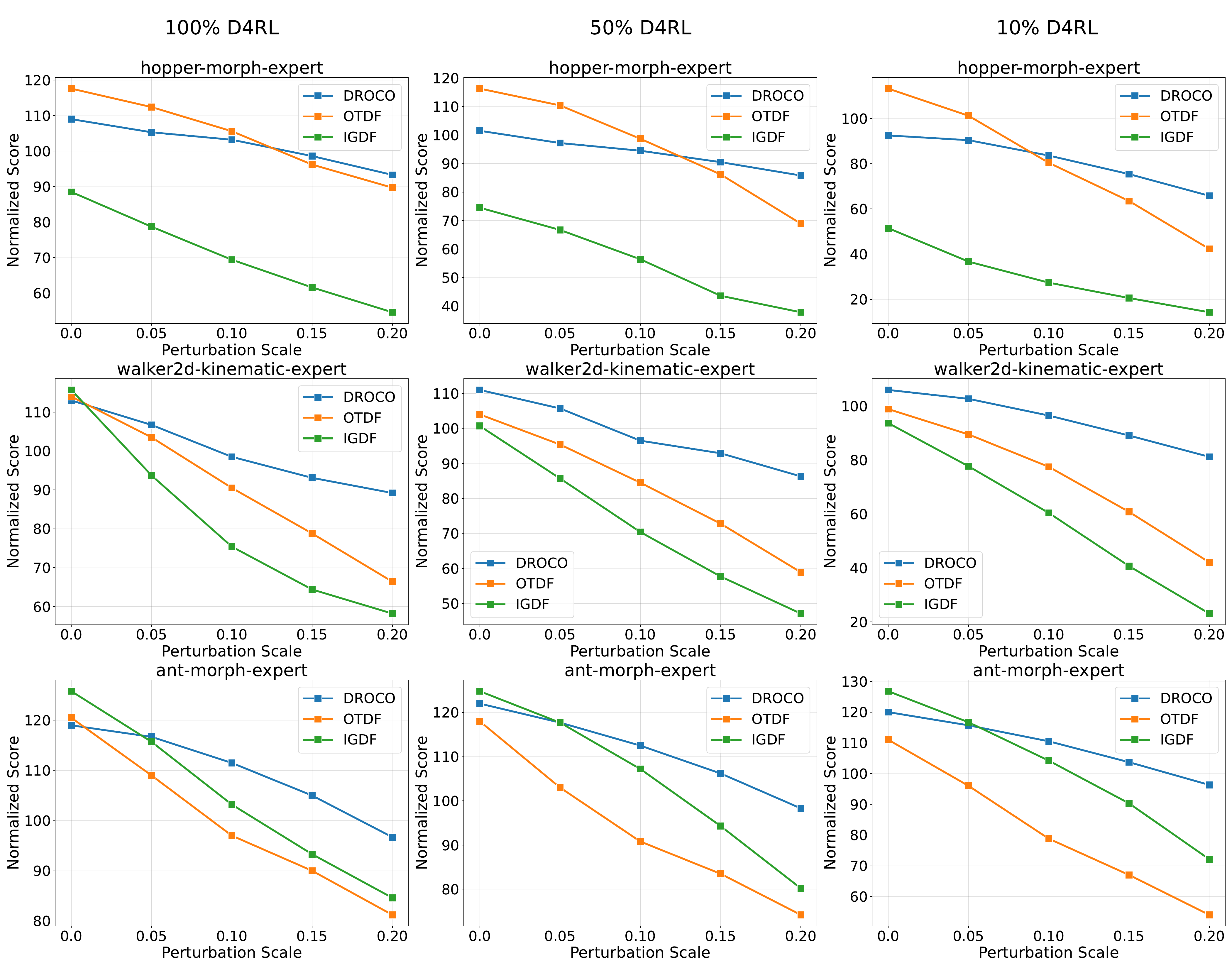}
     \caption{Evaluation results under different perturbation levels and different data sizes.}
\label{fig:perturbationappendix2}
\end{figure}

\subsection{Ablation Study}

We provide supplementary ablation study results that are omitted from the main text. Specifically, we examine the effects of replacing the adaptive value penalty with a fixed value penalty and substituting the Huber loss with the regular $\ell_2$ loss.

\textbf{Fixed Value Penalty.} A fixed value penalty corresponds to setting $\beta=1.0$ across all tasks. Figure~\ref{fig:ablationpenalty} compares the performance of DROCO with dynamic versus fixed penalties across eight datasets. The results demonstrate that the dynamic value penalty generally outperforms the fixed penalty ($\beta=1.0$), except the \texttt{ant-morph-expert} dataset where the fixed penalty achieves the highest performance.

We further evaluate the test-time robustness of DROCO under diverse dynamic shifts using both penalty schemes. Following the experimental setup in Appendix~\ref{appendix:perturbation}, our results in Figure~\ref{fig:ablationpenaltyrobust} reveal an interesting trade-off: while the fixed value penalty leads to slightly degraded performance, it provides marginally improved robustness against dynamic perturbations. This suggests that setting $\beta$ to a larger value induces a more conservative policy that is less sensitive to dynamic perturbations, albeit at the cost of policy performance.

\textbf{Regular $\ell_2$ Loss.} The standard $\ell_2$ loss implements conventional Bellman updates for source domain data without special outlier handling. We evaluate DROCO's performance on 8 \texttt{medium-expert} datasets comparing the Huber loss versus the $\ell_2$ loss and present the results in Figure~\ref{fig:ablationhuber}. The results show that Huber loss generally produces superior performance, while $\ell_2$ loss achieves marginally better results on \texttt{halfcheetah-morph-me} and \texttt{walker2d-kine-me} datasets.

We further examine the test-time robustness against dynamic perturbations using both loss functions. Figure~\ref{fig:ablationhuberrobust} reveals that using Huber loss consistently provides stronger robustness across perturbation types, underscoring its critical role in enhancing robustness.

\begin{figure}
    \centering
	 \includegraphics[width=1.0\textwidth]{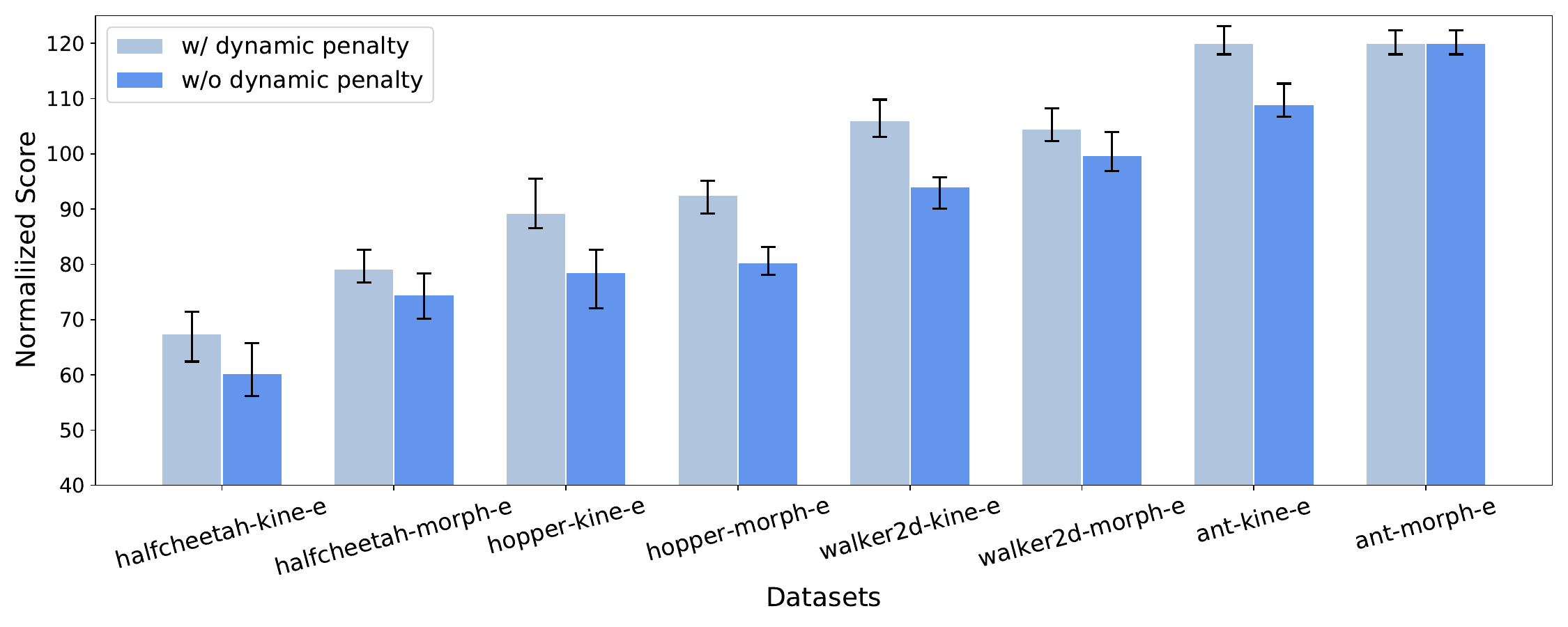}
     
     \caption{Ablation study on value penalty}
     \label{fig:ablationpenalty}
\end{figure}

\begin{figure}
    \centering
	 \includegraphics[width=1.0\textwidth]{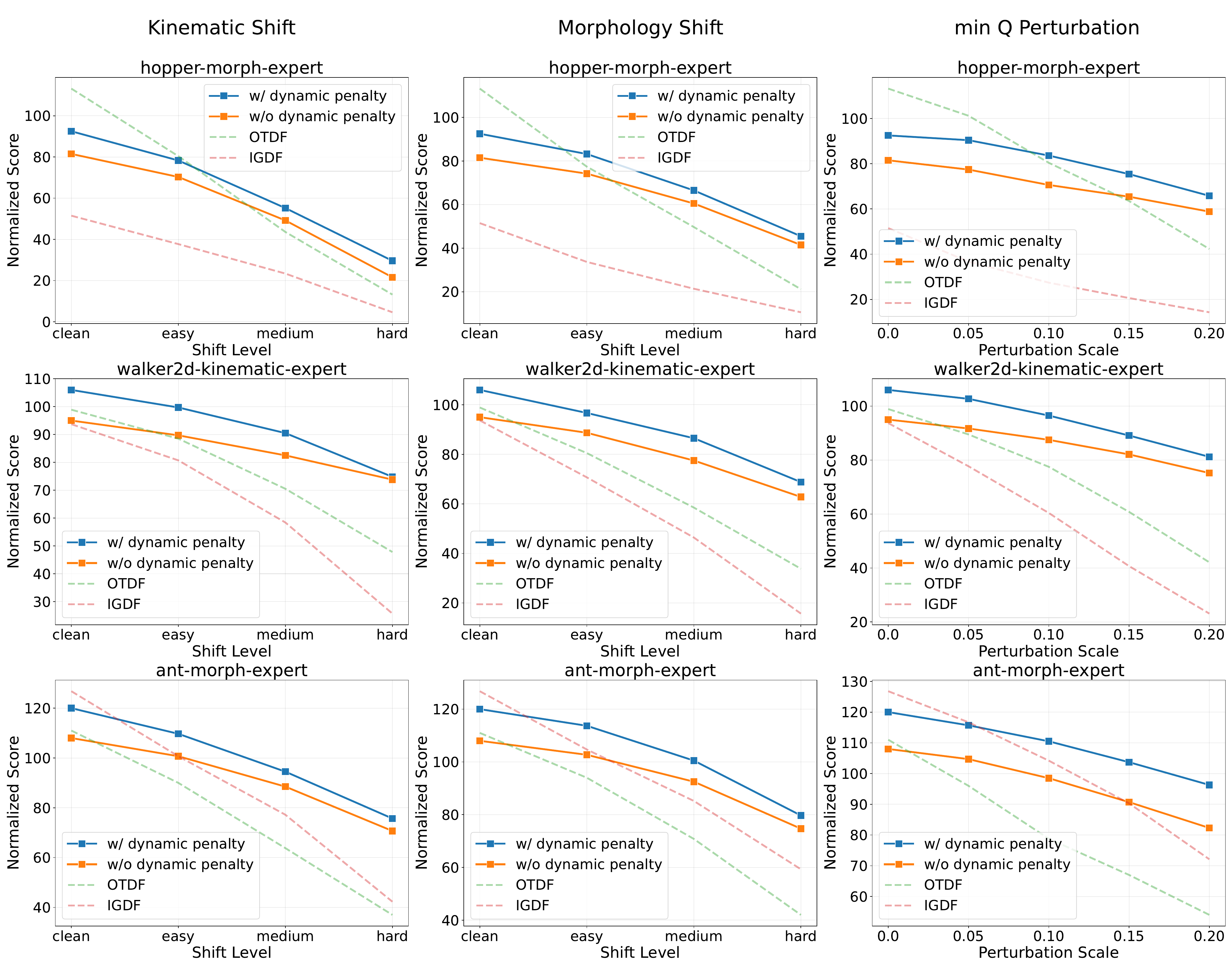}
     
     \caption{Ablation study on value penalty}
     \label{fig:ablationpenaltyrobust}
\end{figure}

\begin{figure}
    \centering
	 \includegraphics[width=1.0\textwidth]{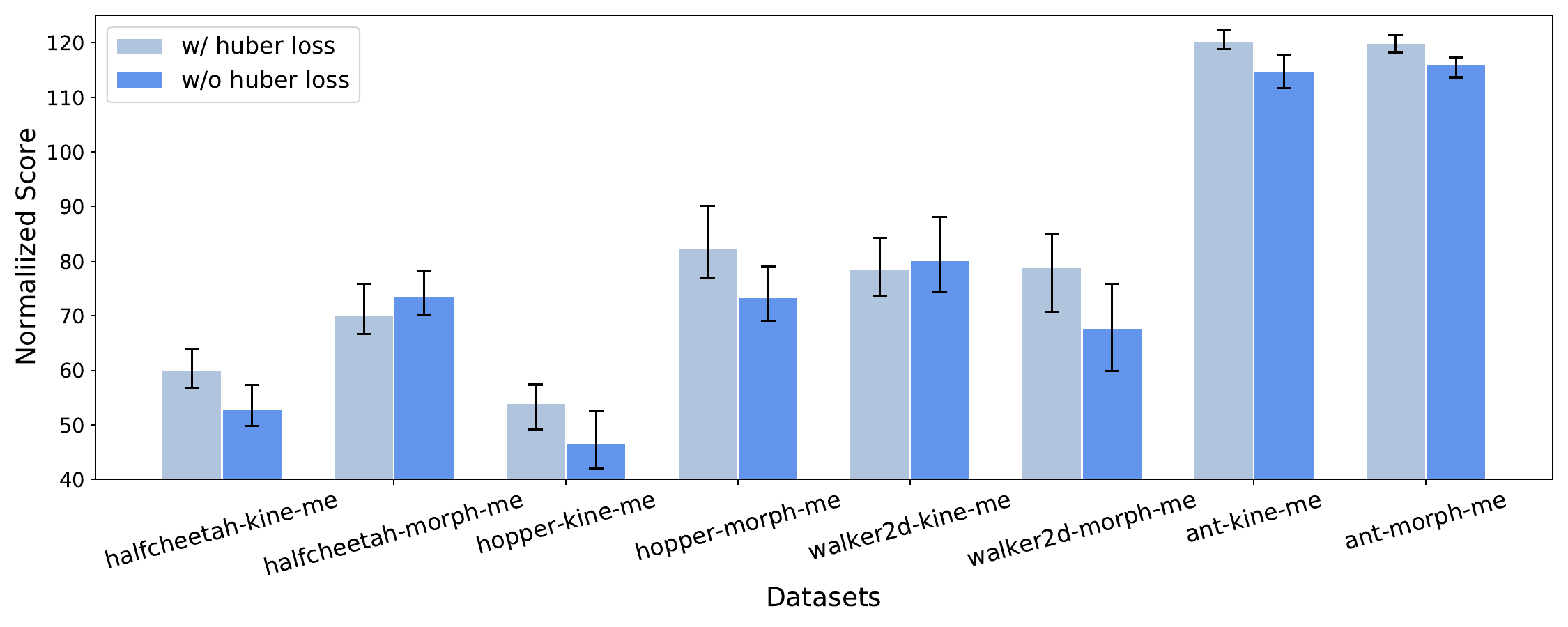}
     
     \caption{Ablation study on Huber loss}
     \label{fig:ablationhuber}
\end{figure}

\begin{figure}
    \centering
	 \includegraphics[width=1.0\textwidth]{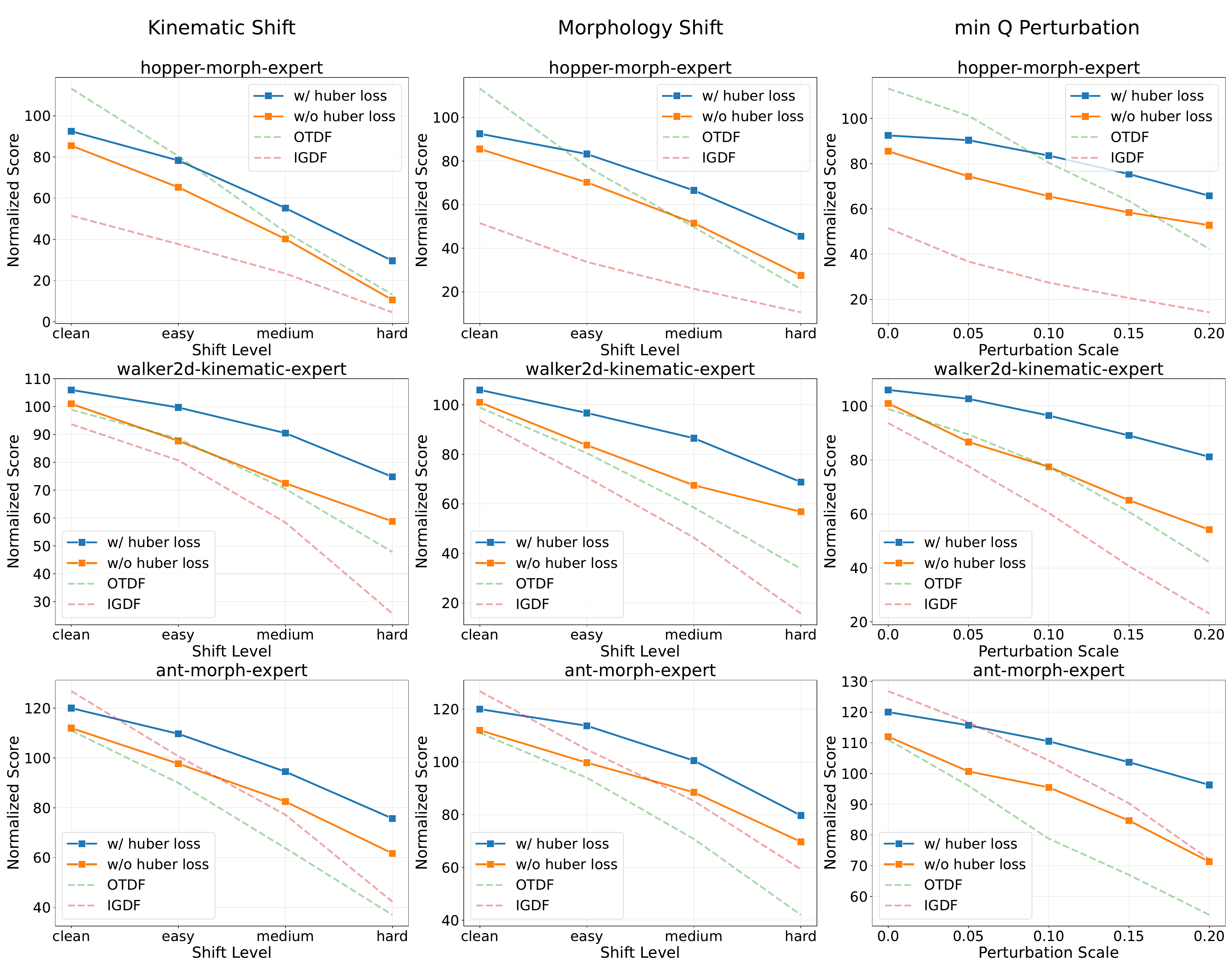}
     
     \caption{Ablation study on Huber loss}
     \label{fig:ablationhuberrobust}
\end{figure}

\subsection{Extended Parameter Study}
\label{appendix:extendeparameter}
In the main text, we test the sensitivity of DROCO to the penalty coefficient $\beta$ and the transition threshold $\delta$ on certain datasets. In this section, we present extended results for a more comprehensive analysis.

\textbf{Penalty coefficient $\beta$.} $\beta$ controls the intensity of the value penalty. We sweep $\beta$ across $\{0.1,0.5,1.0,1.2\}$ and further conduct experiments on \texttt{walker2d-morph-expert} and \texttt{ant-morph-expert} datasets. We present the learning curves of the performance and the Q value in Figure~\ref{fig:parameterappendixbeta}. We find that $\beta\leq1.0$ is generally preferred, yielding better performance and Q value convergence, while setting $\beta=1.2$ would cause value underestimation and inferior performance.

\textbf{Transition threshold $\delta$.} $\delta$ determines the transition point from $\ell_2$ loss to $\ell_1$ loss. We vary $\delta$ among $\{5,10,30,50\}$ and conduct experiments on \texttt{walker2d-morph-me} and \texttt{ant-morph-me} datasets, with Figure~\ref{fig:parameterappendixdelta} showing the performance and Q value learning curves. Our results demonstrate dataset-dependent sensitivity to $\delta$: while $\delta=50$ exhibits a satisfying performance and $\delta=5$ yields suboptimal performance on \texttt{walker2d-morph-me}, DROCO is not sensitive to $\delta$ on \texttt{ant-morph-me}. Since no single $\delta$ value universally outperforms across all tasks, we specify the $\delta$ values used for each dataset in Appendix~\ref{appendix:hyperparameter}.

\textbf{Ensemble size $N$.} We also examine the effect of the dynamics model ensemble size $N$ on training. In DROCO, $N$ represents the sampling number within the uncertainty set. Typically, a larger $N$ corresponds to a smaller sampling error when computing $\widehat{\mathcal{T}}_\mathrm{RCB}$. We conduct experiments on various datasets with $N$ across $\{3,5,7,9\}$. The results are presented in Table~\ref{tab:ensemble}, where we find no distinct difference across different $N$, which means the ensemble size is not a sensitive hyperparameter. Thus, we could use the default value of 7.

\begin{figure}
    \centering
	 \includegraphics[width=1.0\textwidth]{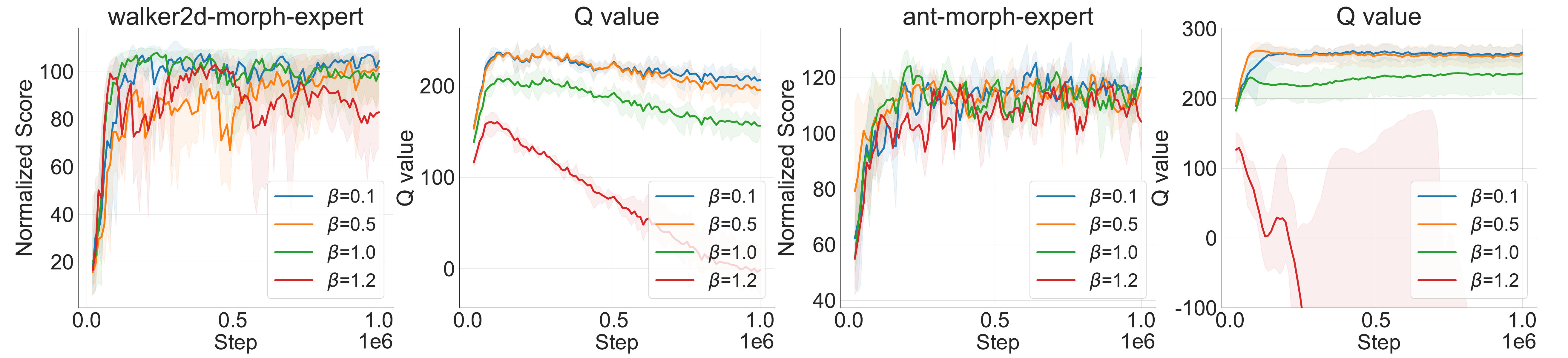}
     
     \caption{Effect of $\beta$}
     \label{fig:parameterappendixbeta}
\end{figure}

\begin{figure}
    \centering
	 \includegraphics[width=1.0\textwidth]{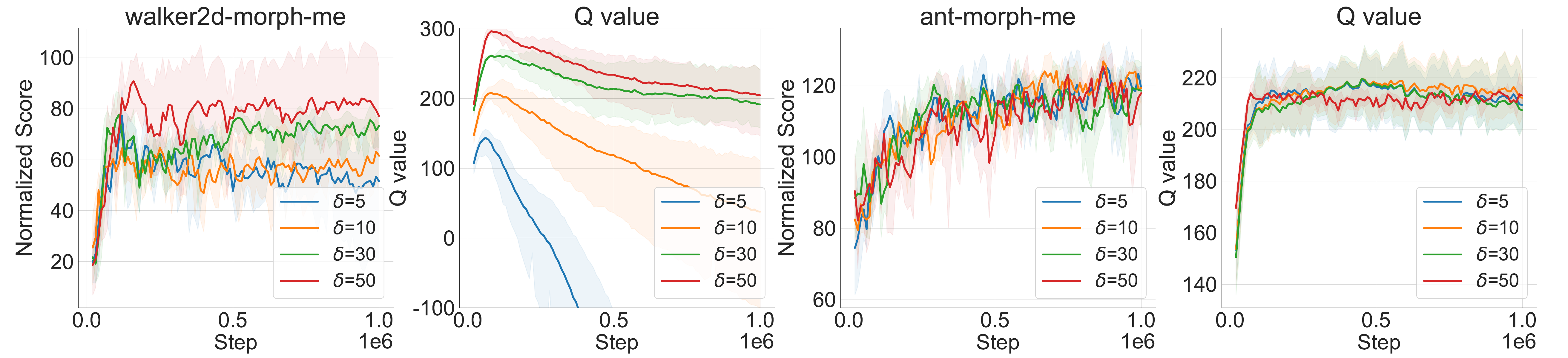}
     
     \caption{Effect of $\delta$}
     \label{fig:parameterappendixdelta}
\end{figure}

\begin{table}[htbp]
  \centering
  \caption{Effect of $N$}
  \label{tab:ensemble}
  \begin{tabular}{lcccc}
    \toprule
     Dataset & $N=3$ & $N=5$ & $N=7$ & $N=9$ \\
    \midrule
    half-me-kinematic & 58.4 $\pm$ 4.4 & 62.2 $\pm$ 8.6 & 60.1 $\pm$ 7.1 & 57.9 $\pm$ 9.6 \\
    half-me-morph     & 65.4 $\pm$ 8.4 & 71.7 $\pm$ 5.9 & 70.1 $\pm$ 5.6 & 74.9 $\pm$ 3.3 \\
    half-e-kinematic  & 68.7 $\pm$ 6.8 & 67.0 $\pm$ 4.7 & 67.4 $\pm$ 5.8 & 66.3 $\pm$ 7.2 \\
    half-e-morph      & 76.0 $\pm$ 4.1 & 75.6 $\pm$ 5.1 & 79.2 $\pm$ 3.9 & 78.4 $\pm$ 3.0 \\
    hopper-me-kinematic & 51.7 $\pm$ 3.4 & 54.4 $\pm$ 5.7 & 54.0 $\pm$ 6.4 & 52.5 $\pm$ 6.9 \\
    hopper-me-morph     & 85.6 $\pm$ 6.7 & 84.9 $\pm$ 5.5 & 82.3 $\pm$ 4.1 & 83.2 $\pm$ 4.0 \\
    hopper-e-kinematic  & 88.3 $\pm$ 10.2 & 87.0 $\pm$ 8.1 & 89.3 $\pm$ 9.6 & 86.9 $\pm$ 7.2 \\
    hopper-e-morph      & 91.1 $\pm$ 1.0 & 94.9 $\pm$ 2.2 & 92.5 $\pm$ 1.2 & 90.7 $\pm$ 0.8 \\
    \midrule
    Average           & 73.2           & 74.7           & 74.4           & 73.9           \\
    \bottomrule
  \end{tabular}
\end{table}

\subsection{{Performance Comparison under Observation and Reward Shifts}}

{In this part, we further examine the generality of DROCO under observation and reward shifts, in addition to dynamics shifts.}

{\textbf{Observation shift.} To simulate the observation shift, we follow the observation corruption setting in~\citep{yangtowards}, and corrupt $30\%$ source domain data by modifying the state of transitions $(s,a,r,s^\prime)$ to $\hat{s}=s+\lambda\cdot\mathrm{std}(s), \lambda\sim\mathrm{Uniform}[-1,1]^{d_s}$. $d_s$ represents the state dimension, and $\mathrm{std}(s)$ is the $d_s$-dimensional standard deviation of all states in the source dataset. Our experiments consist of two parts: \textbf{(1)} we directly employ several baselines (IQL, IGDF, OTDF) and DROCO in this observation shift setting without introducing other techniques; \textbf{(2)} we introduce the observation normalization technique~\citep{yangtowards} to baselines and DROCO. Both parts of the experiments are conducted on multiple datasets, with results presented in Table~\ref{tab: without_norm} and Table~\ref{tab: with_norm}. We find that introducing observation shifts would degrade the algorithm's performance, and the observation normalization technique can mitigate performance degradation. In both experimental settings, DROCO demonstrates better performance than baselines on most datasets.}
\begin{table}
\centering
\caption{Performance comparison under observation shifts without observation normalization.}
\begin{tabular}{l|c|c|c|c}
\toprule
Dataset & IQL & IGDF & OTDF & DROCO \\
\midrule
half-e-kinematic & 21.7$\pm$6.2 & 13.4$\pm$2.0 & 28.6$\pm$3.4 & \textbf{34.7}$\pm$5.8 \\
half-e-morph & 43.3$\pm$9.5 & 33.8$\pm$5.8 & \textbf{46.4}$\pm$8.3 & 40.8$\pm$5.5 \\
half-me-kinematic & 38.9$\pm$4.4 & 40.0$\pm$4.6 & 37.5$\pm$6.2 & \textbf{46.3}$\pm$9.3 \\
half-me-morph & 37.2$\pm$4.1 & \textbf{45.3}$\pm$5.4 & 33.7$\pm$4.0 & 43.4$\pm$6.8 \\
hopper-e-kinematic & 34.6$\pm$6.4 & 43.5$\pm$4.8 & 36.2$\pm$7.9 & \textbf{48.6}$\pm$7.4 \\
hopper-e-morph & 60.6$\pm$4.5 & 32.3$\pm$4.9 & 53.9$\pm$11.5 & \textbf{62.9}$\pm$8.7 \\
hopper-me-kinematic & 1.4$\pm$0.1 & 0.0$\pm$0.0 & 16.9$\pm$3.1 & \textbf{20.3}$\pm$7.4 \\
hopper-me-morph & 16.7$\pm$2.3 & 22.7$\pm$4.0 & 31.4$\pm$5.7 & \textbf{36.5}$\pm$6.8 \\
\midrule
Average & 31.8 & 28.9 & 35.6 & \textbf{41.7} \\
\bottomrule
\end{tabular}
\label{tab: without_norm}
\end{table}

\begin{table}
\centering
\caption{Performance comparison under observation shifts with observation normalization.}
\begin{tabular}{l|c|c|c|c}
\toprule
Dataset & IQL & IGDF & OTDF & DROCO \\
\midrule
half-e-kinematic & 26.5$\pm$4.7 & 21.4$\pm$5.0 & 33.8$\pm$6.1 & \textbf{42.6}$\pm$7.1 \\
half-e-morph & 42.7$\pm$7.0 & 42.5$\pm$5.7 & \textbf{51.9}$\pm$4.4 & 44.6$\pm$7.4 \\
half-me-kinematic & 41.2$\pm$5.2 & 35.5$\pm$2.9 & 44.3$\pm$2.9 & \textbf{51.3}$\pm$3.6 \\
half-me-morph & 46.4$\pm$3.6 & \textbf{49.2}$\pm$6.0 & 45.6$\pm$3.3 & 43.0$\pm$2.1 \\
hopper-e-kinematic & 39.6$\pm$7.3 & \textbf{57.8}$\pm$6.2 & 49.3$\pm$5.7 & 54.4$\pm$9.3 \\
hopper-e-morph & 66.3$\pm$6.9 & 38.5$\pm$3.7 & 57.7$\pm$4.0 & \textbf{73.2}$\pm$6.2 \\
hopper-me-kinematic & 9.0$\pm$1.3 & 2.4$\pm$0.1 & 22.2$\pm$3.4 & \textbf{34.2}$\pm$5.6 \\
hopper-me-morph & 16.4$\pm$2.4 & 29.7$\pm$3.0 & 26.7$\pm$1.1 & \textbf{46.6}$\pm$8.2 \\
\midrule
Average & 36.0 & 34.6 & 41.4 & \textbf{48.7} \\
\bottomrule
\end{tabular}
\label{tab: with_norm}
\end{table}

{\textbf{Reward shift.} To examine the generality of DROCO to reward shifts, we further design a reward shift setting: we randomly select 30\% of source transitions $(s,a,r,s^\prime)$ and modify the reward $r$ to $\hat{r}\sim \mathrm{Uniform}[-1,1]$. That is, we completely abandon the reward information and switch to random rewards.}

{Under this reward shift setting, we conduct experiments on multiple datasets to compare the performance of DROCO with baseline methods (IQL, IGDF, OTDF). The experimental results are reported in Table~\ref{tab:reward_shift}. Surprisingly, we find that reward shift does not significantly affect performance. This observation may be explained by the survival instinct of offline RL~\citep{li2023survival}, which suggests that offline RL naturally exhibits robustness to misspecified reward.}

{The results show that DROCO still outperforms other baselines under both reward shift and dynamics shift settings. We attribute the enhanced performance of DROCO under observation and reward shifts to the components of dynamic value penalty and Huber loss which mitigate value estimation error caused by observation and reward shifts. We believe this finding, along with our above results under the observation shift setting, demonstrates the generality of DROCO across observation, reward, and dynamics shift.}
\begin{table}
\centering
\caption{Performance comparison under reward shifts.}
\begin{tabular}{l|c|c|c|c}
\toprule
Dataset & IQL & IGDF & OTDF & DROCO \\
\midrule
half-e-kinematic & 47.5$\pm$4.2 & 45.8$\pm$3.0 & \textbf{72.2}$\pm$3.8 & 66.0$\pm$6.3 \\
half-e-morph & 60.7$\pm$5.3 & 52.2$\pm$3.6 & 70.3$\pm$5.8 & \textbf{76.4}$\pm$4.2 \\
half-me-kinematic & 41.1$\pm$3.7 & 55.2$\pm$4.4 & 43.6$\pm$4.9 & \textbf{57.4}$\pm$3.6 \\
half-me-morph & 61.7$\pm$2.1 & 55.8$\pm$4.9 & 39.0$\pm$3.3 & \textbf{63.9}$\pm$2.4 \\
hopper-e-kinematic & 58.8$\pm$6.1 & 67.0$\pm$5.7 & \textbf{95.5}$\pm$11.3 & 85.0$\pm$8.2 \\
hopper-e-morph & 84.7$\pm$5.1 & 46.2$\pm$4.8 & \textbf{100.3}$\pm$6.8 & 91.4$\pm$4.5 \\
hopper-me-kinematic & 10.1$\pm$1.3 & 8.3$\pm$0.7 & 42.6$\pm$6.3 & \textbf{48.1}$\pm$6.4 \\
hopper-me-morph & 34.8$\pm$4.3 & 41.1$\pm$4.7 & 47.3$\pm$5.6 & \textbf{78.6}$\pm$8.3 \\
\midrule
Average & 49.9 & 46.5 & 63.9 & \textbf{70.9} \\
\bottomrule
\end{tabular}
\label{tab:reward_shift}
\end{table}

\subsection{Performance Comparison under Distinct Source and Target Behavior Policies}

{In practice, the behavior policies between the source and target domain datasets could be different. To address this concern, We consider four tasks (\texttt{halfcheetah}, \texttt{hopper}, \texttt{walker2d}, \texttt{ant}) with kinematic shifts. We relax the constraint of identical behavior policies, allowing the source and target datasets to have different qualities (\texttt{medium} or \texttt{expert}). For instance, a medium-quality source dataset may be paired with either a medium- or expert-quality target dataset. All other experimental settings follow Section~\ref{sec:main}, with IQL, IGDF, and OTDF as baselines. The results are presented in Table~\ref{tab:different_behavior}. The results indicate that DROCO maintains its superiority over the baselines even when the source and target behavior policies differ. It achieves the highest average score (80.6) and best performance on 12 out of 16 datasets. These findings demonstrate the effectiveness of DROCO in scenarios with differing behavior policies.}

\begin{table}
\centering
\caption{Performance comparison under distinct behavior policies between source and target domain datasets.}
\begin{tabular}{l|c|c|c|c|c}
\toprule
Source & Target & IQL & IGDF & OTDF & DROCO \\
\midrule
half-medium & medium & \textbf{45.2}$\pm$0.1 & \textbf{45.2}$\pm$0.1 & 42.2$\pm$0.1 & \textbf{45.3}$\pm$0.2 \\
half-medium & expert & 47.5$\pm$1.1 & 45.4$\pm$1.3 & \textbf{58.3}$\pm$2.8 & 52.6$\pm$4.2 \\
half-expert & medium & 47.1$\pm$1.5 & 46.8$\pm$2.4 & 51.7$\pm$0.4 & \textbf{58.5}$\pm$0.3 \\
half-expert & expert & 49.7$\pm$3.6 & 47.6$\pm$2.1 & \textbf{79.6}$\pm$3.0 & 67.4$\pm$5.8 \\
hopper-medium & medium & 48.8$\pm$2.1 & 54.3$\pm$6.6 & 46.3$\pm$3.7 & \textbf{55.4}$\pm$5.3 \\
hopper-medium & expert & 56.1$\pm$4.4 & 61.8$\pm$4.4 & 69.3$\pm$3.9 & \textbf{80.8}$\pm$6.2 \\
hopper-expert & medium & 53.6$\pm$2.4 & \textbf{61.3}$\pm$4.7 & 51.4$\pm$2.1 & \textbf{62.2}$\pm$4.6 \\
hopper-expert & expert & 62.6$\pm$6.9 & 70.1$\pm$3.2 & \textbf{97.0}$\pm$3.3 & 89.3$\pm$9.6 \\
walker2d-medium & medium & 48.7$\pm$1.9 & 51.8$\pm$2.4 & 43.0$\pm$2.1 & \textbf{70.8}$\pm$3.3 \\
walker2d-medium & expert & 71.4$\pm$3.7 & 82.5$\pm$5.3 & 76.8$\pm$4.1 & \textbf{94.6}$\pm$5.8 \\
walker2d-expert & medium & 55.4$\pm$3.1 & 58.6$\pm$5.5 & 57.9$\pm$2.0 & \textbf{83.0}$\pm$4.8 \\
walker2d-expert & expert & 90.1$\pm$3.2 & 93.7$\pm$5.8 & 98.9$\pm$2.1 & \textbf{106.0}$\pm$0.8 \\
ant-medium & medium & 89.9$\pm$5.1 & 88.0$\pm$4.6 & 86.1$\pm$3.7 & \textbf{92.7}$\pm$6.3 \\
ant-medium & expert & 107.6$\pm$1.8 & \textbf{112.4}$\pm$3.3 & 105.9$\pm$2.3 & 110.3$\pm$2.0 \\
ant-expert & medium & 93.7$\pm$3.5 & 90.2$\pm$2.8 & 98.6$\pm$4.5 & \textbf{100.4}$\pm$2.3 \\
ant-expert & expert & 111.0$\pm$3.3 & \textbf{119.2}$\pm$5.6 & 111.6$\pm$2.9 & \textbf{120.0}$\pm$2.1 \\
\midrule
Average & & 67.4 & 70.6 & 73.4 & \textbf{80.6} \\
\bottomrule
\end{tabular}
\label{tab:different_behavior}
\end{table}

\section{Hyperparameter Setup}
\label{appendix:hyperparameter}
In this section, we provide the detailed hyperparameter setup for DROCO in our experiments. In Table~\ref{tab:hyperparameters}, we list the network architecture and the training setup of DROCO, as well as the main hyperparameters of IQL, since we utilize IQL for policy optimization. The distinct value of $\beta$ and $\delta$ for each dataset under kinematic shifts and morphology shifts are presented in Table~\ref{tab:hyperbeta} and Table~\ref{tab:hyperdelta}.

\begin{table}[htbp]
\centering
\caption{Hyperparameter setup for DROCO}
\label{tab:hyperparameters}
\begin{tabular}{ll}
\toprule
\textbf{Hyperparameter} & \textbf{Value} \\
\midrule
\multicolumn{2}{l}{\textbf{Network}} \\
Actor network & (256, 256) \\
Critic network & (256, 256) \\
Ensemble model network & (400,400,400,400) \\
Ensemble size & 7 \\
Activation function & ReLU~\citep{agarap2018deep} \\
\midrule
\multicolumn{2}{l}{\textbf{Training}} \\
Learning rate & $3 \times 10^{-4}$ \\
Optimizer & Adam~\citep{kingma2014adam} \\
Discount factor & 0.99 \\
Target update rate & $5 \times 10^{-3}$ \\
Source domain batch size & 128 \\
Target domain batch size & 128 \\
Dynamics model batch size & 256 \\
Dynamics model training steps & $1\times 10^5$ \\
Policy training steps & $1\times 10^6$ \\
\midrule
\multicolumn{2}{l}{\textbf{IQL}} \\
Temperature coefficient & 0.2 \\
Maximum log std & 2 \\
Minimum log std & -20 \\
Inverse temperature parameter $\beta$ & 3.0 \\
Expectile parameter $\tau$ & 0.7 \\
\bottomrule
\end{tabular}
\end{table}

\begin{table}[ht]
    \centering
    \begin{minipage}{0.48\textwidth}
        \centering
        \caption{Detailed hyperparameter setup for DROCO, where the source domain datasets are under \textbf{kinematic shifts}.}
        \begin{tabular}{l|c|c}
    \toprule
    Dataset & Value of $\beta$ & Value of $\delta$ \\
    \midrule
    half-m & 0.1 & 30 \\
    half-mr & 0.5 & 50 \\
    half-me & 0.5 & 30 \\
    half-e & 0.1 & 30 \\
    hopp-m & 0.1 & 50 \\
    hopp-mr & 0.5 & 50 \\
    hopp-me & 1.0 & 30 \\
    hopp-e & 0.5 & 30 \\
    walk-m & 1.0 & 50 \\
    walk-mr & 0.5 & 30 \\
    walk-me & 0.5 & 50 \\
    walk-e & 0.1 & 10 \\
    ant-m & 0.1 & 30 \\
    ant-mr & 1.0 & 30 \\
    ant-me & 0.1 & 30 \\
    ant-e & 1.0 & 30 \\
    \bottomrule
    \end{tabular}
        \label{tab:hyperbeta}
    \end{minipage}
    \hfill
    \begin{minipage}{0.48\textwidth}
        \centering
        \caption{Detailed hyperparameter setup for DROCO, where the source domain datasets are under \textbf{morphology shifts}.}
        \begin{tabular}{l|c|c}
    \toprule
    Dataset & Value of $\beta$ & Value of $\delta$ \\
    \midrule
    half-m & 0.1 & 10 \\
    half-mr & 0.5 & 50 \\
    half-me & 1.2 & 30 \\
    half-e & 1.2 & 30 \\
    hopp-m & 0.5 & 50 \\
    hopp-mr & 0.1 & 50 \\
    hopp-me & 0.1 & 10 \\
    hopp-e & 0.1 & 10 \\
    walk-m & 0.1 & 50 \\
    walk-mr & 0.5 & 50 \\
    walk-me & 0.1 & 10 \\
    walk-e & 0.1 & 10 \\
    ant-m & 0.1 & 30 \\
    ant-mr & 0.1 & 30 \\
    ant-me & 0.1 & 10 \\
    ant-e & 1.0 & 30 \\
    \bottomrule
    \end{tabular}

        \label{tab:hyperdelta}
    \end{minipage}
\end{table}

\section{Compute Infrastructure}
\label{appendix:compute}
The compute infrastructure we use for all experiments is listed in Table~\ref{tab:compute}.

\begin{table}
    \centering
    \caption{Compute Infrastructure}
    \label{tab:compute}
    \begin{tabular}{c|c|c}
    \toprule
    \textbf{CPU} & \textbf{GPU} & \textbf{Memory}\\
    \midrule
    AMD EPYC 7452 & RTX3090$\times$8 & 288GB\\
    \bottomrule
    \end{tabular}

\end{table}

\section{Time Cost}

We list the training time of DROCO and all baselines ($\text{IQL}^\star$, $\text{CQL}^\star$, BOSA, DARA, IGDF, OTDF) for 1M training steps in Table~\ref{tab:time}. We note that the additional time cost for DROCO mainly comes from the training of the ensemble dynamics model. However, since we can save the trained dynamics model weights, no retraining is required for subsequent experiments.

\begin{table}[!t]
    \centering
    \caption{Training time comparison between various methods. h=hour(s), m=minute(s).}
    \label{tab:time}
    \begin{tabular}{c|c|c|c|c|c|c}
    \toprule
    $\text{IQL}^\star$ & $\text{CQL}^\star$ & BOSA & DARA & IGDF & OTDF & DROCO \\
    \midrule
    5h24m & 10h22m & 5h49m & 6h13m & 6h56m & 9h17m & 7h26m \\
    \bottomrule
    \end{tabular}

\end{table}

\section{Broader Impacts}
\label{appendix:impacts}
This paper presents a method aimed at enhancing dual robustness against dynamic shifts in cross-domain offline RL. Our work has potential positive social impacts; for example, it could inspire the development of humanoid robots capable of robust performance in non-stationary environments. Currently, we have not identified any negative impacts of our research.

\section{Declaration on LLM Use}

In this work, LLMs are used solely for grammar polishing of an early draft and are excluded from core aspects of the research, such as method conception, theoretical proof, and experimental work.

\end{document}